\documentclass{article}

\PassOptionsToPackage{numbers, compress}{natbib}
 \usepackage[preprint]{neurips_2026}

\usepackage[utf8]{inputenc} 
\usepackage[T1]{fontenc}    
\usepackage{hyperref}       
\usepackage{url}            
\usepackage{booktabs}       
\usepackage{amsfonts}       
\usepackage{nicefrac}       
\usepackage{microtype}      
\usepackage{xcolor}         
\usepackage{tcolorbox}

\usepackage[table]{xcolor}
\usepackage{enumitem}
\usepackage{multirow}
\usepackage{makecell}
\usepackage{subcaption}
\usepackage{wrapfig}
\usepackage{graphicx}

\usepackage{amsmath}
\usepackage{amssymb}
\usepackage{mathtools}
\usepackage{amsthm}

\theoremstyle{plain}
\newtheorem{theorem}{Theorem}[section]

\newtheorem{lemma}{Lemma}[section]

\theoremstyle{definition}

\newtheorem{assumption}{Assumption}[section]

\theoremstyle{remark}
\newtheorem{remark}{Remark}[section]
\newtheorem{claim}{Claim}[section]

\newcommand{\KL}{\operatorname{\mathbb{D}_{\mathrm{KL}}}}

\newcommand{\Var}{\operatorname{Var}}
\newcommand{\supp}{\operatorname{Supp}}
\newcommand{\err}[1]{{\tiny $\pm$\,#1}}
\definecolor{boxborder}{RGB}{65, 105, 185} 
\definecolor{boxbg}{RGB}{235, 240, 250} 
\title{Reflection Anchors for Propagation-Aware Visual Retention in Long-Chain Multimodal Reasoning}

\author{
\textbf{Xuan Gong}$^{1}$ \quad
\textbf{Hanbo Huang}$^{1}$ \quad
\textbf{Hao Zheng}$^{1}$ \quad
\textbf{Yiran Zhang}$^{1}$ \quad
\textbf{Wenbin Dai}$^{1,2}$ \\
\textbf{Weishu Zhao}$^{1}$ \quad
\textbf{Shiyu Liang}$^{1,\dagger}$ \\
$^{1}$Shanghai Jiao Tong University \quad
$^{2}$Lanzhou University \\
gongxuan0610@sjtu.edu.cn \quad  lsy18602808513@sjtu.edu.cn
}

\begin{document}

\begingroup
\renewcommand{\thefootnote}{}
\footnotetext{
\textsuperscript{$\dagger$} Corresponding author.\\
\textsuperscript{\quad \quad} The code will be released at \url{https://github.com/coder-gx/RAPO}
}
\endgroup

\maketitle
\vspace{-2em}


\begin{abstract}
 
Long chain-of-thought (CoT) reasoning improves large vision–language models, but visual information often fades during generation, limiting long-horizon multimodal reasoning. Existing methods either re-inject vision at inference or train policies for stronger grounding, but \emph{where} to intervene relies on perception heuristics rather than principled gain analysis, and \emph{how} local visual influence propagates remains implicit. We study this problem from an information-theoretic standpoint and derive a lower bound on the downstream visual gain of a one-step intervention, which suggests two factors: local branching room (token entropy) and downstream visual propagation potential (suffix divergence from a vision-marginalized reference).  Guided by this analysis, we propose reflection-anchor policy optimization \textbf{(RAPO)}, a GRPO-based policy optimization method that selects high-entropy reflection anchors and optimizes a chain-masked finite-window KL surrogate for downstream visual dependence. Experiments on reasoning-intensive and general-domain benchmarks show that RAPO delivers substantial gains over strong baselines across multiple LVLM backbones. Mechanism analyses further indicate that reflection anchors are enriched for visually sensitive decision points and that RAPO increases contrastive visual-dependence signals along generated trajectories.
\end{abstract}

\section{Introduction}

Test-time scaling via long Chain-of-Thought (CoT)~\cite{wei2023chainofthoughtpromptingelicitsreasoning,kojima2023largelanguagemodelszeroshot} has emerged as a cornerstone of recent advancements in Large Language Models (LLMs)~\citep{Guo_2025,kimiteam2025kimik2openagentic,yang2025qwen3technicalreport}.
Inspired by this, recent works have adapted CoT paradigms to Large Vision-Language Models (LVLMs)~\citep{kimiteam2025kimivltechnicalreport,bai2025qwen3vltechnicalreport,huang2025visionr1incentivizingreasoningcapability,meng2025mmeurekaexploringfrontiersmultimodal}. However, unlike text-only LLMs that effectively maintain context through linguistic recurrence, LVLMs struggle to sustain visual information across extended reasoning sequences. This ``visual forgetting'' reflects a mismatch between prefix-based visual embeddings and visual reasoning demands~\citep{li2025lostembeddingsinformationloss,sun2025mitigating,masry2025alignvlmbridgingvisionlanguage}.

Recent studies have followed two main directions to address visual forgetting. 
The former revisits or injects visual evidence through re-attention, reflection, or interleaved visual-token reasoning~\citep{sun2025mitigating,jian2025look,chu2025qwen,chen2025mintcot,gao2025interleaved,chung2025v1learningpointvisual,qin2025chain,cheng2026visual,sun2025latentchainofthoughtvisualreasoning}.
These methods help restore or maintain visual evidence during long reasoning, but they often require additional supervision, intervention rules, architectural changes, or generation-time computation.
The second line trains policies to generate more visually grounded trajectories via perception-aware RL~\citep{wang2025perceptionawarepolicyoptimizationmultimodal,huang2025spotlighttokenperceptionmultimodal,li2026rethinkingtokenlevelpolicyoptimization}, yet leaves the control implicit in the learned policy. 
Despite their different mechanisms, both lines treat downstream visual dependence as an indirect consequence of either context injection or learned policy bias. What remains unspecified is an explicit rule, for a given trajectory, for how native decoding itself should be modified to preserve visual influence in future reasoning.
This gap motivates the following question:

\begin{figure*}[t]
    \centering
    \includegraphics[width=\linewidth]{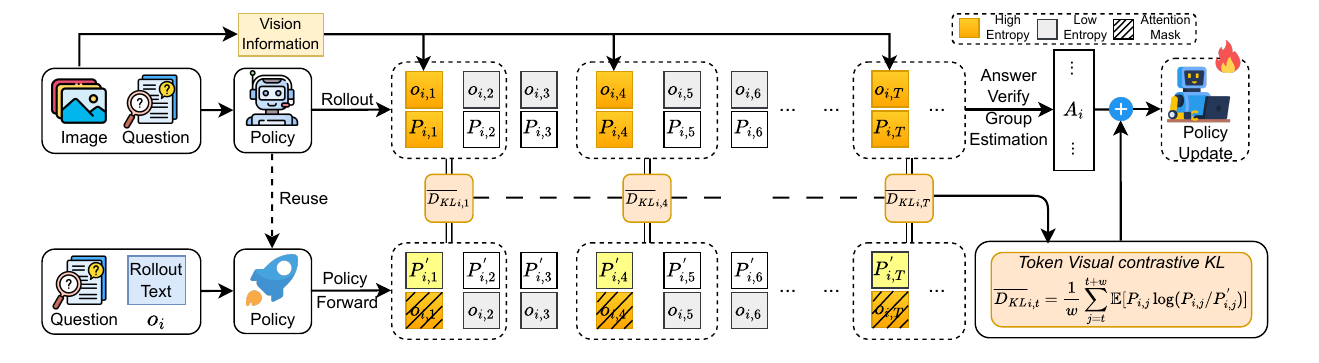}
    \vspace{-0.2cm}
    \caption{\small\textbf{Overview of RAPO.}  
    Given an input, the policy generates the $i$-th reasoning rollout $\{o_{i,t}, P_{i,t}\}_{t=1}^T$, where $o_{i,t}$ is the $t$-th generated token and $P_{i,t}$ is the corresponding logit-induced next-token distribution.
A chain-masked forward pass produces vision-masked distributions $\{P'_{i,t}\}_{t=1}^T$. RAPO targets high-entropy reflection anchors and uses the KL divergence between $P_{i,t}$ and $P_{i,t}'$ to encourage visual influence propagation  across subsequent tokens, encouraging stronger visual-dependence along reasoning.
    }
    \label{fig:RAPOFlowchart}
    \vspace{-0.2cm}
\end{figure*}

\textit{Can visual dependence be preserved by minimally reshaping native decoding at sparse steps, each selected to maximize downstream influence on future reasoning?}

We formulate this as optimizing the downstream visual-dependence gain induced by in-place intervention of the next-token distribution.
We derive a lower bound showing that nontrivial visual gains arise at tokens where (1) the next-token distribution has high entropy, leaving room for effective local intervention, and (2) the future continuation remains distributionally separated from its visual-marginalized counterpart, allowing visual influence to propagate. This suggests a simple algorithmic principle: select intervention sites by uncertainty, and train interventions by propagation. 
We call the selected positions \emph{reflection anchors}---a name that reflects an empirical observation: high-entropy positions coincide with reasoning-procedural tokens such as ``examine'' and ``reconsider.''
Based on this principle, we propose \textbf{RAPO} (Figure~\ref{fig:RAPOFlowchart}), a GRPO-based policy optimization algorithm~\citep{shao2024deepseekmathpushinglimitsmathematical} that selects reflection anchors by reference-policy entropy and optimizes a branching-room-weighted contrastive KL between normal continuations and their chain-masked counterfactuals.
Thus, RAPO realizes \emph{propagation-aware sparse policy reshaping} without modifying the inference-time decoder.

Our contributions are as follows:

\begin{itemize}[leftmargin=*, itemsep=1pt, parsep=0pt, topsep=0pt]
\item We provide an information-theoretic view on the downstream visual-dependence gain in long CoT reasoning, characterizing where and how in-place intervention can shape future reasoning.
\item Based on the principle, we propose \textbf{RAPO}, a GRPO-based reinforcement learning algorithm that targets reflection anchors and reinforces propagation of visual influence across reasoning steps.
\item Extensive experiments on various benchmarks show that RAPO outperforms baselines across multiple base models in long-chain multimodal reasoning.
\item Empirical studies highlight the contribution of each component and provide guidance for effective downstream visual-dependence gain.
\end{itemize}


{
\section{Related Work}
\label{sec:related_work}

\textbf{Visual restoration for long CoT reasoning.} Long CoT reasoning in LVLMs is prone to \emph{visual forgetting}: later steps follow the textual prefix and gradually lose dependence on the image. Existing methods address this by restoring visual evidence during generation. Interleaved methods inject vision explicitly. ICoT~\cite{gao2025interleaved} inserts image--text rationales, and MINT-CoT~\cite{chen2025mintcot} selects visual tokens on the fly. CoVT~\cite{qin2025chain} keeps the same idea but moves the evidence into a continuous visual-token space. Decoder-side methods inject vision internally. TVC~\cite{sun2025mitigating} reintroduces image conditioning at selected stages, while v1~\cite{chung2025v1learningpointvisual} copies image-patch representations into decoding. Adaptive methods further decide when to restore vision: Qwen-LA~\cite{chu2025qwen} learns when to revisit the image, and Reflection-V~\cite{jian2025look} repairs reasoning through vision-centered reflection.
These methods share one pattern: they detect drift, revisit the image, and inject visual evidence back into generation. They are effective, but they preserve visual dependence by adding external interventions to the native decoder, including extra triggers, modules, supervision, or inference-time computation.

\textbf{Policy optimization for visual grounding.}
A complementary line avoids explicit visual operations at inference time. Instead of re-injecting vision during decoding, it optimizes the policy so that visual grounding is absorbed into native generation. This direction builds on RL frameworks for long-chain reasoning, such as GRPO~\cite{shao2024deepseekmathpushinglimitsmathematical} and DAPO~\cite{yu2025dapo}, and extends them to multimodal reasoning through methods such as Vision-R1~\cite{huang2025visionr1incentivizingreasoningcapability} and MM-Eureka~\cite{meng2025mmeurekaexploringfrontiersmultimodal}. Beyond final-answer rewards, VL-Rethinker~\cite{wang2025vlrethinkerincentivizingselfreflectionvisionlanguage}, SRPO~\cite{wan2025srpo}, and Mulberry~\cite{yao2024mulberry} improve intermediate trajectories through reflection or refinement. More recent perception-aware methods, including PAPO~\cite{wang2025perceptionawarepolicyoptimizationmultimodal}, VPPO~\cite{huang2025spotlighttokenperceptionmultimodal}, PeRL~\cite{zhang2025perl}, and VAPO~\cite{tian2026more}, push the policy toward stronger perceptual grounding and visual utilization.
This yields a clean test-time interface: normal decoding, no visual re-injection. Yet the mechanism remains implicit: rewards promote grounding, but do not locate visual forgetting or trace how a local correction propagates through later reasoning.

}






\section{Problem Formulation: Selective Policy Reshaping for Visual Retention}
\label{sec:prelim}

\textbf{Visual reasoning.}
Let \(V\) denote the visual input, \(X\) the textual prompt, and \(Y_{1:T}\) the reasoning trajectory\footnote{In this paper, uppercase and lowercase letters denote the random variables and their realizations, respectively.}. A native LVLM \(p_\theta\) generates
\(
p_\theta(Y_{1:T}\mid V,X)=\prod_{t=1}^T p_\theta(Y_t\mid V,X,Y_{<t}).
\)
We write \(H_t=(X,Y_{<t})\) for the history before step \(t\), and \(Y_{\ge t}=(Y_t,\ldots,Y_T)\) for the future continuation. In long-horizon visual reasoning, later tokens may increasingly condition on the textual prefix rather than the image. Visual retention therefore requires not only final-answer correctness, but also sustained dependence of \(Y_{\ge t}\) on \(V\), which final-answer-only policy optimization does not directly enforce.

\textbf{Selective policy reshaping.}
We learn a single decoder \(\pi\) by locally reshaping the native decoder \(p_\theta\) at only a few steps. For a native trajectory \(y_{1:T}\), let \(\mathcal A\subseteq[T]\triangleq\{1,\dots,T\}\) be the selected steps and {let \(h_t=(x,y_{<t})\) denote a history instance.} If \(t\notin\mathcal A\), the decoder stays native, \(\pi(\cdot\mid v,h_t)=p_\theta(\cdot\mid v,h_t)\). If \(t\in\mathcal A\), the next-token distribution is reshaped. A useful reshaping should not merely make the current token more visual. It should make the later continuation more visual. {During training, \(\mathcal A\) is an instance-level oracle used to construct the learning signal for a single decoder \(\pi\). The learned decoder then internalizes where and how to deviate from \(p_\theta\). At inference, no oracle set is provided and \(\pi\) decodes normally.}

\begin{wrapfigure}{r}{0.58\textwidth}
    \centering
     \vspace{-0.3cm}
    \includegraphics[width=\linewidth]{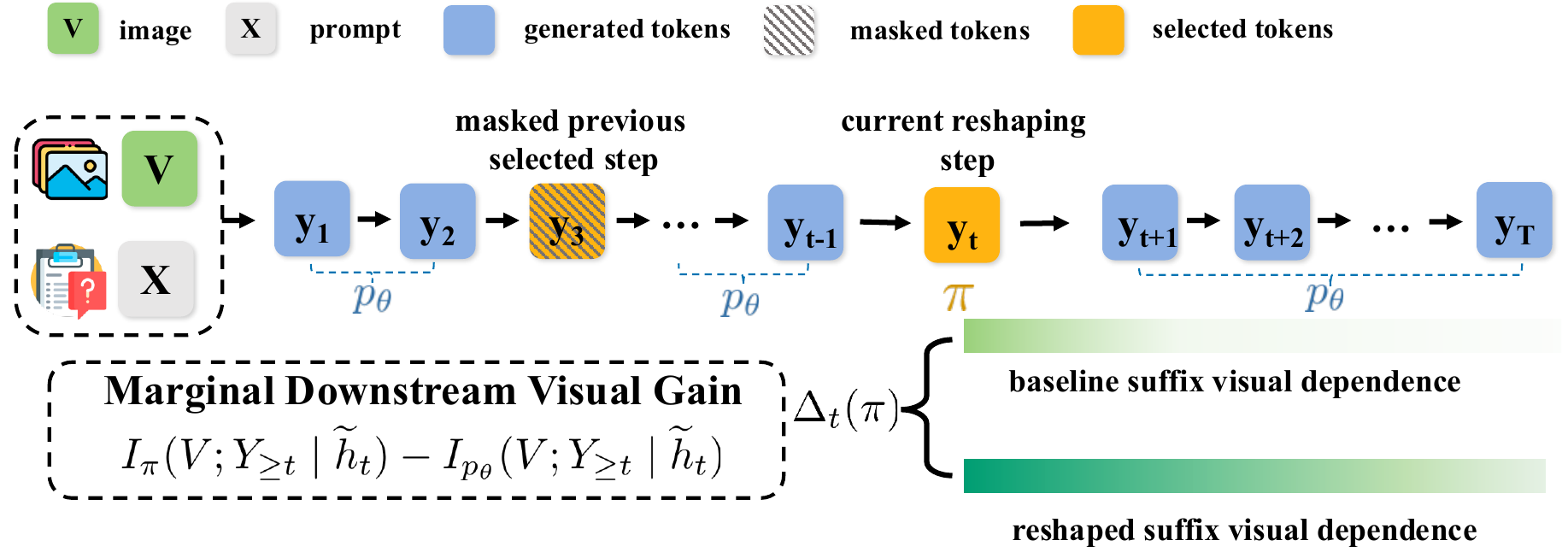}
    \caption{\textbf{Marginal downstream visual gain.}}
    \vspace{-0.3cm}
    \label{fig:formulation}
\end{wrapfigure}

\textbf{Downstream visual gain.}
The key question is whether a local change at step \(t\) carries visual evidence into later reasoning. To assign this effect to step \(t\), we condition on a masked history \(\widetilde h_t=(x,M_t(y_{<t};\mathcal{A}))\), where \(M_t\) masks the previously selected positions in \(\mathcal A\cap[t-1]\). This masking is essential: 
{masking suppresses the direct contribution of earlier selected steps, allowing the measured gain to be attributed marginally to step \(t\).}
Based on the masked history \(\widetilde h_t\), we perform a one-step intervention: sample \(Y_t\) from \(\pi(\cdot\mid v,\widetilde h_t)\), then roll back to the native decoder \(p_\theta\) for all later steps. This rollback makes the credit local: any downstream change is attributed to the reshaping at step \(t\), not to later use of \(\pi\). 
Let \(I_\pi(V;Y_{\ge t}\mid \widetilde h_t)\) denote the conditional mutual information under this one-step-then-rollback rollout. We define the \textit{marginal downstream visual gain} (Figure~\ref{fig:formulation}) as \(\Delta_t(\pi):=I_\pi(V;Y_{\ge t}\mid \widetilde h_t)-I_{p_\theta}(V;Y_{\ge t}\mid \widetilde h_t)\). Thus, \(\Delta_t(\pi)\) rewards persistent downstream visual influence, not only transient visual evidence at the current token. {We do not compute \(\Delta_t(\pi)\) during training. It is the population-level target that motivates the surrogate introduced next.}

\textbf{Instance-level objective.}
Given a native trajectory {instance} \(y_{1:T}\), we jointly decide where to reshape and how to reshape. Let \(\rho\) be the budget ratio and \(\varepsilon\) be the local KL radius. We solve
\begin{equation}
\max_{\mathcal A,\pi}
\,
\sum_{t\in\mathcal A} \Delta_t(\pi)
\quad
\text{s.t.}\quad
\mathcal A\subseteq[T],\ |\mathcal A|\le \rho T,\ 
D_{\mathrm{KL}}\!\left(\pi(\cdot\mid v,\widetilde h_t)\ \|\ p_\theta(\cdot\mid v,\widetilde h_t)\right)\le \varepsilon
\ \ \forall t\in\mathcal A .
\label{eq:local_rate_max_new}
\end{equation}
{The objective makes the mechanism explicit: \(\mathcal A\) chooses where to intervene, \(\pi\) chooses how to intervene, and \(\Delta_t(\pi)\) rewards only visual influence that persists into future reasoning. The KL constraint keeps each deviation local and close to the native decoder. Thus, visual retention is cast as sparse instance-level reshaping: change only a few steps, but make each change propagate.}

\section{Methodology}
\label{sec:method}

\subsection{Reflection Anchors: Where Local Reshaping Matters}
\label{sec::reflection-anchor}

\textbf{Which steps matter more?}
Section~\ref{sec:prelim} reduces visual retention to a selection problem: choose a few decoding steps whose local reshaping can make the future continuation \(Y_{\ge t}\) more image-dependent. A useful selector should be causal and predictive: it should use only information available before step \(t\), yet identify changes that can affect later reasoning. Therefore, when scoring step \(t\), we only remove earlier selected positions. Let \(\mathcal A_{<t}=\mathcal A\cap[t-1]\) and \(\widetilde h_t=(x,M_t(y_{<t};\mathcal A_{<t}))\), where \(M_t\) masks only positions in \(\mathcal A_{<t}\). The current token \(y_t\) and the future suffix \(y_{\ge t}\) remain intact. Selection therefore asks a local intervention question: \emph{given the masked prefix \(\widetilde h_t\), is step \(t\) worth reshaping?}

\textbf{A theoretical guide.}
We next characterize what makes a step worth reshaping.  Fix a candidate set \(\mathcal A\) and a step \(t\). Let \(m_{t,v}=p_\theta(\cdot\mid v,\widetilde h_t)\) be the next-token distribution under the masked prefix. To quantify future visual relevance, we compare two suffix laws: \(p_{t,v}(y_{\ge t})=p_\theta(y_{\ge t}\mid v,h_t)\), the suffix law under the realized image \(v\), and \(p_{\mathcal M,t}(y_{\ge t})=\mathbb E_{V\mid \widetilde h_t}[p_\theta(y_{\ge t}\mid V,\widetilde h_t)]\), the masked-reference suffix law. The first standard suffix law keeps the image, while the second suffix law marginalizes over images compatible with the cleaned prefix and removes visual evidence leaked from earlier reshaping steps. In Appendix~\ref{app::theory}, we  prove that, under standard regularity conditions, the best downstream visual gain at step \(t\) under KL radius \(\varepsilon\) satisfies
\begin{equation*}
\label{eq:anchor_gain_bound}
\Delta_t^\star(\mathcal A_{<t},\varepsilon)
=
\Omega\left(\sqrt{\varepsilon}\cdot
\mathbb E_{V\mid \widetilde h_t}
\left[
\left(
e^{\mathcal H(m_{t,V})}-2
\right)_+\cdot
\sqrt{
D_{\mathrm{KL}}\!\left(
p_{t,V}\,\middle\|\,p_{\mathcal M,t}
\right)
}
\right]\right).
\end{equation*}
Here \((a)_+=\max\{a,0\}\), and the hidden constants depend only on the appendix regularity constants. 

\textbf{From oracle scores to practical anchors.}
The bound gives two signals: \emph{local branching room}, through \(\mathcal H(m_{t,v})\), and \emph{downstream visual relevance}, through \(D_{\mathrm{KL}}(p_{t,v}\|p_{\mathcal M,t})\). Exact use is impractical: \(\mathcal H(m_{t,v})\) is masked-prefix dependent, so selecting earlier positions changes \(\widetilde h_t\) and forces recomputation of later next-token distributions; the suffix divergence is harder, requiring counterfactual continuation laws, suffix rollouts, and marginalization over images compatible with \(\widetilde h_t\). We therefore use the bound as a design principle, not as a scoring rule.

Although the bound is not directly usable, its message is clear: useful reshaping needs branching room. Entropy captures this room: low entropy means the decoder is nearly committed to one next token, while high entropy means several continuations remain plausible. We therefore approximate the masked-prefix entropy \(\mathcal H(m_{t,v})=\mathcal H(p_\theta(\cdot\mid v,\widetilde h_t))\) by the native-prefix entropy \(c_t\triangleq\mathcal H(p_\theta(\cdot\mid v,h_t))\). This fixes the prefix before selection, gives all scores in one native forward pass, and avoids circular dependence on \(\mathcal A\). The score is a necessary filter, not a visual-relevance oracle. Suffix relevance is handled during training through continuation-level objective. Thus, reflection anchors are practical branch points, not exact rankings of downstream visual gain.

\textbf{Reflection anchors.}
Given a native trajectory \(y_{1:T}\), we score each step by its native uncertainty $c_t$.
For a budget ratio \(\rho\in(0,1)\), let \(K_\rho=\lfloor \rho T\rfloor\), and define
$\mathcal A_\rho
=
\operatorname{TopK}_{t\in[T]}
\left\{
c_t
\right\}_{K_\rho},$
Here \(\operatorname{TopK}\{\cdot\}_{K}\) returns any set of \(K\) indices with the largest scores. This gives a one-pass selector: choose the steps where the native decoder still has branching room, then let training decide how to reshape them.

\subsection{RAPO: \underline{R}eflection-\underline{A}nchor \underline{P}olicy \underline{O}ptimization}
\label{sec:rapo}


\textbf{From the theoretical bound to an implementable principle.}
Section~\ref{sec::reflection-anchor} shows that local reshaping requires two ingredients: local branching room and downstream visual dependence. RAPO turns this characterization into a practical policy optimization procedure by closing two implementation gaps. First, the intractable selection of the sparse set \(\mathcal A\) in Eq.~\ref{eq:local_rate_max_new} is replaced by top-\(\rho\) native-entropy anchors, which provide a fixed single-pass proxy for uncertain branch points. Second, the suffix divergence \(D_{\mathrm{KL}}(p_{t,V}\|p_{\mathcal{M},t})\) in the bound, which requires counterfactual suffix laws and visual marginalization, is replaced by a finite-window chain-masked KL target. Thus, RAPO converts the intractable oracle objective into a practical surrogate: entropy determines where the policy-gradient update is applied, while chain-masked finite-window KL supplies a local proxy for downstream visual dependence.

\textbf{Anchor selection at training time.}
For each sampled trajectory \(y_{1:T}\), RAPO applies the reflection-anchor rule from Section~\ref{sec::reflection-anchor}. With \(h_t=(x,y_{<t})\), it computes \(c_t=\mathcal H(p_\theta(\cdot\mid V,h_t))\) and keeps the top-\(\rho\) fraction of positions:
$\mathcal A
    =
    \operatorname{TopK}_{t\in[T]}\{c_t\}_{\lfloor \rho T\rfloor}.$
Equivalently, \(w_t=\mathbf 1[t\in\mathcal A]\), so \(\sum_{t=1}^T w_t=\lfloor \rho T\rfloor\). All policy-gradient and visual-dependence terms below are evaluated only on \(t\in\mathcal A\).

\begin{wrapfigure}{r}{0.56\textwidth}
    \centering
     \vspace{-0.4cm}
    \includegraphics[width=0.95\linewidth]{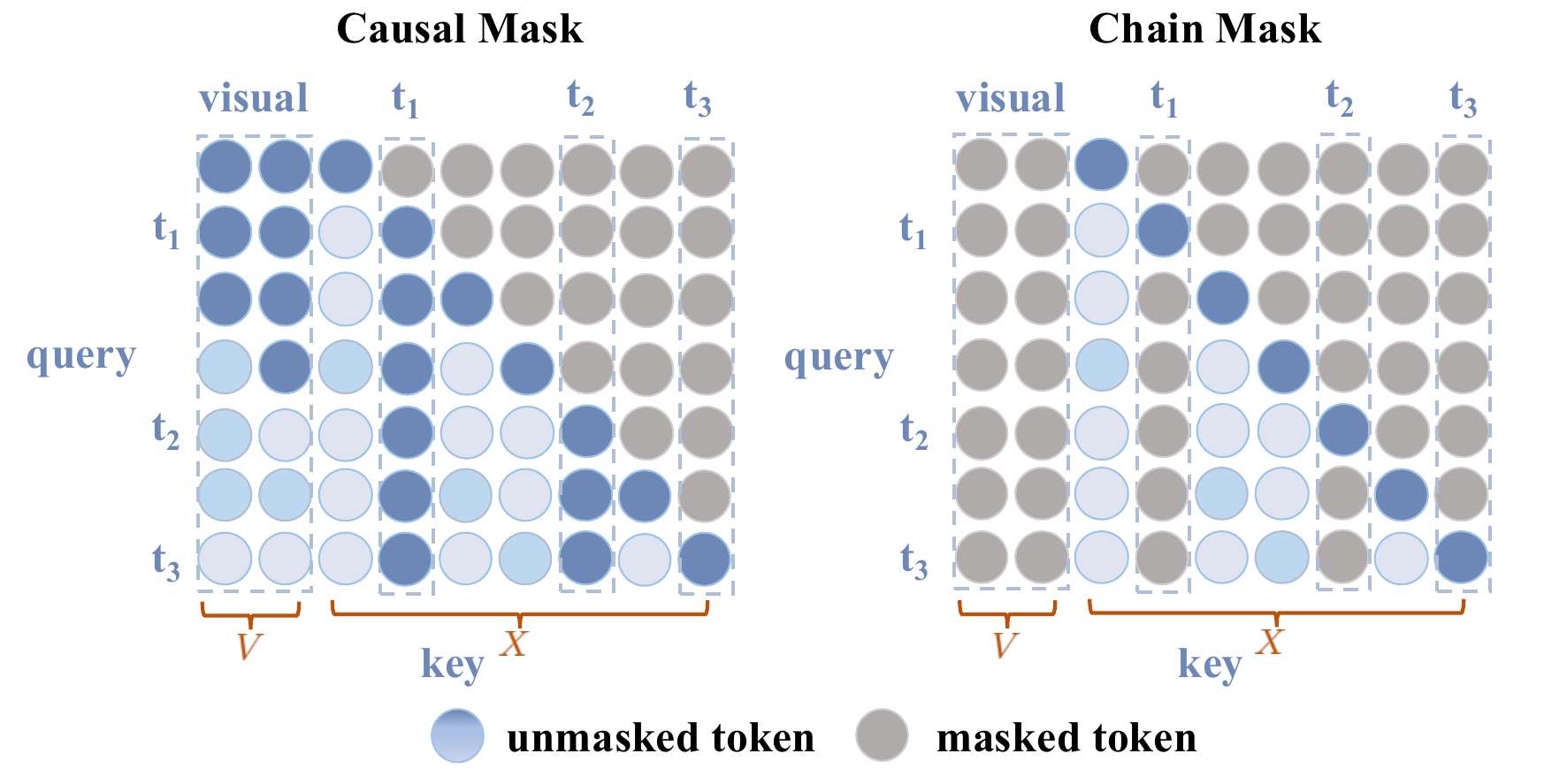}
     \vspace{-0.1cm}
    \caption{\small\textbf{Chain mask.} For an anchor $t_k \in \mathcal{A}$, the chain mask $\mathcal M_{t_k}$ enforces three constraints: (i) the query at $t_k$ cannot attend to the visual tokens $V$; (ii) it cannot attend to any preceding anchors $\{t_1,\dots,t_{k-1}\}$; and (iii) attention to all other textual positions remains unchanged.}
    \vspace{-0.4cm}
    \label{fig:chain_mask}
\end{wrapfigure}

\textbf{Chain-masked reference.}
The oracle reference \(p_{\mathcal M,t}\) marginalizes over images compatible with the masked history, which is not computable inside RL. RAPO uses a local counterfactual instead. Let \(\mathcal A=\{t_1,\ldots,t_K\}\) be the ordered  anchor set. For an anchor \(t_k\), the chain mask \(\mathcal M_{t_k}\) blocks visual access and earlier-anchor leakage, while preserving the remaining textual chain, as shown in Figure~\ref{fig:chain_mask}. Running the policy under this mask gives \(\pi_\theta^{\mathrm{mask}}(\cdot\mid h_{t_k};\mathcal M_{t_k})\), a vision-reduced reference under the same generated prefix. It plays the role of \(p_{\mathcal M,t}\), but is  tractable and compatible with policy optimization.

\textbf{Finite-horizon surrogate for downstream visual dependence.}
The bound uses the suffix divergence \(D_{\mathrm{KL}}(p_{t,v}\|p_{\mathcal M,t})\), which is a full-continuation quantity. RAPO turns it into a local, finite-window comparison. For an anchor \(t_k\), we keep one chain mask \(\mathcal M_{t_k}\). At each suffix position \(t_k+i\), the full visual law is \(\pi_\theta(\cdot\mid v,h_{t_k+i})\), while the vision-reduced reference is \(\pi_\theta^{\mathrm{mask}}(\cdot\mid h_{t_k+i};\mathcal M_{t_k})\). Their KL measures how much the next step still changes when visual access and earlier-anchor evidence are removed. Averaging this comparison over a window of length \(w'=\min(w,T-t_k+1)\), we define
$\overline{\mathbb D}_{\mathrm{KL}_{t_k}}(\theta)
=
\frac{1}{w'}
\sum_{i=0}^{w'-1}
\mathbb D_{\mathrm{KL}}
\left(
\pi_\theta(\cdot\mid v,h_{t_k+i})
\,\middle\|\,
\pi_\theta^{\mathrm{mask}}(\cdot\mid h_{t_k+i};\mathcal M_{t_k})
\right).$
This finite-window KL is the trainable surrogate for downstream visual dependence: it preserves the suffix spirit of the bound, but avoids counterfactual suffix laws and visual marginalization.

\textbf{Optimization objective.}
We now lift the above construction to the group-based GRPO objective by indexing trajectories with $i$. For each sampled trajectory $i$, GRPO assigns a scalar reward $R_i$, computed from the task-specific objective. Under the reflection-anchor mask, RAPO applies the standard GRPO clipped surrogate objective only at anchor positions and normalizes the loss by the anchor length. The resulting objective is
\begin{equation*}
\begin{aligned}
\mathcal{J}(\theta)
=
\mathbb{E}
\Bigg[
\frac{1}{G}
\sum_{i=1}^G
\frac{1}{|\mathcal{A}_i|}
\sum_{t\in \mathcal{A}_i}
\Big(
\mathcal{L}_{i,t}^{\text{clip}}(\theta)+
\gamma \omega^{\mathrm{br}}_{i,t}\sqrt{\overline{\mathbb{D}_{\mathrm{KL}}}_{i,t}(\theta)}-\beta\;\mathbb{D}_{\text{KL}}[\pi_{\theta}||\pi_{\text{ref}}]
\Big)
\Bigg].
\end{aligned}
\end{equation*}

 where $\mathcal{L}_{i,t}^{\text{clip}}(\theta)=
\min\left(
r_{i,t}(\theta)\hat A_{i,t},
\mathrm{clip}\left(r_{i,t}(\theta),1-\varepsilon_{\text{clip}},1+\varepsilon_{\text{clip}}\right)\hat A_{i,t}
\right)$,  $\mathcal{A}_i$ denotes the reflection-anchor set for trajectory $i$, $\omega^{\mathrm{br}}_{i,t}=\max(e^{\mathcal H(p_\theta(\cdot\mid V_i,h_{i,t}))}-2,0)$ denotes the branching-room weight, the importance ratio $r_{i,t}(\theta)=\frac{\pi_\theta(y_{i,t}| v,x,y_{i,<t})}{\pi_{\theta_{\mathrm{old}}}(y_{i,t}| v,x,y_{i,<t})} $, group-normalized advantage 
$\hat A_{i,t}
=
\frac{R_i-\frac{1}{G}\sum_{j=1}^G R_j}{\mathrm{std}(\{R_j\}_{j=1}^G)},$ and $\gamma>0$ denotes the visual-dependence surrogate coefficient.
All other components of GRPO, including rollouts, rewards, clipping, group normalization, and the KL penalty term  remain unchanged. During training, the masked reference and $\omega^{\mathrm{br}}_{i,t}$ is stop-gradient.


\textbf{Connection to prior work.}
RAPO combines the explicitness of visual restoration with the clean interface of policy optimization. Visual-restoration methods repair forgetting by revisiting or re-injecting vision during decoding~\cite{gao2025interleaved,chen2025mintcot,jian2025look}, while policy-optimization methods internalize grounding into normal generation~\cite{shao2024deepseekmathpushinglimitsmathematical,yu2025dapo,wang2025perceptionawarepolicyoptimizationmultimodal,huang2025spotlighttokenperceptionmultimodal}. RAPO keeps the latter interface but adds a local retention mechanism: entropy-selected anchors specify where a KL-bounded update has room to matter, and the branching-room-weighted chain-masked KL rewards whether visual dependence persists over downstream continuations. Thus, visual retention is trained as selective policy reshaping inside the native decoder, rather than added as inference-time repair or left as a global grounding reward.

\section{Experiments}
\label{sec:exp}
In this section, we answer the following research questions:

{\textbf{RQ1}: \textit{Method Effectiveness.} Can RAPO outperform baseline methods across model scales and diverse multimodal benchmarks?

\textbf{RQ2}:  \textit{Mechanism Study.}
Do reflection anchors correspond to visually sensitive
decision points, and does RAPO increase downstream visual-dependence signals
along generated trajectories?

\textbf{RQ3}: \textit{Ablation and Robustness Analysis.} How sensitive is RAPO to design choices such as KL window length and KL coefficient, and how robust is it across reasoning length and model architecture?
}

\begin{table*}[t]
\centering
\caption{\textbf{Main results (mean@8 acc \%).}
Performance comparison across reasoning-intensive and general-domain multimodal benchmarks.
The best results in each group are highlighted in \textbf{bold}.
The second best results in each group are highlighted in \underline{underlined}.
RAPO$_G$ and RAPO$_D$ denote RAPO built upon GRPO and DAPO, respectively.}
\footnotesize
\setlength{\tabcolsep}{1.0pt}
\renewcommand{\arraystretch}{0.95}
\begin{tabular}{clccccccc}
\toprule
\multirow{2}{*}{\textbf{Base Model}} 
& \multirow{2}{*}{\textbf{Model}} 
& \multicolumn{4}{c}{\textbf{Reasoning-Intensive}} 
& \multicolumn{2}{c}{\textbf{General-Domain}} 
& \multirow{2}{*}{\textbf{Avg.}} \\
\cmidrule(lr){3-6} \cmidrule(lr){7-8}
& & MathVision & MathVerse & EMMA & LogicVista & MMMU-Pro & RealWorldQA & \\

\midrule
\multirow{5}{*}{\makecell{\textbf{Qwen3-VL-8B}\\\textbf{Instruct}}}
& Base  & 41.74 & 50.21 & 38.34 & 62.28 & 37.59 & 67.20 & 49.56 \\
& + GRPO                & \underline{48.03} & 55.23 & \underline{41.34} & 59.51 & 48.50 & 67.66 & 53.38 ($\uparrow$ 3.82) \\
& + PAPO$_G$            & 43.75 & 54.57 & 39.00 & \underline{64.29} & \underline{50.00} & \underline{69.41} & 53.50 ($\uparrow$ 3.94) \\
& + VPPO$_G$            & \underline{48.03} & \underline{56.35} & 36.00 & \textbf{66.07} & 49.16 & 67.48 & \underline{53.85} ($\uparrow$ 4.29) \\
& \cellcolor{blue!8}\textbf{+ RAPO$_G$}   
& \cellcolor{blue!8}\textbf{48.27} & \cellcolor{blue!8}\textbf{56.72} & \cellcolor{blue!8}\textbf{42.50} & \cellcolor{blue!8}62.50 & \cellcolor{blue!8}\textbf{51.10} & \cellcolor{blue!8}\textbf{69.54} & \cellcolor{blue!8}\textbf{55.11} ($\uparrow$ 5.55) \\

\midrule

\multirow{5}{*}{\makecell{\textbf{Qwen3-VL-2B}\\\textbf{Instruct}}}
& Base   & 27.30 & 36.75 & 26.87 & 47.13 & 24.44 & 60.47 & 37.16 \\
& + GRPO                & 28.29 & 44.35 & 28.50 & \underline{48.71} & 30.61 & \underline{64.28} & 40.79 ($\uparrow$ 3.63) \\
& + PAPO$_G$            & 28.29 & 45.18 & 27.25 & 46.43 & 31.45 & 64.05 & 40.44 ($\uparrow$ 3.28) \\
& + VPPO$_G$            & \underline{30.59} & \underline{{45.43}} & \underline{28.75} & 45.76 & \underline{31.73} & 63.14 & \underline{40.90} ($\uparrow$ 3.74) \\
& \cellcolor{blue!8}\textbf{+ RAPO$_G$}   
& \cellcolor{blue!8}\textbf{33.55} & \cellcolor{blue!8}\textbf{46.13} & \cellcolor{blue!8}\textbf{31.25} & \cellcolor{blue!8}\textbf{49.55} & \cellcolor{blue!8}\textbf{32.20} & \cellcolor{blue!8}\textbf{65.36} & \cellcolor{blue!8}\textbf{43.01} ($\uparrow$ 5.85) \\

\midrule
\multirow{6}{*}{\makecell{\textbf{Qwen2.5-VL-7B}\\\textbf{Instruct}}}
& Base  & 21.05 & 29.44 & 23.25 & 42.19 & 21.22 & 54.51 & 31.94 \\
& MM-Eureka         & \underline{32.57} & 45.18 & {28.75} & 44.64 & 26.82 & \textbf{61.96} & 39.99 ($\uparrow$ 8.05) \\
& VL-Rethinker       & \textbf{34.21} & 45.30 & {28.75} & 44.20 & 34.10 & {59.22} & 40.96 ($\uparrow$ 9.02) \\
& R1-ShareVL         & 25.99 & {46.22} & \textbf{30.50} & \underline{46.88} & {34.45} & 56.47 & 40.09 ($\uparrow$ 8.15) \\
& Base+RAPO$_G$       & 27.96 & \underline{{47.34}} & \underline{29.00} & 45.31 & \underline{36.01} & \underline{60.26}& \underline{40.98} ($\uparrow$ 9.04) \\
& \cellcolor{blue!8}\textbf{Base+RAPO$_D$}     &\cellcolor{blue!8}30.26 & \cellcolor{blue!8}\textbf{47.59} & \cellcolor{blue!8}\textbf{30.50} & \cellcolor{blue!8}\textbf{47.13} &\cellcolor{blue!8}\textbf{36.04} &\cellcolor{blue!8}58.40& \cellcolor{blue!8}\textbf{41.65} ($\uparrow$ 9.71) \\

\midrule
\multirow{5}{*}{\makecell{\textbf{Qwen2-VL-7B}\\\textbf{Instruct}}}
& Base    & 15.79 & 10.25 & 11.75 & 13.17 & 11.73 & 31.83 & 15.75 \\
& TVC                   & 18.75 & 21.98 & 20.75 & \textbf{41.96} & {23.34} & {52.88} & {29.94} ($\uparrow$ 14.19) \\
& MINT-CoT              & 22.04 & 24.62 & 19.00 & 31.47 & 17.63 & 49.15 & 27.32 ($\uparrow$ 11.57) \\
& Base+RAPO$_G$       & \underline{22.37} & \underline{32.99} & \underline{25.25} & 37.72 & \underline{28.22} & \underline{64.67} & \underline{35.21 }($\uparrow$ 19.46) \\
& \cellcolor{blue!8}\textbf{Base+RAPO$_D$}     & \cellcolor{blue!8}\textbf{23.36}& \cellcolor{blue!8}\textbf{36.31} & \cellcolor{blue!8}\textbf{28.75} & \cellcolor{blue!8}\underline{39.73} & \cellcolor{blue!8}\textbf{29.39} & \cellcolor{blue!8}\textbf{65.01} & \cellcolor{blue!8}\textbf{37.09} ($\uparrow$ 21.34) \\

\bottomrule
\end{tabular}
\label{tab:main_results}
\vspace{-0.09cm}
\end{table*}

\subsection{Experimental Setup}

All  experimental details are provided in Appendix~\ref{app:training}.

\textbf{Models and Baselines.}
We adopt LVLMs from the Qwen family~\citep{wang2024qwen2vlenhancingvisionlanguagemodels,bai2025qwen25vltechnicalreport,bai2025qwen3vltechnicalreport} as our base models.
To ensure fair comparison, we group baselines by base model and restrict comparisons to methods built on the same base model.
\textbf{(i) Qwen3-VL-Instruct.}
We evaluate our method on the Qwen3-VL-2B-Instruct and 8B-Instruct models~\citep{bai2025qwen3vltechnicalreport}, and compare it with representative RL--based optimization methods, including GRPO~\citep{shao2024deepseekmathpushinglimitsmathematical}, PAPO~\citep{wang2025perceptionawarepolicyoptimizationmultimodal}, and VPPO~\citep{huang2025spotlighttokenperceptionmultimodal}.
All baselines in this group are retrained under their original RL settings for controlled comparison.
\textbf{(ii) Qwen2.5-VL-7B-Instruct.}
We compare against strong open-source reasoning models built on the Qwen2.5-VL-7B-Instruct, including MM-Eureka~\citep{meng2025mmeurekaexploringfrontiersmultimodal}, VL-Rethinker~\citep{wang2025vlrethinkerincentivizingselfreflectionvisionlanguage}, and R1-ShareVL~\citep{yao2025r1sharevlincentivizingreasoningcapability}.
These models are evaluated using the officially released checkpoints without additional training.
\textbf{(iii) Qwen2-VL-7B-Instruct.}
We further compare with prior methods built on the earlier Qwen2-VL-7B-Instruct, including TVC~\citep{sun2025mitigating} and MINT-CoT~\citep{chen2025mintcot}, which are trained with supervised fine-tuning followed by reinforcement learning.

\textbf{Training Settings.}
We implement RAPO on top of GRPO, and also implement PAPO and VPPO following the GRPO-based training settings as reported in their original papers.
All models are trained on the ViRL dataset~\citep{wang2025vlrethinkerincentivizingselfreflectionvisionlanguage}, which is designed for long CoT multimodal reasoning.
Across all experiments, we use a learning rate of $1\mathrm{e}{-6}$.
For group (i), models are trained for 100 RL steps with a rollout batch size of 256 and a PPO minibatch size of 64, which is sufficient to reach stable reward values.
For group (ii) and (iii), models are trained for
200 RL steps, respectively, using a rollout batch size of 384 and a PPO batch size of 128.
For all RAPO training across three groups, we set the KL coefficient $\gamma$ to $0.01$, the window size $w$ to 3 for Qwen3-VL-2B-Instruct and 1 for the others, and select reflection anchors as the top $20\%$ highest-entropy tokens.

\textbf{Evaluation Settings.}
We evaluate all models using the vLLM framework~\citep{kwon2023efficient} on six multimodal benchmarks spanning two categories.
(1) \textbf{Reasoning-intensive (Reas.)} benchmarks assess multimodal reasoning ability and include MathVision~\citep{wang2024measuring},
MathVerse~\citep{zhang2024mathversedoesmultimodalllm},
EMMA~\citep{hao2025mllmsreasonmultimodalityemma}
and LogicVista~\citep{xiao2024logicvistamultimodalllmlogical}.
(2) \textbf{General-domain (Gen.)} benchmarks evaluate overall ability across diverse domains and include
MMMU-Pro~\citep{yue2025mmmuprorobustmultidisciplinemultimodal}
and RealWorldQA\footnote{\url{https://huggingface.co/datasets/xai-org/RealworldQA}}.
Across all evaluations, we use a sampling temperature of 1.0, a maximum generation length of 8192 tokens
and sample 8 responses per input, reporting final performance using the mean@8 accuracy metric following prior work~\citep{wang2025perceptionawarepolicyoptimizationmultimodal,huang2025spotlighttokenperceptionmultimodal}.

\subsection{Main Results (RQ1)}
\textbf{RAPO achieves the best overall performance in both controlled and same-backbone reference comparisons.}
We evaluate RAPO across three experimental settings covering different base models and benchmark types, as summarized in Table~\ref{tab:main_results}.
Across all three experimental groups, RAPO achieves the best overall average performance across both reasoning-intensive and general-domain benchmarks. In the pure RL setting on Qwen3-VL-Instruct models, RAPO yields consistent gains over GRPO, PAPO, and VPPO; since Qwen3-VL is already a strong backbone, these gains are best viewed as controlled improvements under reduced headroom.
Moreover, RAPO shows improvements across both reasoning-intensive and general-domain benchmarks, indicating that the gains are not driven by overfitting to specific tasks. Under a compute-matched setting on Qwen3-VL-2B-Instruct, extending GRPO from 100 to 120 steps improves the average score from 40.79 to 41.45, but remains below RAPO-100 at 43.01. This suggests that the improvement is not explained solely by additional wall-clock training compute. Detailed wall-clock comparisons are provided in Appendix~\ref{app:compute_matched}.

\begin{wrapfigure}{r}{0.6\textwidth}
\vspace{-0.4cm}
    \centering
    \includegraphics[width=\linewidth]{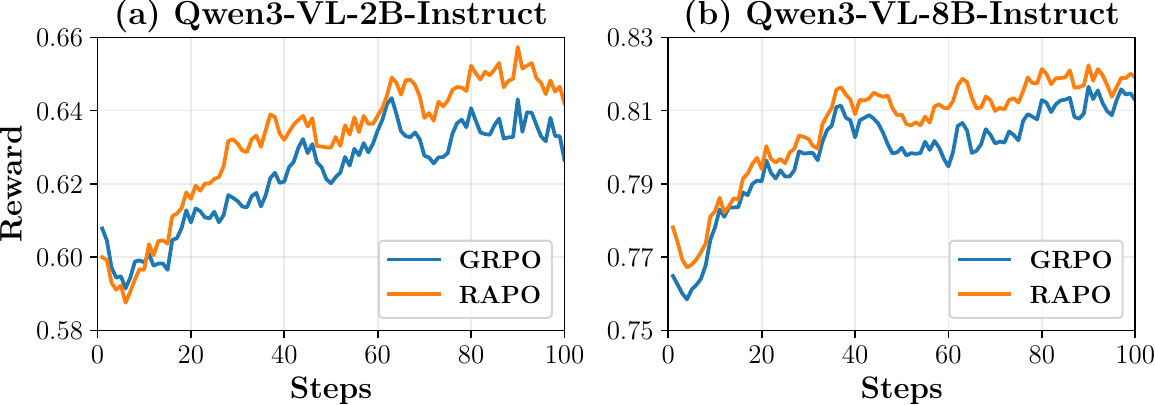}
    \caption{\textbf{Training dynamics.} Reward curves of GRPO and RAPO on Qwen3-VL-2B-Instruct and 8B during RL.}
    \vspace{-0.3cm}
    \label{fig:ACCRewards_of_training}
\end{wrapfigure}
 \textbf{RAPO exhibits faster and more stable reward optimization during training.} Figure~\ref{fig:ACCRewards_of_training} illustrates the accuracy reward trajectories on Qwen3-VL-2B-Instruct and Qwen3-VL-8B-Instruct. For both model scales, RAPO achieves faster reward growth and maintains higher reward values than GRPO across training steps, indicating improved optimization efficiency across different model sizes.




\subsection{Empirical Analysis of RAPO's Mechanism (RQ2)}\label{subsec:why}

In this subsection, we empirically examine why RAPO improves long-chain visual reasoning by analyzing how visual information is used, where local intervention is effective, and how its influence propagates during generation. 
 All analysis in this subsection are conducted using the same data and model configuration. 
We randomly sample 500 instances from the validation set of ViRL dataset and evaluate the Qwen3-VL-2B-Instruct model both before training and after GRPO and RAPO training from the main experiments.  Additional experimental details are provided in Appendix~\ref{app:why}.

\textbf{Sustained downstream visual dependence distinguishes successful reasoning and is reinforced by RAPO.}
We analyze token-wise contrastive KL along generated trajectories, defined as the divergence between vision-conditioned and vision-masked continuation distributions. Figure~\ref{fig:attentionDistribution}a shows that correct trajectories retain larger residual KL values and stronger late-stage fluctuations after the initial decay, whereas incorrect trajectories collapse more rapidly toward near-zero visual dependence. In contrast, visual-token attention, despite being a common proxy for visual use, follows similar positional trends for correct and incorrect outputs and fails to distinguish reasoning outcomes (Figure~\ref{fig:attentionDistribution}b). Figure~\ref{fig:attentionDistribution}c further shows that RAPO raises the token-wise KL profile over GRPO and maintains stronger late-stage KL signals. These results support the hypothesis that RAPO improves long-chain reasoning by maintaining stronger contrastive visual-dependence signals over the downstream continuation; unlike prior RL training where visual dependence remains an implicit property of the learned policy, RAPO explicitly reinforces this trajectory-level dependence beyond standard GRPO.

\begin{figure*}[t]
    \centering
    \includegraphics[width=\linewidth]{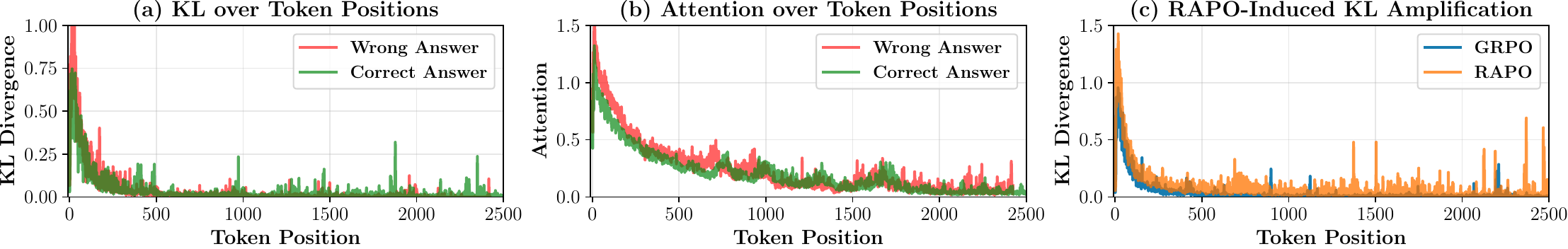}
    \caption{\textbf{Trajectory analysis.} \textbf{(a, b)} Visual dependence in correct vs. incorrect trajectories, measured by token-wise KL  and attention; \textbf{(c)} Token-wise KL  after GRPO and RAPO training. { The KL at token \(t\), \(\overline{\mathbb{D}_{\mathrm{KL}}}_t\), is the contrastive divergence between the distributions induced by vision-conditioned and vision-masked logits, with window length \(w = 1\); higher values indicate stronger visual dependence.
}}
    \label{fig:attentionDistribution}
\end{figure*}

\begin{figure*}[t]
    \newlength{\anchorpanelheight}
    \setlength{\anchorpanelheight}{3.2cm}
    \setlength{\tabcolsep}{2pt}
    \renewcommand{\arraystretch}{0.9}

    \begin{subfigure}[t]{0.45\textwidth}
        \vspace{0pt}
        \includegraphics[
            width=\linewidth,
            height=\anchorpanelheight,
            keepaspectratio
        ]{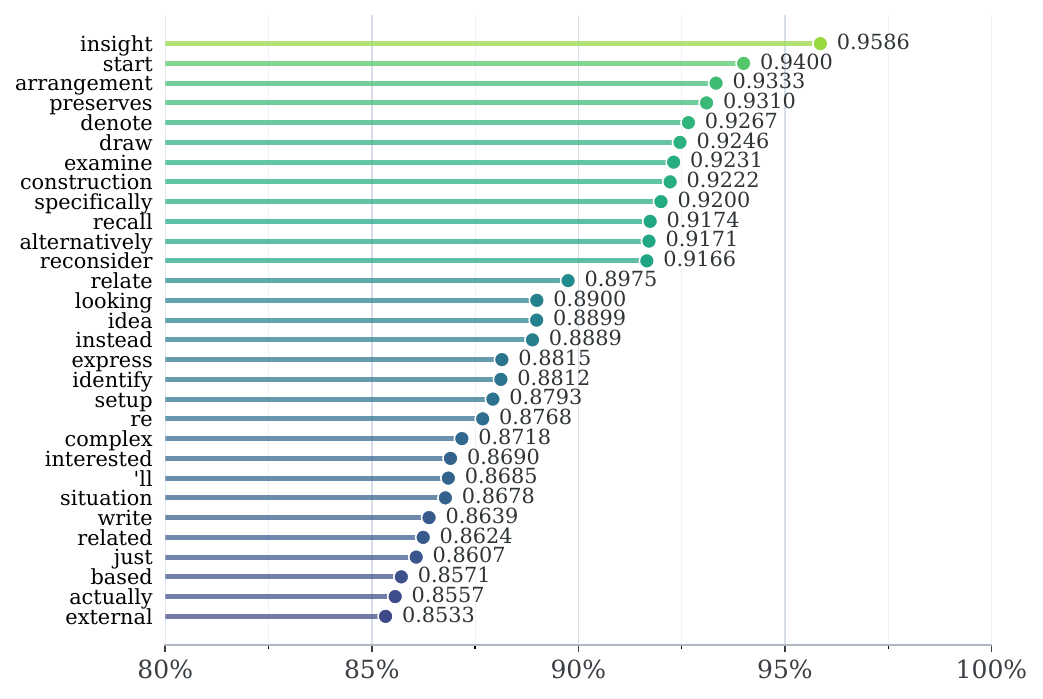}
        \captionsetup{
        width=0.45\linewidth,
        margin={-0.23\linewidth,0pt},
        justification=centering,
        singlelinecheck=false
      } 
        \caption{Anchor Token Concentration.}
        \label{fig:anchor_concentration}
    \end{subfigure}
    \hspace{-0.08\textwidth}
    \begin{subfigure}[t]{0.25\textwidth}
        \centering
        \vspace{0.7cm}
        \includegraphics[
            width=\linewidth,
            height=\anchorpanelheight,
            keepaspectratio
        ]{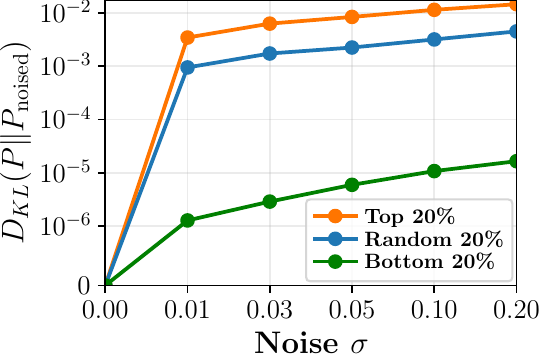}
        \vspace{-6pt}
        \caption{$D_{\mathrm{KL}}$ vs.\ Visual Noise.}
        \label{fig:kl_noise}
    \end{subfigure}
    \hfill
    \begin{subfigure}[t]{0.32\textwidth}
        \centering
        \vspace{0pt}
        \footnotesize
        \begin{minipage}[t][\anchorpanelheight][t]{\linewidth}
        \centering
        \resizebox{\linewidth}{!}{
        \begin{tabular}{llccc}
        \toprule
        \textbf{Axis} & \textbf{Method} & \textbf{Reas.} & \textbf{Gen.} & \textbf{Avg.} \\
        \midrule
        -- & GRPO & 37.46 & 47.45 & 40.79 \\
        \midrule
        \multirow{4}{*}{Quantity}
        & top-10\%  & 38.77 & 47.81 & 41.78 \\
        & \cellcolor{blue!8}top-20\%  & \cellcolor{blue!8}40.12 & \cellcolor{blue!8}48.78 & \cellcolor{blue!8}43.01 \\
        & top-50\%  & 37.90 & 49.59 & 41.80 \\
        & top-100\% & 29.19 & 45.84 & 34.75 \\
        \midrule
        \multirow{3}{*}{Position}
        & \cellcolor{blue!8}top-20\% & \cellcolor{blue!8}40.12 & \cellcolor{blue!8}48.78 & \cellcolor{blue!8}43.01 \\
        & bottom-20\% & 38.33 & 47.06 & 41.24 \\
        & random-20\% & 37.61 & 48.05 & 41.09 \\
        \bottomrule
        \end{tabular}
        }
        \end{minipage}
        \caption{Anchor-Selection Ablation.}
        \label{tab:aba_anchor}
    \end{subfigure}

    \caption{
    \textbf{Reflection anchor analysis.}
    \textbf{(a)} Reflection-anchor concentration, measured as the fraction of each token type's occurrences selected as top-entropy anchors.
    \textbf{(b)} KL divergence between original and noise-perturbed token distributions across entropy groups.
    \textbf{(c)} Analysis of anchor token selection strategies based on entropy ranking and token position.
    }
    \label{fig:anchor_analysis}
    \vspace{-0.2cm}
\end{figure*}

\textbf{Reflection anchors locate the positions whose local reshaping most plausibly carries visual evidence into future reasoning.}
We analyze the positions selected by the entropy-based reflection-anchor rule. 
Figure~\ref{fig:anchor_concentration} shows that high-entropy anchors are concentrated on tokens that control the next reasoning operation, including explicit inspection or revision verbs such as “examine,” “reconsider,” “identify,” and “draw,” but also discourse and relational operators such as “actually,” “instead,” “based,” and “specifically.” 
Such tokens often mark points where the model decides whether to inspect, revise, contrast, relate, or reparameterize the current reasoning state.
This supports the view that reflection anchors occur at operation-selecting positions, where local reshaping can alter how visual evidence is carried forward.
Figure~\ref{fig:kl_noise} further shows that high-entropy tokens consistently exhibit larger KL divergence under visual perturbations than random or low-entropy tokens. Correspondingly, restricting RAPO’s updates to the top 20\% high-entropy tokens outperforms random selection, low-entropy tokens, and alternative anchor set sizes (Table~\ref{tab:aba_anchor}).  Beyond these, Appendix~\ref{app:anchor_aba} evaluates noun-based, fixed-count, and outlier-threshold selectors. Together, these results indicate that RAPO intervenes at naturally occurring, decision-uncertain steps that are effective for local reshaping, in contrast to visual-injection methods that rely on fixed or predefined positions.


\begin{figure*}[t]
    \centering

    \begin{minipage}[t]{0.66\textwidth}
        \centering
        \vspace{-0.1cm}
        \includegraphics[width=\linewidth]{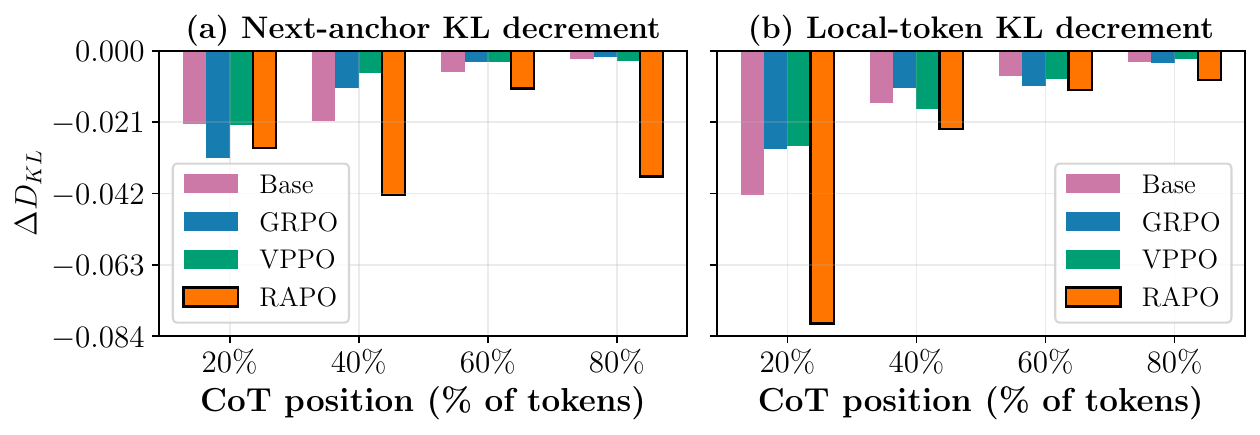}
    \end{minipage}
    \hfill
    \begin{minipage}[t]{0.30\textwidth}
        \centering
    \vspace{0pt}

    \scriptsize
    (c) Chain-mask ablation.
    \vspace{0.08cm}
    \setlength{\tabcolsep}{2.5pt}
    \renewcommand{\arraystretch}{0.95}

        \begin{tabular}{lcccc}
        \toprule
        \textbf{Mask} & \textbf{MathVis.} & \textbf{MathVer.} & \textbf{EMMA} \\
        \midrule
        Image & 26.64 & 44.80 & 27.00 \\
        \cellcolor{blue!8}Chain & \cellcolor{blue!8}\textbf{33.55} & \cellcolor{blue!8}\textbf{46.13} & \cellcolor{blue!8}\textbf{31.25}  \\
        \bottomrule
        \end{tabular}

        \vspace{0.08cm}

        \begin{tabular}{lccc}
        \toprule
        \textbf{Mask} & \textbf{LogicV.}& \textbf{MMMU-P.} & \textbf{RealWQ} \\
        \midrule
        Image & 47.32 &\textbf{32.79} & 64.02 \\
        \cellcolor{blue!8}Chain & \cellcolor{blue!8}\textbf{49.55}& \cellcolor{blue!8}32.20 & \cellcolor{blue!8}\textbf{65.36} \\
        \bottomrule
        \end{tabular}
    \end{minipage}

    \caption{
    \textbf{Propagation of visual influence.}
    \textbf{(a)} KL change at the next anchor token.
    \textbf{(b)} Average KL change at subsequent local tokens under window length \(w=3\), measured by visual-token masking at different CoT positions.
    \textbf{(c)} Comparison between image-level masking and the proposed chain mask.
    }
    \label{fig:delta_kl}
    \vspace{-0.2cm}
\end{figure*}

\textbf{RAPO strengthens downstream contrastive visual-dependence signals beyond reflection anchors.}
RAPO first selectively amplifies visual dependence at reflection anchors: after training, the average contrastive KL at the top 20\% entropy tokens rises from 0.30 to 0.55, while the bottom 20\% entropy tokens slightly decrease from 0.06 to 0.05. 
We then study whether this local amplification propagates by masking visual tokens at selected anchors and measuring the induced KL decrement at the next anchor and over subsequent local tokens. 
Figure~\ref{fig:delta_kl} shows that RAPO yields the largest downstream KL reductions in both cases, indicating that visual influence reinforced at reflection anchors persists into the future suffix rather than remaining a local token effect. 
A chain-mask ablation further shows that replacing chain masking with image-level masking lowers the average score from 43.01 to 40.43, indicating that structured anchor-chain masking provides a more effective propagation signal than blanket image removal.
Together with intervention-based causal diagnostics in Appendix~\ref{app:causal_sanity}, these results support the interpretation that RAPO reinforces not only local visual dependence at anchor positions, but also its propagation into the future suffix, yielding a more explicit propagation mechanism than prior RL methods whose visual influence remains localized or implicit.


\subsection{Ablation and Robustness Analysis (RQ3)}
We further evaluate RAPO through hyperparameter ablations and robustness checks. More experimental details and additional analysis are reported in Appendix~\ref{app:abaltion}. 

\textbf{Hyperparameter Ablations.} Table~\ref{tab:aba_gamma} shows that a small positive KL target coefficient \(\gamma\) improves over both GRPO and the \(\gamma=0\) variant, with the best result at \(\gamma=0.01\). Large $\gamma$ over-optimizes contrastive visual divergence and can dominate the task reward, causing the policy to increase visual sensitivity without preserving answer correctness.
Table~\ref{tab:aba_window} further studies the KL window length \(w\), which controls the temporal range of measured visual influence. A moderate window \(w=3\) performs best: \(w=1\) captures only short-term effects, while larger windows \(w\ge5\) introduce noise from weakly coupled distant continuations.

\textbf{Robustness.} We evaluate RAPO under more capable reasoning models and different model architectures. 
On Qwen3-VL-2B-Thinking, which generates substantially longer CoT sequences than Qwen3-VL-2B-Instruct (\(\sim\)6k vs.\ \(\sim\)2k tokens) with a maximum generation length of 16{,}384 tokens, RAPO remains effective under long-CoT regimes and shows pronounced gains on reasoning-intensive benchmarks (Table~\ref{tab:base_models_rasleng}). 
We further evaluate RAPO on Gemma3-4B-IT~\citep{gemmateam2025gemma3technicalreport}, whose architecture differs from the Qwen series used in the main results. 
RAPO outperforms GRPO (Table~\ref{tab:base_models_rasleng}), indicating that its effectiveness is not tied to a specific reasoning length or model family.

\begin{table*}[h]
    \caption{
    \textbf{Ablation and robustness analysis.}
    \textbf{(a)} Ablation on the KL coefficient \(\gamma\).
    \textbf{(b)} Ablation on KL window length \(w\), i.e., the temporal range of visual-influence estimation.
     \textbf{(c)} Robustness across longer reasoning trajectories and different base model architectures.
    }
    \renewcommand{\arraystretch}{1.0}
    \begin{subtable}[t]{0.34\linewidth}

        \caption{\textbf{Visual-Dependence Coefficient \(\gamma\)}}
        \vspace{-0.1cm}
        \footnotesize
        \label{tab:aba_gamma}
        \setlength{\tabcolsep}{2pt}
        \begin{tabular}{lccc}
        \toprule
        \textbf{Method} & \textbf{Reas.} & \textbf{Gen.} & \textbf{Avg.} \\
        \midrule
        GRPO & 37.46 & 47.45 & 40.79 \\
        \midrule
        RAPO$_G$ \((\gamma{=}0)\)     & 38.59 & 47.19 & 41.46 \\
        RAPO$_G$ \((\gamma{=}0.001)\) & 39.66 & 47.23 & 42.18 \\
        RAPO$_G$ \((\gamma{=}0.005)\) & 38.61 & 47.70 & 41.64 \\
        \cellcolor{blue!8}RAPO$_G$ \((\gamma{=}0.01)\)
            & \cellcolor{blue!8}40.12
            & \cellcolor{blue!8}48.78
            & \cellcolor{blue!8}43.01 \\
        RAPO$_G$ \((\gamma{=}0.02)\)  & 37.68 & 48.14 & 41.16 \\
        RAPO$_G$ \((\gamma{=}0.1)\)   & \multicolumn{3}{c}{Collapse} \\
        \bottomrule
        \end{tabular}
    \end{subtable}
    \hspace{0.01\textwidth}
    \renewcommand{\arraystretch}{1.15}
    \begin{subtable}[t]{0.30\linewidth}
        \centering
        \caption{\textbf{Window Length \(w\)}}
        \vspace{-0.1cm}
        \footnotesize
        \label{tab:aba_window}
        \setlength{\tabcolsep}{1.5pt}
        \begin{tabular}{lccc}
        \toprule
        \textbf{Method} & \textbf{Reas.} & \textbf{Gen.} & \textbf{Avg.} \\
        \midrule
        GRPO & 37.46 & 47.45 & 40.79 \\
        \midrule
        RAPO$_G$ \((w{=}1)\)  & 39.76 & 48.59 & 42.70 \\
        RAPO$_G$ \((w{=}2)\)  & 38.52 & 47.93 & 41.66 \\
        \cellcolor{blue!8}RAPO$_G$ \((w{=}3)\)
            & \cellcolor{blue!8}40.12
            & \cellcolor{blue!8}48.78
            & \cellcolor{blue!8}43.01 \\
        RAPO$_G$ \((w{=}5)\)  & 38.24 & 48.93 & 41.80 \\
        RAPO$_G$ \((w{=}10)\) & 39.64 & 48.06 & 42.45 \\
        \bottomrule
        \end{tabular}
    \end{subtable}
    \hspace{0.01\textwidth}
    \renewcommand{\arraystretch}{1.15}
    \begin{subtable}[t]{0.26\linewidth}
        \centering
        \caption{\textbf{Base Model}}
        \vspace{-0.1cm}
        \footnotesize
        \label{tab:base_models_rasleng}
        \setlength{\tabcolsep}{1.6pt}
        \begin{tabular}{lccc}
        \toprule
        \textbf{Model} & \textbf{Reas.} & \textbf{Gen.} & \textbf{Avg.} \\
        \midrule
        Thinking & 38.48 & 41.41 & 39.45 \\
        + GRPO   & 40.43 & 47.79 & 42.88 \\
        \cellcolor{blue!8}+ RAPO$_G$
            & \cellcolor{blue!8}44.04
            & \cellcolor{blue!8}47.82
            & \cellcolor{blue!8}45.30 \\
        \midrule
        Gemma3-4B-IT & 15.86 & 26.99 & 19.57 \\
        + GRPO       & 27.47 & 34.39 & 29.77 \\
        \cellcolor{blue!8}+ RAPO$_G$
            & \cellcolor{blue!8}28.62
            & \cellcolor{blue!8}33.67
            & \cellcolor{blue!8}30.30 \\
        \bottomrule
        \end{tabular}
    \end{subtable}

    \renewcommand{\arraystretch}{1.0}
    \vspace{-0.3cm}
    \label{tab:ablation_robustness}
\end{table*}

\section{Conclusion}
\label{sec:conclusion}
We frame visual retention in long-chain multimodal reasoning as selective policy reshaping: local deviations from the native decoder should make future continuations remain visually dependent. This led to RAPO, an RL method that updates high-entropy reflection anchors and optimizes a chain-masked finite-window target for downstream visual dependence, without architectural changes or explicit visual-token injection at inference. Experiments across multiple LVLM backbones and multimodal benchmarks show that RAPO improves average performance over strong baselines, while mechanism analyses indicate that entropy anchors correspond to visually sensitive decision points and that RAPO strengthens downstream visual-dependence signals. The results point to propagation-aware visual-retention optimization as a useful direction for long-CoT LVLM reasoning. We discuss the current scope and future extensions of RAPO in Section~\ref{app:limitations}.





\appendix

\section{Related Works}
\label{app:relatedWorksApp}

\subsection{Mitigating Visual Forgetting in long CoT Reasoning}
A key challenge in applying Chain-of-Thought (CoT) reasoning to MLLMs is \textit{visual forgetting}, where reliance on initial visual inputs progressively decays as reasoning chains lengthen, often leading to ungrounded hallucinations. To address this, recent work explores modality interleaving, explicit visual re-attention during inference, and visual self-reflection mechanisms.

One line of work seeks to prevent visual attention decay by structurally embedding visual evidence throughout the reasoning process. Interleaved-Modal Chain-of-Thought (ICoT)~\cite{gao2025interleaved} interleaves paired visual-textual rationales with intermediate reasoning steps, maintaining access to visual evidence during multi-hop inference. Similarly, MINT-CoT~\cite{chen2025mintcot} employs interleaved image-text sequences with dynamic visual token selection to support long-context reasoning. From a representation perspective, Chain-of-Visual-Thought (CoVT)~\cite{qin2025chain} enables reasoning in a continuous visual token space trained with dense perceptual supervision, thereby strengthening spatial and geometric grounding over extended reasoning horizons. Visual Thoughts~\citep{cheng2026visual} provides a unified view of multimodal CoT by showing that both textual and interleaved MCoT can benefit from intermediate ``visual thoughts,'' which convey image information through different expression forms and act as intermediaries between the input image and deeper reasoning layers. 
Latent Chain-of-Thought~\citep{sun2025latentchainofthoughtvisualreasoning} instead treats visual reasoning as inference over latent CoT variables, using amortized variational inference and sparse token-level rewards to encourage diverse, high-likelihood latent rationales. 

Another line of approaches emphasize inference-time re-attention to visual inputs. TVC~\cite{sun2025mitigating} mitigates visual forgetting by reintroducing image conditioning at critical reasoning stages and dynamically pruning redundant visual tokens, effectively preserving visual context over long chains. In parallel, v1~\cite{chung2025v1learningpointvisual} proposes a \textit{point-and-copy} mechanism that selects relevant image patches and injects their representations into the reasoning sequence, ensuring continued access to visual evidence.

Recent studies further incorporate explicit visual self-reflection and reinforcement learning to address hallucinations induced by visual forgetting. Qwen-LA~\cite{chu2025qwen} introduces a vision-text reflection process and optimizes the timing of visual re-examination via reinforcement learning, complemented by visual token routing. Reflection-V~\cite{jian2025look} constructs vision-centered reasoning data for cold-start training and employs a visual-attention-based reward model to encourage reasoning trajectories that remain grounded in image evidence. 

\subsection{Multimodal Reinforcement Learning}
Recent work has extended reinforcement learning (RL) techniques from large language models (LLMs) to multimodal large language models (MLLMs) to improve complex reasoning. Foundational policy optimization relies on algorithms such as Group Relative Policy Optimization (GRPO)~\cite{shao2024deepseekmathpushinglimitsmathematical}, which establishes a baseline for group-based advantage estimation without value functions. Building on this, Decoupled Clip and Dynamic Sampling Policy Optimization (DAPO)~\cite{yu2025dapo} addresses training instability and entropy collapse in long-chain reasoning through decoupled clipping and dynamic sampling mechanisms.

Building on these frameworks, several studies aim to elicit robust reasoning behaviors in MLLMs. Vision-R1 \cite{huang2025visionr1incentivizingreasoningcapability} proposes \textit{progressive thinking suppression training} to reduce overthinking during cold starts, encouraging efficient reasoning trajectories. Similarly, MM-Eureka~\cite{meng2025mmeurekaexploringfrontiersmultimodal} extends rule-based RL to the multimodal domain, inducing emergent self-correction and reasoning patterns comparable to text-only models.

To further improve reasoning quality, recent methods incorporate explicit self-reflection into the RL loop. VL-Rethinker~\cite{wang2025vlrethinkerincentivizingselfreflectionvisionlanguage} addresses the vanishing advantage problem via \textit{selective sample replay} and a F\textit{orced Rethinking} trigger. SRPO (Self-Reflection enhanced reasoning with Group
Relative Policy Optimization)~\cite{wan2025srpo} applies a two-stage framework that trains models to generate and refine their own reflections, rewarding instructive self-corrections. Adopting a tree-search perspective, Mulberry~\cite{yao2024mulberry} leverages \textit{Collective Monte Carlo Tree Search} to construct high-quality reasoning paths for training reasoning-intensive MLLMs.

Beyond general reasoning, increasing focus has been placed on visual grounding in RL. PAPO (Perception-Aware Policy Optimization)~\cite{wang2025perceptionawarepolicyoptimizationmultimodal} introduces an \textit{Implicit Perception Loss} to anchor reasoning steps to visual inputs, thereby reducing perception errors. VPPO (Visually-Perceptive Policy Optimization)~\cite{huang2025spotlighttokenperceptionmultimodal} further refines this by analyzing token-level visual dependency to prioritize perceptually pivotal tokens. Extending to more complex inputs, PeRL~\cite{zhang2025perl} utilizes a permutation-enhanced RL framework for interleaved vision–language tasks, improving multi-image reasoning through rollout filtering and position-aware exploration.

\section{Theoretical Results}
\label{app::theory}

\providecommand{\KL}{D_{\mathrm{KL}}}

Throughout this appendix, \(\KL(\cdot\|\cdot)\) denotes \(D_{\mathrm{KL}}(\cdot\|\cdot)\),
and
\[
[a]_+=\max\{a,0\}.
\]
All discrete entropies are Shannon entropies and are denoted by \(\mathcal H(\cdot)\).
We follow the notation convention in the main text: uppercase letters denote random
variables and lowercase letters denote realizations.

This appendix justifies the sparse one-step-then-rollback objective in
Eq.~\eqref{eq:local_rate_max_new}. The selected set \(\mathcal A\) affects the
masked history at step \(t\) only through its earlier elements
\[
\mathcal A_{<t}=\mathcal A\cap[t-1].
\]
Thus, the ideal local gain at step \(t\) is set-dependent: changing earlier selected
positions changes the masked context \(\widetilde h_t(\mathcal A_{<t})\), and hence
changes both the local next-token marginal and the masked-reference suffix law.

The analysis is local in the following sense. For each selected step \(t\), we evaluate
a one-step intervention that changes only the next-token marginal and then rolls back
to the native decoder for the future suffix. The resulting quantity is a marginal
downstream visual gain. The joint sparse objective sums these marginal gains over
selected steps; it is not the mutual information of a full rollout in which all
selected steps are intervened on simultaneously.

When an optimizer need not exist, all occurrences of \(\max\) can be read as \(\sup\).
The proofs below are unchanged under this convention.


\subsection{Problem Setup}
\label{app::theory-problem-setup}

Let \(V\) be the visual input, \(X\) the textual prompt, and \(Y_{1:T}\) the reasoning
trajectory. Let \([T]=\{1,\ldots,T\}\). The native decoder \(p_\theta\) generates
\[
p_\theta(Y_{1:T}\mid V,X)
=
\prod_{s=1}^T p_\theta(Y_s\mid V,X,Y_{<s}).
\]
For a realized native trajectory, write
\[
h_t=(x,y_{<t}),
\qquad
Y_{\ge t}=(Y_t,\ldots,Y_T),
\qquad
Y_{>t}=(Y_{t+1},\ldots,Y_T).
\]

Fix a candidate selected set \(\mathcal A\subseteq[T]\). Let
\[
\mathcal A_{<t}=\mathcal A\cap[t-1],
\]
and define the realized masked history
\[
\widetilde h_t(\mathcal A_{<t})
=
\bigl(x,M_t(y_{<t};\mathcal A_{<t})\bigr).
\]
When \(\mathcal A\) and \(t\) are clear from context, we write
\(\widetilde h_t\) instead of \(\widetilde h_t(\mathcal A_{<t})\). All conditional
quantities below are evaluated conditioned on this realized masked history. Thus,
\(V\mid\widetilde h_t\) denotes the population distribution of visual inputs compatible
with the masked context \(\widetilde h_t\).

For a visual realization \(v\), define the masked-context native next-token marginal
\[
m_{t,v}(y)
:=
p_\theta(y\mid V=v,\widetilde h_t).
\]
Define the native masked-history rollback continuation by
\[
K_{t,v}(y_{>t}\mid y)
:=
p_\theta(y_{>t}\mid V=v,\widetilde h_t,Y_t=y).
\]
The corresponding native masked-history suffix law is
\begin{equation}
\label{eq:masked_native_suffix}
\widetilde p_{t,v}(y,y_{>t})
=
m_{t,v}(y)K_{t,v}(y_{>t}\mid y).
\end{equation}

A local reshaping replaces only the next-token marginal \(m_{t,v}\) by
\[
q_t(\cdot\mid V=v,\widetilde h_t),
\]
while keeping the native rollback continuation \(K_{t,v}(\cdot\mid y)\) unchanged.
The intervened suffix law is
\begin{equation}
\label{eq:intervened_suffix}
\widetilde p_{t,v}^{q_t}(y,y_{>t})
=
q_t(y\mid V=v,\widetilde h_t)K_{t,v}(y_{>t}\mid y).
\end{equation}

The downstream visual gain of this one-step intervention is
\begin{equation}
\label{eq:Delta_t_q_A_def}
\Delta_t(q_t;\mathcal A_{<t})
=
I_{q_t}(V;Y_{\ge t}\mid \widetilde h_t)
-
I_{p_\theta}(V;Y_{\ge t}\mid \widetilde h_t),
\end{equation}
where \(I_{q_t}\) is computed under the one-step-then-rollback rollout in
Eq.~\eqref{eq:intervened_suffix}, and the native term is computed under
Eq.~\eqref{eq:masked_native_suffix}.

The local feasible set is
\begin{equation}
\label{eq:Q_t_A_def}
\mathcal Q_t(\mathcal A_{<t},\varepsilon)
=
\left\{
q_t:
\KL\!\left(
q_t(\cdot\mid V,\widetilde h_t)
\,\middle\|\,
m_{t,V}
\right)
\le \varepsilon
\quad
\text{for }V\mid\widetilde h_t\text{-a.e.}
\right\}.
\end{equation}
The local optimal downstream gain is
\begin{equation}
\label{eq:Delta_t_star_A_def}
\Delta_t^\star(\mathcal A_{<t},\varepsilon)
=
\sup_{q_t\in\mathcal Q_t(\mathcal A_{<t},\varepsilon)}
\Delta_t(q_t;\mathcal A_{<t}).
\end{equation}

We also define the full-history native next-token distribution, evaluated at the same
realized full prefix \(h_t=(x,y_{<t})\), by
\[
\pi_{t,v}^{\mathrm{full}}(y)
:=
p_\theta(y\mid V=v,h_t),
\]
and the full-history native continuation by
\[
K_{t,v}^{\mathrm{full}}(y_{>t}\mid y)
:=
p_\theta(y_{>t}\mid V=v,h_t,Y_t=y).
\]
The full-history suffix law is
\begin{equation}
\label{eq:full_suffix_law}
p_{t,v}(y,y_{>t})
=
\pi_{t,v}^{\mathrm{full}}(y)K_{t,v}^{\mathrm{full}}(y_{>t}\mid y).
\end{equation}
The asymmetry between \(p_{t,v}\) and the masked-reference suffix law below is
intentional: \(p_{t,v}\) measures the visual separability of the realized full-history
continuation, while the masked-reference law removes visual evidence directly leaked
through previously selected positions.

Let
\[
K_\rho=\lfloor \rho T\rfloor.
\]
For a selected set \(\mathcal A\), define the decoder-feasible class
\[
\Pi(\mathcal A,\varepsilon)
=
\left\{
\pi:
\pi(\cdot\mid V,\widetilde h_t(\mathcal A_{<t}))
\in
\mathcal Q_t(\mathcal A_{<t},\varepsilon)
\quad
\forall t\in\mathcal A
\right\}.
\]
For \(\pi\in\Pi(\mathcal A,\varepsilon)\), write
\[
\Delta_t(\pi;\mathcal A_{<t})
=
\Delta_t(q_t^\pi;\mathcal A_{<t}),
\qquad
q_t^\pi(\cdot\mid V,\widetilde h_t)
=
\pi(\cdot\mid V,\widetilde h_t).
\]
The joint sparse oracle objective corresponding to Eq.~\eqref{eq:local_rate_max_new}
is
\begin{equation}
\label{eq:J_star_def}
\mathcal J^\star(\varepsilon)
=
\max_{\mathcal A\subseteq[T],\,|\mathcal A|\le K_\rho}
\;
\sup_{\pi\in\Pi(\mathcal A,\varepsilon)}
\sum_{t\in\mathcal A}
\Delta_t(\pi;\mathcal A_{<t}).
\end{equation}


\subsection{Additional Necessary Notations}
\label{app::theory-notations}

Fix a candidate set \(\mathcal A\), a step \(t\), and the induced masked history
\[
\widetilde h_t=\widetilde h_t(\mathcal A_{<t}).
\]

\paragraph{Masked-reference suffix law.}
Define the masked-reference suffix distribution by
\begin{equation}
\label{eq:p_M_t_def}
p_{\mathcal M,t}(y_{\ge t})
=
\mathbb E_{V\mid \widetilde h_t}
\left[
\widetilde p_{t,V}(y_{\ge t})
\right].
\end{equation}
Equivalently,
\[
p_{\mathcal M,t}(y,y_{>t})
=
\mathbb E_{V\mid \widetilde h_t}
\left[
\widetilde p_{t,V}(y,y_{>t})
\right].
\]
Its next-token marginal is denoted by
\[
m_{\mathcal M,t}(y)
=
\sum_{y_{>t}}p_{\mathcal M,t}(y,y_{>t}).
\]
For an intervened next-token distribution \(q_t\), define the corresponding intervened
masked-reference suffix law by
\[
p_{\mathcal M,t}^{q_t}(y,y_{>t})
=
\mathbb E_{V\mid \widetilde h_t}
\left[
\widetilde p_{t,V}^{q_t}(y,y_{>t})
\right].
\]

\paragraph{Downstream visual score.}
Define the pointwise masked-history-to-reference log-likelihood ratio
\[
\ell_t(v,y_{\ge t})
=
\log
\frac{
\widetilde p_{t,v}(y_{\ge t})
}{
p_{\mathcal M,t}(y_{\ge t})
}.
\]
The downstream visual score of choosing token \(y\) at step \(t\) is
\begin{equation}
\label{eq:psi_def}
\psi_t(v,y)
=
\mathbb E_{Y_{>t}\sim K_{t,v}(\cdot\mid y)}
\left[
\ell_t(v,(y,Y_{>t}))
\right].
\end{equation}
This score measures how distinguishable the future suffix induced by choosing \(y\)
remains from the masked-reference suffix distribution.

\begin{claim}
\label{claim::decomposition}
For any \(y\) with \(m_{t,v}(y)>0\),
\begin{equation}
\label{eq:psi_closed}
\psi_t(v,y)
=
\log
\frac{
m_{t,v}(y)
}{
m_{\mathcal M,t}(y)
}
+
\KL\!\left(
K_{t,v}(\cdot\mid y)
\,\middle\|\,
p_{\mathcal M,t}(\cdot\mid y)
\right).
\end{equation}
\end{claim}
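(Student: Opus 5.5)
The identity follows from the chain rule of probability applied to both suffix laws, followed by taking the expectation of the log-ratio under the native rollback kernel. The main thing to fix first is what \(p_{\mathcal M,t}(\cdot\mid y)\) means: it is the conditional of the masked-reference suffix law given \(Y_t=y\), namely
\[
p_{\mathcal M,t}(y_{>t}\mid y)=\frac{p_{\mathcal M,t}(y,y_{>t})}{m_{\mathcal M,t}(y)},
\]
which is well defined whenever \(m_{\mathcal M,t}(y)>0\).

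First I check that \(m_{\mathcal M,t}(y)>0\) whenever \(m_{t,v}(y)>0\). Since \(K_{t,v}(\cdot\mid y)\) sums to one, the definition \eqref{eq:p_M_t_def} gives \(m_{\mathcal M,t}(y)=\mathbb E_{V\mid\widetilde h_t}[m_{t,V}(y)]\). This is positive when \(v\) is an atom of \(V\mid\widetilde h_t\), which is automatic in the discrete setting, or under the standing regularity convention that the realized \(v\) lies in the support. I will state this as the mild regularity assumption being used.

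Second, I factor both suffix laws. Equation \eqref{eq:masked_native_suffix} gives \(\widetilde p_{t,v}(y,y_{>t})=m_{t,v}(y)K_{t,v}(y_{>t}\mid y)\). By the definition of the conditional above, \(p_{\mathcal M,t}(y,y_{>t})=m_{\mathcal M,t}(y)\,p_{\mathcal M,t}(y_{>t}\mid y)\). Hence, for every \(y_{>t}\) with \(K_{t,v}(y_{>t}\mid y)>0\),
\[
\ell_t(v,(y,y_{>t}))=\log\frac{m_{t,v}(y)}{m_{\mathcal M,t}(y)}+\log\frac{K_{t,v}(y_{>t}\mid y)}{p_{\mathcal M,t}(y_{>t}\mid y)}.
\]

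Third, I take the expectation over \(Y_{>t}\sim K_{t,v}(\cdot\mid y)\) as in \eqref{eq:psi_def}. The first term does not depend on \(Y_{>t}\), so it passes through unchanged. The second term is, by definition, \(\KL(K_{t,v}(\cdot\mid y)\,\|\,p_{\mathcal M,t}(\cdot\mid y))\). This gives \eqref{eq:psi_closed}.

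The only delicate point is absolute continuity, and I expect it to be the main obstacle. If \(K_{t,v}(y_{>t}\mid y)>0\) but \(p_{\mathcal M,t}(y_{>t}\mid y)=0\), both sides equal \(+\infty\) under the usual convention, so the identity still holds in the extended reals. In the positive-mass-atom case this situation cannot occur: the mixture \(p_{\mathcal M,t}\) dominates its component \(\widetilde p_{t,v}\) up to the factor \(P(V=v\mid\widetilde h_t)\). Consequently the KL term is finite whenever \(\psi_t\) is.
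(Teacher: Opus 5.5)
Your proof is correct and follows the paper's argument: factor $\widetilde p_{t,v}(y,y_{>t})=m_{t,v}(y)K_{t,v}(y_{>t}\mid y)$ and $p_{\mathcal M,t}(y,y_{>t})=m_{\mathcal M,t}(y)\,p_{\mathcal M,t}(y_{>t}\mid y)$, split the log-ratio, and take the expectation under $K_{t,v}(\cdot\mid y)$. The paper does not check positivity or absolute continuity at all; those facts would follow from Assumption~\ref{assumption::support}, so your atom/mixture-domination argument is extra care that the paper's proof leaves out.
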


\begin{claim}
\label{claim::mi-score-identity}
Under the native masked-history one-step-then-rollback suffix law
\(\widetilde p_{t,V}\),
\begin{equation}
\label{eq:mi_score_identity}
I_{p_\theta}(V;Y_{\ge t}\mid \widetilde h_t)
=
\mathbb E_{V\mid \widetilde h_t}
\mathbb E_{Y_t\sim m_{t,V}}
\left[
\psi_t(V,Y_t)
\right].
\end{equation}
\end{claim}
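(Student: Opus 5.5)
We prove Claim~\ref{claim::mi-score-identity} by writing the conditional mutual information as an expected log-likelihood ratio and then using the tower property to reach $\psi_t$. Under the native masked-history one-step-then-rollback rollout, conditionally on $\widetilde h_t$ the pair $(V,Y_{\ge t})$ has joint law
\[
P(v,y_{\ge t})=P_{V\mid\widetilde h_t}(v)\,\widetilde p_{t,v}(y_{\ge t}),
\]
with $\widetilde p_{t,v}$ as in Eq.~\eqref{eq:masked_native_suffix}. The $Y_{\ge t}$-marginal of this joint is
\[
\mathbb E_{V\mid\widetilde h_t}\bigl[\widetilde p_{t,V}(y_{\ge t})\bigr],
\]
which is exactly $p_{\mathcal M,t}$ from Eq.~\eqref{eq:p_M_t_def}. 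So the first step is to record that the marginal appearing in the mutual information coincides with the masked-reference suffix law.

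By the definition of mutual information as the expected KL divergence from the conditional law to the marginal, we get
\[
I_{p_\theta}(V;Y_{\ge t}\mid\widetilde h_t)
=\mathbb E_{V\mid\widetilde h_t}\,\mathbb E_{Y_{\ge t}\sim\widetilde p_{t,V}}\bigl[\ell_t(V,Y_{\ge t})\bigr],
\]
where $\ell_t$ is the pointwise log-likelihood ratio defined earlier. Next we factor $\widetilde p_{t,V}(y,y_{>t})=m_{t,V}(y)\,K_{t,V}(y_{>t}\mid y)$. Conditioning on $Y_t$, the inner expectation becomes
\[
\mathbb E_{Y_t\sim m_{t,V}}\,\mathbb E_{Y_{>t}\sim K_{t,V}(\cdot\mid Y_t)}\bigl[\ell_t(V,(Y_t,Y_{>t}))\bigr].
\]
By Eq.~\eqref{eq:psi_def}, the innermost expectation is $\psi_t(V,Y_t)$, which gives Eq.~\eqref{eq:mi_score_identity}.

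The only delicate point is justifying these manipulations, and I expect it to be the main step needing care. Two things must be checked.
\begin{enumerate}
\item \emph{Well-definedness of $\ell_t$.} Whenever $\widetilde p_{t,v}(y_{\ge t})>0$ we also have $p_{\mathcal M,t}(y_{\ge t})>0$, since $p_{\mathcal M,t}$ is a mixture that places positive weight on $v$ (for $V\mid\widetilde h_t$-a.e.\ $v$). Hence $\ell_t$ is defined on the support of the joint law.
\item \emph{Interchange of expectations.} Swapping the nested expectations (Fubini/Tonelli) requires integrability. For discrete finite-horizon sequences, $\ell_t$ is bounded below in expectation because $\mathbb E[e^{-\ell_t}]\le 1$. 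Therefore its negative part is integrable, and the iterated expectation is valid in $[0,\infty]$ whether or not the mutual information is finite.
\end{enumerate}
Tokens $y$ with $m_{t,v}(y)=0$ carry zero mass, so $\psi_t$ only needs to be defined where $m_{t,v}(y)>0$, consistent with Claim~\ref{claim::decomposition}. Optionally, substituting Eq.~\eqref{eq:psi_closed} gives the decomposition of $I_{p_\theta}$ into a next-token mutual-information term plus an expected conditional suffix KL, which serves as a sanity check via the chain rule.
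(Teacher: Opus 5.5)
Your proof is correct and follows the paper's own argument. You identify the $Y_{\ge t}$-marginal of the joint law with $p_{\mathcal M,t}$, write the conditional mutual information as $\mathbb E_{V\mid\widetilde h_t}\mathbb E_{Y_{\ge t}\sim\widetilde p_{t,V}}[\ell_t(V,Y_{\ge t})]$, factor $\widetilde p_{t,V}=m_{t,V}K_{t,V}$, and recognize the inner conditional expectation as $\psi_t(V,Y_t)$. Your regularity remarks go beyond the paper, which leaves them to Assumption~\ref{assumption::support}, and they are harmless, with two small refinements. First, the phrase ``$p_{\mathcal M,t}$ places positive weight on $v$'' only makes sense for discrete $V$; in general, $\mathbb E_{V\mid\widetilde h_t}[\widetilde p_{t,V}(y_{\ge t})]=0$ forces $\widetilde p_{t,V}(y_{\ge t})=0$ almost surely for each of the finitely many suffixes, which gives the a.e.\ absolute continuity. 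Second, no Fubini step is really needed, since the $V$-expectation stays outermost and for each fixed $v$ you are only regrouping a finite sum over suffixes.
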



\subsection{Assumptions}
\label{app::theory-assumptions}

\begin{assumption}[Finite common support and absolute continuity]
\label{assumption::support}
For each step \(t\), all next-token distributions are supported on a finite set
\(\mathcal Y_t\). Since the horizon \(T\) is finite, the corresponding suffix space is
also finite. For every candidate set \(\mathcal A\), every step \(t\), and
\(V\mid\widetilde h_t(\mathcal A_{<t})\)-almost every \(v\), the distributions
\(\widetilde p_{t,v}\), \(\widetilde p_{t,v}^{q_t}\), \(p_{t,v}\), and
\(p_{\mathcal M,t}\) are mutually compatible in the following sense: whenever a
likelihood ratio or KL divergence appears below, the numerator is absolutely continuous
with respect to the denominator. In particular,
\[
q_t(\cdot\mid v,\widetilde h_t)\ll m_{t,v}
\]
for every feasible \(q_t\).
\end{assumption}

\begin{remark}[Role of Assumption~\ref{assumption::support}]
This assumption avoids support pathologies and ensures that the KL divergences,
likelihood ratios, and exponential-tilting perturbations are well-defined. For a
finite-vocabulary decoder with strictly positive softmax probabilities, the next-token
part of this assumption is automatic. For suffix distributions, it can be enforced by
restricting the analysis to the common support used for scoring.
\end{remark}

\begin{assumption}[Bounded reference ratios and bounded full-history gap]
\label{assumption::bounded-ratio}
There exists \(0<B<\infty\) such that, for every candidate set \(\mathcal A\), every
step \(t\), and every \((v,y,y_{>t})\) with
\(\widetilde p_{t,v}(y,y_{>t})>0\),
\[
\left|
\log
\frac{
\widetilde p_{t,v}(y,y_{>t})
}{
p_{\mathcal M,t}(y,y_{>t})
}
\right|
\le B.
\]
Moreover, the full-history suffix gap is bounded by the same constant:
\[
0
\le
\KL(p_{t,v}\|p_{\mathcal M,t})
\le B
\qquad
\text{for }V\mid \widetilde h_t(\mathcal A_{<t})\text{-a.e. }v.
\]
\end{assumption}

\begin{remark}[Role of Assumption~\ref{assumption::bounded-ratio}]
The first bound ensures that the downstream score \(\psi_t\) is uniformly bounded,
which is needed for the KL trust-region expansion. The second bound ensures that the
full-history reference gap \(\KL(p_{t,V}\|p_{\mathcal M,t})\) is finite and can be used
as a multiplicative reference-gap factor in the entropy lower bound. Both are standard
bounded-likelihood-ratio conditions. In practice, the same condition can be imposed on
the truncated candidate support used for scoring.
\end{remark}

\begin{assumption}[Generic non-coincidence]
\label{assumption::generic}
There exists \(\gamma>0\) such that, for every candidate set \(\mathcal A\), every step
\(t\), and \(V\mid\widetilde h_t(\mathcal A_{<t})\)-almost surely, the score map
\[
y\mapsto \psi_t(V,y)
\]
is injective on the support of \(m_{t,V}\) and \(\gamma\)-separated. That is, for any
\(y\neq y'\) with \(m_{t,V}(y)m_{t,V}(y')>0\),
\[
|\psi_t(V,y)-\psi_t(V,y')|
\ge \gamma.
\]
\end{assumption}

\begin{remark}[Role of Assumption~\ref{assumption::generic}]
Exact score ties are a measure-zero degeneracy. In our setting, \(\psi_t(V,y)\) is a
real-valued functional of logits and continuation likelihoods. Under non-degenerate
perturbations, distinct candidate tokens almost surely have distinct downstream scores.
The separation constant \(\gamma\) formalizes finite numerical resolution. Since
Assumption~\ref{assumption::bounded-ratio} also bounds \(|\psi_t|\), this assumption is
best interpreted on the finite candidate set used by the reshaping or scoring procedure,
rather than as a claim about an arbitrarily large full vocabulary.
\end{remark}

\begin{assumption}[Local target realizability]
\label{assumption::realizability}
For every candidate set \(\mathcal A\) and every collection of feasible local reshaping
targets
\[
\{q_t:t\in\mathcal A,\ q_t\in\mathcal Q_t(\mathcal A_{<t},\varepsilon)\},
\]
there exists a decoder \(\pi\in\Pi(\mathcal A,\varepsilon)\) such that, for every
\(t\in\mathcal A\),
\[
\pi(\cdot\mid V,\widetilde h_t(\mathcal A_{<t}))
=
q_t(\cdot\mid V,\widetilde h_t(\mathcal A_{<t}))
\quad
\text{for }V\mid\widetilde h_t(\mathcal A_{<t})\text{-a.e.}
\]
\end{assumption}

\begin{remark}[Role of Assumption~\ref{assumption::realizability}]
This is an oracle target-construction assumption. The theory identifies local reshaping
targets at selected masked histories, and training a single decoder \(\pi\) attempts to
approximate those targets. If exact realizability is replaced by approximation up to
error \(\delta_{\mathrm{real}}\) in downstream gain per selected step, the final joint
lower bound loses at most \(K_\rho\delta_{\mathrm{real}}\).
\end{remark}


\subsection{Local Technical Lemmas}
\label{app::theory-lemmas}

\begin{lemma}[Exact first-order decomposition and one-sided lower bound]
\label{lem:gain-first-order-kl}
Fix a candidate set \(\mathcal A\), a step \(t\), and the induced masked history
\(\widetilde h_t=\widetilde h_t(\mathcal A_{<t})\). Assume
Assumption~\ref{assumption::support}. For any
\(q_t\in\mathcal Q_t(\mathcal A_{<t},\varepsilon)\), the downstream visual gain admits
the exact decomposition
\begin{equation}
\label{eq:exact_decomp_kl}
\Delta_t(q_t;\mathcal A_{<t})
=
G_{1,t}(q_t;\mathcal A_{<t})
+
\mathbb E_{V\mid \widetilde h_t}
\left[
\KL\!\left(
q_t(\cdot\mid V,\widetilde h_t)
\,\middle\|\,
m_{t,V}
\right)
\right]
-
\KL\!\left(
p_{\mathcal M,t}^{q_t}
\,\middle\|\,
p_{\mathcal M,t}
\right),
\end{equation}
where
\begin{equation}
\label{eq:first_order_gain}
G_{1,t}(q_t;\mathcal A_{<t})
=
\mathbb E_{V\mid \widetilde h_t}
\left[
\sum_y
\left(
q_t(y\mid V,\widetilde h_t)
-
m_{t,V}(y)
\right)
\psi_t(V,y)
\right].
\end{equation}
Moreover,
\begin{equation}
\label{eq:lemma-b1-onesided}
\Delta_t(q_t;\mathcal A_{<t})
\ge
G_{1,t}(q_t;\mathcal A_{<t}).
\end{equation}
\end{lemma}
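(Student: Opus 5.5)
The plan is to write both mutual informations as averaged KL divergences to the appropriate mixture, separate out the log-likelihood ratio against the fixed reference \(p_{\mathcal M,t}\), and then recognize the score \(\psi_t\) from Eq.~\eqref{eq:psi_def}. The one-sided bound should then follow from the chain rule for KL together with joint convexity.

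\textbf{Step 1 (intervened MI).} Under the one-step-then-rollback rollout of Eq.~\eqref{eq:intervened_suffix}, the marginal suffix law given \(\widetilde h_t\) is \(p^{q_t}_{\mathcal M,t}=\mathbb E_{V\mid\widetilde h_t}[\widetilde p^{q_t}_{t,V}]\). Hence
\[
I_{q_t}(V;Y_{\ge t}\mid\widetilde h_t)=\mathbb E_{V\mid\widetilde h_t}\bigl[\KL(\widetilde p^{q_t}_{t,V}\,\|\,p^{q_t}_{\mathcal M,t})\bigr].
\]
I would insert the fixed reference \(p_{\mathcal M,t}\) via
\[
\log\frac{\widetilde p^{q_t}_{t,V}}{p^{q_t}_{\mathcal M,t}}=\log\frac{\widetilde p^{q_t}_{t,V}}{p_{\mathcal M,t}}-\log\frac{p^{q_t}_{\mathcal M,t}}{p_{\mathcal M,t}}.
\]
Averaging the second term over \(V\) and over \(\widetilde p^{q_t}_{t,V}\) gives an average under the mixture \(p^{q_t}_{\mathcal M,t}\), so it equals exactly \(\KL(p^{q_t}_{\mathcal M,t}\|p_{\mathcal M,t})\). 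For the first term, the shared kernel \(K_{t,V}\) gives the factorization
\[
\frac{\widetilde p^{q_t}_{t,V}(y,y_{>t})}{p_{\mathcal M,t}(y,y_{>t})}=\frac{q_t(y\mid V,\widetilde h_t)}{m_{t,V}(y)}\cdot\frac{\widetilde p_{t,V}(y,y_{>t})}{p_{\mathcal M,t}(y,y_{>t})}.
\]
Taking logs and averaging, the first factor contributes \(\KL(q_t\|m_{t,V})\). The second factor is \(\ell_t\), and averaging it over \(Y_{>t}\sim K_{t,V}(\cdot\mid y)\) gives \(\psi_t(V,y)\) by Eq.~\eqref{eq:psi_def}. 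Therefore
\[
I_{q_t}=\mathbb E_V\Bigl[\KL(q_t\|m_{t,V})+\sum_y q_t(y)\,\psi_t(V,y)\Bigr]-\KL(p^{q_t}_{\mathcal M,t}\|p_{\mathcal M,t}).
\]
Assumption~\ref{assumption::support} guarantees that every ratio is well defined on the relevant supports.

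\textbf{Step 2 (subtract native).} By Claim~\ref{claim::mi-score-identity}, \(I_{p_\theta}=\mathbb E_V\sum_y m_{t,V}(y)\psi_t(V,y)\). Subtracting this from Step 1 yields Eq.~\eqref{eq:exact_decomp_kl} with \(G_{1,t}\) as in Eq.~\eqref{eq:first_order_gain}.

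\textbf{Step 3 (one-sided bound).} It suffices to show
\[
\KL(p^{q_t}_{\mathcal M,t}\|p_{\mathcal M,t})\le\mathbb E_V\KL(q_t\|m_{t,V}).
\]
By the chain rule, and because \(\widetilde p^{q_t}_{t,V}\) and \(\widetilde p_{t,V}\) share the same continuation kernel, \(\KL(\widetilde p^{q_t}_{t,V}\|\widetilde p_{t,V})=\KL(q_t\|m_{t,V})\). The two mixtures \(p^{q_t}_{\mathcal M,t}\) and \(p_{\mathcal M,t}\) are averages of \(\widetilde p^{q_t}_{t,V}\) and \(\widetilde p_{t,V}\) under the same law \(V\mid\widetilde h_t\). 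Joint convexity of KL then gives the required inequality. This yields \(\Delta_t\ge G_{1,t}\), i.e.\ Eq.~\eqref{eq:lemma-b1-onesided}.

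The main obstacle is the bookkeeping in Step 1. One must check that the cross term averages against the intervened mixture \(p^{q_t}_{\mathcal M,t}\), not the native one, so that it collapses exactly to a KL. One must also check that the zero-probability conventions are consistent, which relies on \(q_t\ll m_{t,V}\).
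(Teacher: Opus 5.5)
Your proposal is correct and follows essentially the same route as the paper. You write $I_{q_t}$ as an averaged KL to the intervened mixture $p^{q_t}_{\mathcal M,t}$, insert the fixed reference $p_{\mathcal M,t}$ to split the log-ratio into the $\psi_t$ term, $\KL(q_t\|m_{t,V})$ and $-\KL(p^{q_t}_{\mathcal M,t}\|p_{\mathcal M,t})$, subtract the native identity of Claim~\ref{claim::mi-score-identity}, and obtain the one-sided bound from joint convexity plus cancellation of the shared rollback kernel (your two-stage insertion is only a cosmetic variant of the paper's single three-factor ratio product).
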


\begin{proof}
Throughout the proof, write
\[
\mathbb E[\cdot]
=
\mathbb E_{V\mid \widetilde h_t}[\cdot],
\qquad
q_V(y)
=
q_t(y\mid V,\widetilde h_t).
\]
Under the one-step intervention,
\[
\widetilde p_{t,V}^{q_t}(y,y_{>t})
=
q_V(y)K_{t,V}(y_{>t}\mid y),
\]
and
\[
p_{\mathcal M,t}^{q_t}(y,y_{>t})
=
\mathbb E
\left[
\widetilde p_{t,V}^{q_t}(y,y_{>t})
\right].
\]
Thus
\[
I_{q_t}(V;Y_{\ge t}\mid \widetilde h_t)
=
\mathbb E
\left[
\sum_{y,y_{>t}}
\widetilde p_{t,V}^{q_t}(y,y_{>t})
\log
\frac{
\widetilde p_{t,V}^{q_t}(y,y_{>t})
}{
p_{\mathcal M,t}^{q_t}(y,y_{>t})
}
\right].
\]
Insert the native masked-history ratio:
\[
\frac{
\widetilde p_{t,V}^{q_t}(y,y_{>t})
}{
p_{\mathcal M,t}^{q_t}(y,y_{>t})
}
=
\frac{
\widetilde p_{t,V}(y,y_{>t})
}{
p_{\mathcal M,t}(y,y_{>t})
}
\cdot
\frac{
q_V(y)
}{
m_{t,V}(y)
}
\cdot
\frac{
p_{\mathcal M,t}(y,y_{>t})
}{
p_{\mathcal M,t}^{q_t}(y,y_{>t})
}.
\]
Splitting the logarithm gives three terms. The first term equals
\[
\mathbb E
\left[
\sum_y
q_V(y)\psi_t(V,y)
\right].
\]
The second term equals
\[
\mathbb E
\left[
\KL(q_V\|m_{t,V})
\right].
\]
The third term equals
\[
-
\KL(p_{\mathcal M,t}^{q_t}\|p_{\mathcal M,t}).
\]
Therefore,
\[
I_{q_t}(V;Y_{\ge t}\mid \widetilde h_t)
=
\mathbb E
\left[
\sum_y
q_V(y)\psi_t(V,y)
\right]
+
\mathbb E
\left[
\KL(q_V\|m_{t,V})
\right]
-
\KL(p_{\mathcal M,t}^{q_t}\|p_{\mathcal M,t}).
\]
Under the native choice, \(q_V=m_{t,V}\) and
\(p_{\mathcal M,t}^{q_t}=p_{\mathcal M,t}\). By
Claim~\ref{claim::mi-score-identity},
\[
I_{p_\theta}(V;Y_{\ge t}\mid \widetilde h_t)
=
\mathbb E
\left[
\sum_y
m_{t,V}(y)\psi_t(V,y)
\right].
\]
Subtracting proves Eq.~\eqref{eq:exact_decomp_kl}.

It remains to show the one-sided bound. By joint convexity of KL divergence,
\[
\KL(p_{\mathcal M,t}^{q_t}\|p_{\mathcal M,t})
=
\KL\!\left(
\mathbb E[\widetilde p_{t,V}^{q_t}]
\,\middle\|\,
\mathbb E[\widetilde p_{t,V}]
\right)
\le
\mathbb E
\left[
\KL(\widetilde p_{t,V}^{q_t}\|\widetilde p_{t,V})
\right].
\]
Because the rollback continuation \(K_{t,V}(\cdot\mid y)\) is the same in
\(\widetilde p_{t,V}^{q_t}\) and \(\widetilde p_{t,V}\), the continuation cancels in
the inner KL:
\[
\KL(\widetilde p_{t,V}^{q_t}\|\widetilde p_{t,V})
=
\KL(q_V\|m_{t,V}).
\]
Therefore,
\[
\KL(p_{\mathcal M,t}^{q_t}\|p_{\mathcal M,t})
\le
\mathbb E
\left[
\KL(q_V\|m_{t,V})
\right].
\]
Substituting this inequality into the exact decomposition yields
\[
\Delta_t(q_t;\mathcal A_{<t})
\ge
G_{1,t}(q_t;\mathcal A_{<t}).
\]
\end{proof}


\begin{lemma}[Optimal first-order perturbation and variance lower bound]
\label{lem:kl-opt-and-var-lb}
Fix a candidate set \(\mathcal A\), a step \(t\), and the induced masked history
\(\widetilde h_t=\widetilde h_t(\mathcal A_{<t})\). Assume
Assumptions~\ref{assumption::support} and \ref{assumption::bounded-ratio}. For each
visual realization \(v\), define
\[
\mu_t(v)
=
\sum_y m_{t,v}(y)\psi_t(v,y),
\qquad
\widetilde\psi_t(v,y)
=
\psi_t(v,y)-\mu_t(v),
\]
and
\[
\sigma_t(v)
=
\sqrt{
\Var_{Y\sim m_{t,v}}
\big(
\psi_t(v,Y)
\big)
}.
\]
Define the averaged conditional score standard deviation
\begin{equation}
\label{eq:Sigma_bar_def}
\overline\Sigma_t(\mathcal A_{<t})
=
\mathbb E_{V\mid \widetilde h_t(\mathcal A_{<t})}
\left[
\sigma_t(V)
\right].
\end{equation}
Let
\[
G_{1,t}^\star(\mathcal A_{<t},\varepsilon)
=
\sup_{q_t\in\mathcal Q_t(\mathcal A_{<t},\varepsilon)}
G_{1,t}(q_t;\mathcal A_{<t}).
\]
Then the following statements hold.

\textbf{(i) Pointwise first-order optimizer.}
For \(V\mid\widetilde h_t\)-almost every \(v\), any finite interior optimizer of the
pointwise first-order problem has the exponential-tilting form
\begin{equation}
\label{eq:qstar_tilt_final}
q_t^\star(y\mid v,\widetilde h_t)
=
\frac{
m_{t,v}(y)
\exp\!\big(\eta_t^\star(v)\psi_t(v,y)\big)
}{
\sum_{y'}
m_{t,v}(y')
\exp\!\big(\eta_t^\star(v)\psi_t(v,y')\big)
}
=
\frac{
m_{t,v}(y)
\exp\!\big(\eta_t^\star(v)\widetilde\psi_t(v,y)\big)
}{
\sum_{y'}
m_{t,v}(y')
\exp\!\big(\eta_t^\star(v)\widetilde\psi_t(v,y')\big)
},
\end{equation}
where \(\eta_t^\star(v)\ge0\). If \(\sigma_t(v)=0\), the native choice
\(q_t^\star(\cdot\mid v,\widetilde h_t)=m_{t,v}(\cdot)\) is optimal.

\textbf{(ii) Lower bound for the optimal first-order gain.}
There exists a constant \(C_B>0\), depending only on the bounded-ratio constant \(B\),
such that for every \(\varepsilon>0\),
\begin{equation}
\label{eq:lemma-b2-bound}
G_{1,t}^\star(\mathcal A_{<t},\varepsilon)
\ge
\sqrt{2\varepsilon}\,\overline\Sigma_t(\mathcal A_{<t})
-
C_B\varepsilon.
\end{equation}
\end{lemma}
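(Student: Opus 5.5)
\emph{Plan for (i).} First note that both $G_{1,t}$ and the constraint set $\mathcal Q_t(\mathcal A_{<t},\varepsilon)$ decouple across visual realizations: $G_{1,t}$ is an expectation over $V\mid\widetilde h_t$ of a functional of $q_t(\cdot\mid V,\widetilde h_t)$ alone. The constraint is also imposed $V$-a.e. Hence the supremum can be taken pointwise in $v$.

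For fixed $v$, the problem is to maximize the linear functional $\sum_y q(y)\psi_t(v,y)$ over the simplex, subject to $\KL(q\|m_{t,v})\le\varepsilon$. By Assumption~\ref{assumption::support}, $q\ll m_{t,v}$, so we may work on the support of $m_{t,v}$. There the objective is linear and the constraint is convex with Slater point $q=m_{t,v}$, so KKT conditions are necessary and sufficient. The Lagrangian is
\[
\sum_y q(y)\psi_t(v,y)-\lambda\bigl(\KL(q\|m_{t,v})-\varepsilon\bigr)-\nu\Bigl(\sum_y q(y)-1\Bigr).
\]
At any interior stationary point this gives $\log(q/m_{t,v})=\psi_t/\lambda+\text{const}$, which is the tilt in Eq.~\eqref{eq:qstar_tilt_final} with $\eta_t^\star(v)=1/\lambda\ge0$. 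Replacing $\psi_t$ by $\widetilde\psi_t$ only changes the normalizer, which yields the second form of the display. If $\sigma_t(v)=0$, then $\psi_t(v,\cdot)$ is $m_{t,v}$-a.s.\ constant. In that case $\sum_y(q-m_{t,v})\psi_t=0$ for every feasible $q$, so $q=m_{t,v}$ is optimal.

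\emph{Plan for (ii): a feasible witness.} I will not solve for $\eta^\star$. Instead I will exhibit, for each $v$, a feasible tilt $q_\eta\propto m_{t,v}e^{\eta\widetilde\psi_t}$ and bound its gain from below. Set $\Lambda_v(\eta)=\log\mathbb E_{m_{t,v}}[e^{\eta\widetilde\psi_t}]$. Standard cumulant identities give
\[
\sum_y(q_\eta-m_{t,v})\psi_t=\Lambda_v'(\eta)=\int_0^\eta\Lambda_v''(s)\,ds,
\qquad
\KL(q_\eta\|m_{t,v})=\eta\Lambda_v'(\eta)-\Lambda_v(\eta)=\int_0^\eta s\,\Lambda_v''(s)\,ds .
\]
By Assumption~\ref{assumption::bounded-ratio}, $|\psi_t|\le B$. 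Indeed, $\psi_t$ is an average of log-ratios bounded by $B$. Therefore $|\widetilde\psi_t|\le 2B$. Since $\Lambda_v''(s)=\Var_{q_s}(\psi_t)$ and $\Lambda_v'''(s)$ is a third central moment under $q_s$, we get $|\Lambda_v'''|\le (4B)^3=:c_B$. Consequently
\[
\sigma_t(v)^2-c_Bs\;\le\;\Lambda_v''(s)\;\le\;\sigma_t(v)^2+c_Bs .
\]
This yields two bounds:
\[
\text{gain}\ge\eta\sigma^2-\tfrac{c_B}{2}\eta^2,
\qquad
\KL\le\tfrac{\eta^2\sigma^2}{2}+\tfrac{c_B}{3}\eta^3 .
\]

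\emph{Main obstacle: choosing $\eta(v)$.} The difficulty is to choose $\eta(v)$ so that $q_\eta$ is exactly feasible while the gain is still at least $\sqrt{2\varepsilon}\sigma-C_B\varepsilon$, uniformly even when $\sigma$ is small and the natural $\eta=\sqrt{2\varepsilon}/\sigma$ blows up. I would split into two cases, with a constant $C_B$ to be tuned.

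\textbf{Case 1: $\sigma_t(v)\le C_B\sqrt{\varepsilon/2}$.} Then $\sqrt{2\varepsilon}\sigma_t(v)-C_B\varepsilon\le0$. Choose $q=m_{t,v}$, which is feasible with gain $0$, so the pointwise bound holds.

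\textbf{Case 2: $\sigma_t(v)>C_B\sqrt{\varepsilon/2}$.} Choose $\eta=(1-\kappa)\sqrt{2\varepsilon}/\sigma$ with $\kappa=a\sqrt\varepsilon/\sigma$. In this regime $\eta\le 2/C_B$ is bounded and $\sqrt\varepsilon/\sigma\le\sqrt2/C_B$. The cubic term in the KL bound is then at most $(c_B/3)(2\varepsilon)^{3/2}/\sigma^3$, which is $O(\varepsilon\cdot\sqrt\varepsilon/\sigma)$. It can be absorbed by the slack $\varepsilon(1-(1-\kappa)^2)\approx2\kappa\varepsilon$ once $a$ is a suitable multiple of $c_B$. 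The gain is then at least
\[
(1-\kappa)\sqrt{2\varepsilon}\,\sigma-\tfrac{c_B}{2}\eta^2
=\sqrt{2\varepsilon}\,\sigma-\sqrt2\,a\,\varepsilon-O(\varepsilon/\sigma^2)\,c_B .
\]
The last term is again $O(\varepsilon)$, because $\varepsilon/\sigma^2\le 2/C_B^2$ in this case. Choosing $C_B$ large relative to $B$ closes both estimates. I expect this constant bookkeeping to be the delicate part.

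\emph{Assembling the bound.} The resulting $v\mapsto q_{\eta(v)}$ is measurable, because it is built from measurable functions of $v$, and it lies in $\mathcal Q_t(\mathcal A_{<t},\varepsilon)$. Taking $\mathbb E_{V\mid\widetilde h_t}$ of the pointwise bound, and using linearity together with Eq.~\eqref{eq:Sigma_bar_def}, gives Eq.~\eqref{eq:lemma-b2-bound}.
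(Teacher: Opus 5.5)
Your part (i) and the overall structure of part (ii) match the paper: a tilted witness $q_\eta\propto m_{t,v}e^{\eta\widetilde\psi_t}$, the cumulant identities for the gain and the KL, and a case split in which the native choice covers $\sigma_t(v)\lesssim\sqrt\varepsilon$. The gap is in Case 2, and it is more than constant bookkeeping. The additive control $|\Lambda_v'''|\le c_B$ cannot give an error of order $\varepsilon$ uniformly in $\sigma=\sigma_t(v)$.

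To see this, take $\eta\approx\sqrt{2\varepsilon}/\sigma$. The cubic KL term is then $\tfrac{c_B}{3}\eta^3\approx\tfrac{2^{3/2}c_B}{3}\,\varepsilon^{3/2}/\sigma^{3}$. You dropped a factor $\sigma^{-2}$ when you called it $O(\varepsilon^{3/2}/\sigma)$. The available slack is only $\varepsilon(2\kappa-\kappa^2)\approx 2a\,\varepsilon^{3/2}/\sigma$. Absorbing the cubic term therefore forces $a\gtrsim c_B/\sigma^2$, not a constant multiple of $c_B$. The loss $\sqrt2\,a\varepsilon$ is then of order $c_B\varepsilon/\sigma^2$, and so is the gain-side term $\tfrac{c_B}{2}\eta^2$. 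Knowing $\varepsilon/\sigma^2\le 2/C_B^2$ only makes these terms $O(1)$, not $O(\varepsilon)$.

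A concrete case shows the failure. Take $\sigma^2=\sqrt\varepsilon$, which lies in your Case 2 for small $\varepsilon$. The target $\sqrt{2\varepsilon}\,\sigma-C_B\varepsilon$ is then of order $\varepsilon^{3/4}$. But $\eta\sigma^2-\tfrac{c_B}{2}\eta^2\le\sigma^4/(2c_B)=\varepsilon/(2c_B)$ for every $\eta$. So your estimates cannot certify the bound there, whatever you choose for $\eta$ or for the case threshold.

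The missing idea is relative (multiplicative) curvature control. With $M=2B$ you have $|\widetilde\psi_t-\mathbb E_{q_s}\widetilde\psi_t|\le 2M$, so the third central moment satisfies $|\Lambda_v'''(s)|\le 2M\,\Lambda_v''(s)$. This gives $e^{-2Ms}\sigma^2\le\Lambda_v''(s)\le e^{2Ms}\sigma^2$. The paper reaches the same sandwich directly from the density-ratio bound $q_{v,\eta}/m_{t,v}\in[e^{-2M\eta},e^{2M\eta}]$.

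For $M\eta\le 1/2$ this yields
\[
\text{gain}\ge\sigma^2(\eta-M\eta^2),
\qquad
\KL(q_{v,\eta}\|m_{t,v})\le\tfrac{\sigma^2\eta^2}{2}\bigl(1+\tfrac{8}{3}M\eta\bigr).
\]
Now every error term carries a factor $\sigma^2$. With $a=\sqrt{2\varepsilon}/\sigma$ one has $\sigma^2a^2=2\varepsilon$, which turns these terms into $O(\varepsilon)$.

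The paper then takes $\eta=a\bigl(1-\tfrac{8}{3}Ma\bigr)$ when $Ma\le 3/8$. Feasibility follows from $(1-z)^2(1+z)\le 1$, and the gain is at least $\sqrt{2\varepsilon}\,\sigma-2M\bigl(\tfrac{8}{3}+1\bigr)\varepsilon$. The complementary regime $Ma>3/8$ is handled by the native choice, as in your Case 1.
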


\begin{proof}
For a fixed visual realization \(v\), define the pointwise first-order objective
\[
g_v(q)
=
\sum_y
\left(
q(y)-m_{t,v}(y)
\right)
\psi_t(v,y)
\]
under the constraint
\[
\KL(q\|m_{t,v})\le\varepsilon.
\]
Because the KL constraint is pointwise in \(v\), the first-order optimization separates
across visual realizations.

For fixed \(v\), maximizing \(g_v(q)\) is equivalent to maximizing
\[
\sum_y q(y)\psi_t(v,y)
\]
subject to the KL constraint. The Lagrangian calculation gives, for some
\(\lambda_v>0\),
\[
q^\star(y)
\propto
m_{t,v}(y)
\exp\!\left(\frac{\psi_t(v,y)}{\lambda_v}\right).
\]
Writing \(\eta_t^\star(v)=1/\lambda_v\) gives the exponential-tilting form in
Eq.~\eqref{eq:qstar_tilt_final}. Adding the \(v\)-dependent constant \(-\mu_t(v)\) to
the score does not change the normalized distribution, so the centered form is
equivalent. If \(\sigma_t(v)=0\), then \(g_v(q)=0\) for all feasible \(q\), and the
native choice is optimal. This proves Part (i).

We now prove Part (ii). By Assumption~\ref{assumption::bounded-ratio}, for every
\(y\in\supp(m_{t,v})\),
\[
|\psi_t(v,y)|\le B.
\]
Therefore the centered score satisfies
\[
|\widetilde\psi_t(v,y)|\le 2B.
\]
Let
\[
M=2B.
\]
For a fixed \(v\), consider the centered exponential family
\[
q_{v,\eta}(y)
=
\frac{
m_{t,v}(y)\exp\!\big(\eta\widetilde\psi_t(v,y)\big)
}{
\sum_{y'}
m_{t,v}(y')\exp\!\big(\eta\widetilde\psi_t(v,y')\big)
},
\qquad
\eta\ge0.
\]
Let
\[
\Lambda_v(\eta)
=
\log
\sum_y
m_{t,v}(y)\exp\!\big(\eta\widetilde\psi_t(v,y)\big).
\]
Then
\[
\Lambda_v'(0)=0,
\qquad
\Lambda_v''(0)=\sigma_t^2(v),
\]
and
\[
\Lambda_v'(\eta)
=
\mathbb E_{Y\sim q_{v,\eta}}
\left[
\widetilde\psi_t(v,Y)
\right],
\qquad
\Lambda_v''(\eta)
=
\Var_{Y\sim q_{v,\eta}}
\left[
\widetilde\psi_t(v,Y)
\right].
\]

Since \(|\widetilde\psi_t|\le M\), the density ratio between \(q_{v,\eta}\) and
\(m_{t,v}\) is bounded between \(e^{-2M\eta}\) and \(e^{2M\eta}\). Hence
\[
e^{-2M\eta}\sigma_t^2(v)
\le
\Lambda_v''(\eta)
\le
e^{2M\eta}\sigma_t^2(v).
\]
For \(0\le \eta\le 1/(2M)\), using \(e^{-x}\ge 1-x\), we obtain
\[
\Lambda_v'(\eta)
=
\int_0^\eta \Lambda_v''(u)\,du
\ge
\sigma_t^2(v)\int_0^\eta (1-2Mu)\,du
=
\sigma_t^2(v)\bigl(\eta-M\eta^2\bigr).
\]
Moreover,
\[
\KL(q_{v,\eta}\|m_{t,v})
=
\eta\Lambda_v'(\eta)-\Lambda_v(\eta)
=
\int_0^\eta u\,\Lambda_v''(u)\,du.
\]
For \(0\le u\le 1/(2M)\), we have \(e^{2Mu}\le 1+4Mu\). Therefore,
\[
\KL(q_{v,\eta}\|m_{t,v})
\le
\sigma_t^2(v)
\int_0^\eta u(1+4Mu)\,du
=
\sigma_t^2(v)
\left(
\frac{\eta^2}{2}
+
\frac{4M\eta^3}{3}
\right).
\]
Equivalently,
\[
\KL(q_{v,\eta}\|m_{t,v})
\le
\frac{\sigma_t^2(v)\eta^2}{2}
\left(
1+\frac{8M\eta}{3}
\right).
\]

If \(\sigma_t(v)=0\), the desired lower bound holds because the native choice gives
first-order gain zero. Assume \(\sigma_t(v)>0\). Define
\[
a_v=\frac{\sqrt{2\varepsilon}}{\sigma_t(v)},
\qquad
s_v=M a_v,
\qquad
d_0=\frac{8}{3}.
\]
First consider the small-radius regime \(s_v\le 1/d_0\). Choose
\[
\eta_v=a_v(1-d_0s_v).
\]
Then \(0\le\eta_v\le a_v\) and \(M\eta_v\le s_v\le 1/d_0<1/2\). Hence the preceding
KL bound applies. Since \(M\eta_v\le s_v\),
\[
\KL(q_{v,\eta_v}\|m_{t,v})
\le
\frac{\sigma_t^2(v)a_v^2}{2}
(1-d_0s_v)^2(1+d_0s_v).
\]
Because \(\sigma_t^2(v)a_v^2/2=\varepsilon\) and
\[
(1-z)^2(1+z)\le1
\qquad
\text{for }0\le z\le1,
\]
with \(z=d_0s_v\), we have
\[
\KL(q_{v,\eta_v}\|m_{t,v})\le\varepsilon.
\]
Thus \(q_{v,\eta_v}\) is feasible. Its first-order gain satisfies
\[
g_v(q_{v,\eta_v})
=
\Lambda_v'(\eta_v)
\ge
\sigma_t^2(v)(\eta_v-M\eta_v^2).
\]
Using \(\eta_v=a_v(1-d_0s_v)\) and \(s_v=Ma_v\), we obtain
\[
g_v(q_{v,\eta_v})
\ge
\sqrt{2\varepsilon}\,\sigma_t(v)
-
2M(d_0+1)\varepsilon.
\]

Now consider the complementary regime \(s_v>1/d_0\). Since
\(a_v=\sqrt{2\varepsilon}/\sigma_t(v)\), this implies
\[
\sqrt{2\varepsilon}\,\sigma_t(v)
=
\frac{2\varepsilon}{a_v}
<
2Md_0\,\varepsilon.
\]
Choosing \(C_B\ge 2M\max\{d_0+1,d_0\}\), the right-hand side
\[
\sqrt{2\varepsilon}\,\sigma_t(v)-C_B\varepsilon
\]
is nonpositive. Since the native choice \(q=m_{t,v}\) is feasible and gives
\(g_v(q)=0\), the desired lower bound also holds in this regime.

Combining both regimes, for all \(v\),
\[
g_v^\star(\varepsilon)
\ge
\sqrt{2\varepsilon}\,\sigma_t(v)-C_B\varepsilon,
\]
where \(C_B\) depends only on \(B\), since \(M=2B\). Taking expectation over
\(V\mid\widetilde h_t(\mathcal A_{<t})\) yields
\[
G_{1,t}^\star(\mathcal A_{<t},\varepsilon)
\ge
\sqrt{2\varepsilon}\,
\mathbb E_{V\mid\widetilde h_t(\mathcal A_{<t})}[\sigma_t(V)]
-
C_B\varepsilon
=
\sqrt{2\varepsilon}\,\overline\Sigma_t(\mathcal A_{<t})
-
C_B\varepsilon.
\]
\end{proof}


\begin{lemma}[Entropy--reference-gap lower bound]
\label{lem:sigma-entropy-kl-lb}
Fix a candidate set \(\mathcal A\), a step \(t\), and the induced masked history
\(\widetilde h_t=\widetilde h_t(\mathcal A_{<t})\). Assume
Assumptions~\ref{assumption::support}, \ref{assumption::bounded-ratio}, and
\ref{assumption::generic}. Then
\begin{equation}
\label{eq:sigma_entropy_kl_lb}
\overline\Sigma_t(\mathcal A_{<t})
\ge
\frac{\gamma}{\sqrt{2\pi e\,B}}
\mathbb E_{V\mid \widetilde h_t(\mathcal A_{<t})}
\left[
\left(
\exp\!\big(\mathcal H(m_{t,V})\big)-2
\right)_+
\sqrt{
\KL\!\left(
p_{t,V}
\,\middle\|\,
p_{\mathcal M,t}
\right)
}
\right].
\end{equation}
\end{lemma}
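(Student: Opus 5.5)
We prove Eq.~\eqref{eq:sigma_entropy_kl_lb} pointwise in $v$ and then average over $V\mid\widetilde h_t$. Concretely, for $V\mid\widetilde h_t$-a.e.\ $v$ we aim to show
\[
\sigma_t(v)\ \ge\ \frac{\gamma}{\sqrt{2\pi e\,B}}\,\bigl(e^{\mathcal H(m_{t,v})}-2\bigr)_+\sqrt{\KL(p_{t,v}\|p_{\mathcal M,t})}.
\]
Taking $\mathbb E_{V\mid\widetilde h_t}$ of both sides then gives the lemma by Eq.~\eqref{eq:Sigma_bar_def}. If $e^{\mathcal H(m_{t,v})}\le 2$ the right-hand side vanishes and there is nothing to prove, so assume $e^{\mathcal H(m_{t,v})}>2$. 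We split the pointwise bound into two factors: a variance-versus-entropy bound coming from score separation, and the bounded reference gap.

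\textbf{Step 1: variance lower bound from separation and entropy.} By Assumption~\ref{assumption::generic}, the scores $\psi_t(v,\cdot)$ on $\supp(m_{t,v})$ are distinct and pairwise $\gamma$-separated. Push $m_{t,v}$ forward through $y\mapsto\psi_t(v,y)$. Because the map is injective on the support, the law $Z=\psi_t(v,Y)$ is discrete with entropy exactly $\mathcal H(m_{t,v})$, and its atoms are $\gamma$-separated.

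Next, smooth $Z$ by adding an independent uniform variable $U\sim\mathrm{Unif}[0,\gamma)$. Since the atoms are $\gamma$-separated, the translated intervals are disjoint. Hence $Z+U$ has differential entropy $\mathcal H(m_{t,v})+\log\gamma$ and variance $\sigma_t(v)^2+\gamma^2/12$.

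The Gaussian maximum-entropy inequality then gives
\[
\mathcal H(m_{t,v})+\log\gamma\ \le\ \tfrac12\log\!\bigl(2\pi e(\sigma_t^2+\gamma^2/12)\bigr),
\]
that is,
\[
2\pi e\,\sigma_t^2\ \ge\ \gamma^2\bigl(e^{2\mathcal H}-\pi e/6\bigr).
\]
Since $\pi e/6<2$, we can weaken this to $\sigma_t\ge \gamma\sqrt{e^{2\mathcal H}-2}/\sqrt{2\pi e}$. Finally, $e^{2\mathcal H}-2\ge (e^{\mathcal H}-2)^2$ whenever $e^{\mathcal H}\ge 1$, because $(x-2)^2=x^2-4x+4\le x^2-2$ for $x\ge 3/2$, and we are in the case $x>2$. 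This yields
\[
\sigma_t(v)\ \ge\ \frac{\gamma}{\sqrt{2\pi e}}\,\bigl(e^{\mathcal H(m_{t,v})}-2\bigr)_+.
\]

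\textbf{Step 2: absorb the reference gap.} By the second part of Assumption~\ref{assumption::bounded-ratio}, $0\le\KL(p_{t,v}\|p_{\mathcal M,t})\le B$, so $\sqrt{\KL(p_{t,v}\|p_{\mathcal M,t})/B}\le 1$. Multiplying the right-hand side of Step 1 by this factor can only decrease it, which gives exactly the pointwise bound stated at the start. Averaging over $V\mid\widetilde h_t$ completes the argument.

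The main obstacle is Step 1, the entropy-to-variance conversion for a discrete score. It needs the smoothing trick together with care that separation makes the uniform-jittered density equal $m_{t,v}(y)/\gamma$ on disjoint intervals. It also needs the constant bookkeeping around the $\gamma^2/12$ term, which is what the "$-2$" in the statement is designed to absorb. As a fallback, one can use the discrete bound $\Var\ge\gamma^2(N^2-1)/12$ over the $N$ support points together with $N\ge e^{\mathcal H}$. However, this bounds only the support size, not the probability mass on it, so the differential-entropy route is the one to pursue.
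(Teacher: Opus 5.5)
Your proposal is correct and follows the paper's proof essentially step for step. Both arguments jitter the $\gamma$-separated score $Z=\psi_t(v,Y)$ by $\mathrm{Unif}[0,\gamma)$, apply the Gaussian maximum-entropy bound to get $\sigma_t^2(v)\ge \frac{\gamma^2}{2\pi e}\bigl(e^{2\mathcal H(m_{t,v})}-\pi e/6\bigr)$, and absorb the reference gap using $\KL(p_{t,v}\|p_{\mathcal M,t})/B\le 1$.

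The differences are cosmetic. You take square roots before inserting the KL factor, and you use $x^2-2\ge (x-2)^2$ where the paper uses $\sqrt{(x-c)_+}\ge(\sqrt x-\sqrt c)_+$ with $\sqrt{\pi e/6}<2$. Your inequality holds for $x\ge 3/2$, not for all $x\ge 1$ as your first phrasing says, but this is harmless since you only need the case $x>2$.
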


\begin{remark}
Lemma~\ref{lem:sigma-entropy-kl-lb} expresses \(\overline\Sigma_t(\mathcal A_{<t})\)
as a product of an entropy-based factor and a reference-gap factor. The entropy factor
arises from a variance--entropy inequality applied to \(\psi_t(V,Y)\). The reference-gap
factor is introduced through Assumption~\ref{assumption::bounded-ratio}: since
\[
\KL(p_{t,V}\|p_{\mathcal M,t})/B\in[0,1],
\]
multiplying any nonnegative lower bound on \(\sigma_t^2(V)\) by this ratio yields a
possibly looser lower bound on the same quantity. The resulting factored form is
therefore conservative: it is tight in the entropy factor but loose in the reference-gap
factor, and reflects the operational principle that branching room is useful only when
the corresponding continuation remains distinguishable from the vision-marginalized
reference.
\end{remark}

\begin{proof}
Fix a candidate set \(\mathcal A\), a step \(t\), the induced masked history
\(\widetilde h_t\), and a visual realization \(v\). Let \(Y\sim m_{t,v}\), and define
\[
Z=\psi_t(v,Y).
\]
By Assumption~\ref{assumption::generic}, distinct values of \(Z\) are separated by at
least \(\gamma\). Let \(U\sim\operatorname{Unif}[0,\gamma)\) be independent of \(Z\),
and define
\[
W=Z+U.
\]
Since the support points of \(Z\) are \(\gamma\)-separated, the intervals
\([z,z+\gamma)\) are disjoint up to endpoints. Hence
\[
h(W)=\mathcal H(Z)+\log\gamma,
\]
where \(h(\cdot)\) denotes differential entropy. By the maximum-entropy property of
Gaussians,
\[
\Var(W)
\ge
\frac{1}{2\pi e}\exp(2h(W)).
\]
Since \(Z\) and \(U\) are independent,
\[
\Var(W)=\Var(Z)+\frac{\gamma^2}{12}.
\]
Therefore,
\[
\Var(Z)+\frac{\gamma^2}{12}
\ge
\frac{\gamma^2}{2\pi e}\exp(2\mathcal H(Z)).
\]
Because \(Z=\psi_t(v,Y)\) is a deterministic function of \(Y\), and
Assumption~\ref{assumption::generic} makes this function injective on
\(\supp(m_{t,v})\),
\[
\mathcal H(Z)=\mathcal H(Y)=\mathcal H(m_{t,v}).
\]
Thus
\[
\Var_{Y\sim m_{t,v}}
\big(
\psi_t(v,Y)
\big)
\ge
\left[
\frac{\gamma^2}{2\pi e}
\exp(2\mathcal H(m_{t,v}))
-
\frac{\gamma^2}{12}
\right]_+.
\]
Equivalently,
\[
\Var_{Y\sim m_{t,v}}
\big(
\psi_t(v,Y)
\big)
\ge
\frac{\gamma^2}{2\pi e}
\left[
\exp(2\mathcal H(m_{t,v}))
-
\frac{\pi e}{6}
\right]_+.
\]
By Assumption~\ref{assumption::bounded-ratio},
\[
0
\le
\KL(p_{t,v}\|p_{\mathcal M,t})
\le B.
\]
Therefore, for any nonnegative \(A_0\),
\[
A_0
\ge
\frac{\KL(p_{t,v}\|p_{\mathcal M,t})}{B}A_0.
\]
Applying this with
\[
A_0
=
\frac{\gamma^2}{2\pi e}
\left[
\exp(2\mathcal H(m_{t,v}))
-
\frac{\pi e}{6}
\right]_+
\]
gives
\[
\Var_{Y\sim m_{t,v}}
\big(
\psi_t(v,Y)
\big)
\ge
\frac{\gamma^2}{2\pi e\,B}
\left[
\exp(2\mathcal H(m_{t,v}))
-
\frac{\pi e}{6}
\right]_+
\KL(p_{t,v}\|p_{\mathcal M,t}).
\]
Taking square roots yields
\[
\sigma_t(v)
\ge
\frac{\gamma}{\sqrt{2\pi e\,B}}
\sqrt{
\left[
\exp(2\mathcal H(m_{t,v}))
-
\frac{\pi e}{6}
\right]_+
\KL(p_{t,v}\|p_{\mathcal M,t})
}.
\]
Using
\[
\sqrt{(x-c)_+}
\ge
(\sqrt{x}-\sqrt c)_+,
\qquad x\ge0,\ c>0,
\]
with
\[
x=\exp(2\mathcal H(m_{t,v})),
\qquad
c=\frac{\pi e}{6},
\]
and using \(\sqrt{\pi e/6}<2\), we obtain
\[
\sigma_t(v)
\ge
\frac{\gamma}{\sqrt{2\pi e\,B}}
\left(
\exp(\mathcal H(m_{t,v}))-2
\right)_+
\sqrt{
\KL(p_{t,v}\|p_{\mathcal M,t})
}.
\]
Averaging over \(V\mid\widetilde h_t(\mathcal A_{<t})\) proves
Eq.~\eqref{eq:sigma_entropy_kl_lb}.
\end{proof}


\begin{lemma}[Local downstream visual-gain lower bound]
\label{lem:local-gain-lower-bound}
Fix a candidate set \(\mathcal A\), a step \(t\), and the induced masked history
\(\widetilde h_t=\widetilde h_t(\mathcal A_{<t})\). Assume
Assumptions~\ref{assumption::support}, \ref{assumption::bounded-ratio}, and
\ref{assumption::generic}. Then
\begin{equation}
\label{eq:local_entropy_gain_bound}
\Delta_t^\star(\mathcal A_{<t},\varepsilon)
\ge
\frac{\gamma\sqrt{\varepsilon}}{\sqrt{\pi e\,B}}
\mathbb E_{V\mid \widetilde h_t(\mathcal A_{<t})}
\left[
\left(
\exp\!\big(\mathcal H(m_{t,V})\big)-2
\right)_+
\sqrt{
\KL\!\left(
p_{t,V}
\,\middle\|\,
p_{\mathcal M,t}
\right)
}
\right]
-
C_B\varepsilon.
\end{equation}
\end{lemma}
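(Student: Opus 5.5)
The plan is to chain the three preceding lemmas. The estimate runs from the true gain, to its first-order part, to the score standard deviation, and finally to the entropy--reference-gap product.

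First, take an arbitrary feasible $q_t\in\mathcal Q_t(\mathcal A_{<t},\varepsilon)$. Lemma~\ref{lem:gain-first-order-kl} gives $\Delta_t(q_t;\mathcal A_{<t})\ge G_{1,t}(q_t;\mathcal A_{<t})$. Taking the supremum over the same feasible set on both sides then gives
\[
\Delta_t^\star(\mathcal A_{<t},\varepsilon)\ge G_{1,t}^\star(\mathcal A_{<t},\varepsilon).
\]
Both suprema range over the identical set $\mathcal Q_t(\mathcal A_{<t},\varepsilon)$, so no further argument is needed here. Only Assumption~\ref{assumption::support} is used at this stage.

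Second, apply Lemma~\ref{lem:kl-opt-and-var-lb}(ii), which uses Assumptions~\ref{assumption::support} and \ref{assumption::bounded-ratio}. It yields
\[
G_{1,t}^\star(\mathcal A_{<t},\varepsilon)\ge\sqrt{2\varepsilon}\,\overline\Sigma_t(\mathcal A_{<t})-C_B\varepsilon,
\]
with the same constant $C_B$ that appears in the statement.

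Third, substitute the lower bound on $\overline\Sigma_t(\mathcal A_{<t})$ from Lemma~\ref{lem:sigma-entropy-kl-lb}, which additionally invokes Assumption~\ref{assumption::generic}. The coefficient $\sqrt{2\varepsilon}$ is nonnegative, so multiplying preserves the direction of the inequality. The constants then simplify as
\[
\sqrt{2\varepsilon}\cdot\frac{\gamma}{\sqrt{2\pi e B}}=\frac{\gamma\sqrt{\varepsilon}}{\sqrt{\pi e B}},
\]
which is exactly the prefactor in Eq.~\eqref{eq:local_entropy_gain_bound}. The $-C_B\varepsilon$ term carries through unchanged.

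There is no real obstacle; the argument is bookkeeping. The only points to check are that the expectations are all over the same $V\mid\widetilde h_t(\mathcal A_{<t})$, so the lemmas compose directly, and that each lemma is used under its stated assumptions. The $\Omega(\cdot)$ form in the main text then follows for small $\varepsilon$, where the $\sqrt{\varepsilon}$ term dominates $C_B\varepsilon$.
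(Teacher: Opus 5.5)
Your proposal is correct and follows the paper's proof exactly: you pass from $\Delta_t^\star$ to $G_{1,t}^\star$ using the one-sided bound of Lemma~\ref{lem:gain-first-order-kl} under the supremum over $\mathcal Q_t(\mathcal A_{<t},\varepsilon)$, apply Lemma~\ref{lem:kl-opt-and-var-lb}(ii), and substitute Lemma~\ref{lem:sigma-entropy-kl-lb}, with the constant simplification $\sqrt{2\varepsilon}\,\gamma/\sqrt{2\pi e B}=\gamma\sqrt{\varepsilon}/\sqrt{\pi e B}$. Your closing $\Omega(\cdot)$ remark is outside the lemma and only holds for $\varepsilon$ small relative to the expectation term, but it does not affect the argument.
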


\begin{proof}
By Lemma~\ref{lem:gain-first-order-kl}, for any
\(q_t\in\mathcal Q_t(\mathcal A_{<t},\varepsilon)\),
\[
\Delta_t(q_t;\mathcal A_{<t})
\ge
G_{1,t}(q_t;\mathcal A_{<t}).
\]
Taking the supremum over \(q_t\in\mathcal Q_t(\mathcal A_{<t},\varepsilon)\) gives
\[
\Delta_t^\star(\mathcal A_{<t},\varepsilon)
\ge
G_{1,t}^\star(\mathcal A_{<t},\varepsilon).
\]
By Lemma~\ref{lem:kl-opt-and-var-lb}(ii),
\[
G_{1,t}^\star(\mathcal A_{<t},\varepsilon)
\ge
\sqrt{2\varepsilon}\,\overline\Sigma_t(\mathcal A_{<t})
-
C_B\varepsilon.
\]
Combining the two inequalities,
\[
\Delta_t^\star(\mathcal A_{<t},\varepsilon)
\ge
\sqrt{2\varepsilon}\,\overline\Sigma_t(\mathcal A_{<t})
-
C_B\varepsilon.
\]
By Lemma~\ref{lem:sigma-entropy-kl-lb},
\[
\overline\Sigma_t(\mathcal A_{<t})
\ge
\frac{\gamma}{\sqrt{2\pi e\,B}}
\mathbb E_{V\mid \widetilde h_t(\mathcal A_{<t})}
\left[
\left(
\exp\!\big(\mathcal H(m_{t,V})\big)-2
\right)_+
\sqrt{
\KL(p_{t,V}\|p_{\mathcal M,t})
}
\right].
\]
Substituting this into the previous display yields
\[
\Delta_t^\star(\mathcal A_{<t},\varepsilon)
\ge
\frac{\gamma\sqrt{\varepsilon}}{\sqrt{\pi e\,B}}
\mathbb E_{V\mid \widetilde h_t(\mathcal A_{<t})}
\left[
\left(
\exp\!\big(\mathcal H(m_{t,V})\big)-2
\right)_+
\sqrt{
\KL(p_{t,V}\|p_{\mathcal M,t})
}
\right]
-
C_B\varepsilon,
\]
which proves the claim.
\end{proof}


\subsection{Main Theorem: Joint Sparse Selection}
\label{app::theory-main-theorem}

For each candidate set \(\mathcal A\) and each step \(t\), define the ideal local
lower-bound score
\begin{equation}
\label{eq:L_t_ideal_def}
\mathcal L_t(\mathcal A,\varepsilon)
=
\left[
\frac{\gamma\sqrt{\varepsilon}}{\sqrt{\pi e\,B}}
\mathbb E_{V\mid \widetilde h_t(\mathcal A_{<t})}
\left[
\left(
\exp\!\big(\mathcal H(m_{t,V})\big)-2
\right)_+
\sqrt{
\KL\!\left(
p_{t,V}
\,\middle\|\,
p_{\mathcal M,t}
\right)
}
\right]
-
C_B\varepsilon
\right]_+.
\end{equation}
Here \(p_{\mathcal M,t}\) and \(m_{t,V}\) are defined using the masked history
\(\widetilde h_t(\mathcal A_{<t})\). The positive part is valid because the native
choice is feasible and gives zero downstream gain.

\begin{theorem}[Joint sparse lower bound for one-step rollback gains]
\label{thm:joint-sparse-visual-retention}
Let \(K_\rho=\lfloor\rho T\rfloor\). Assume
Assumptions~\ref{assumption::support}, \ref{assumption::bounded-ratio},
\ref{assumption::generic}, and \ref{assumption::realizability}. Then the following
statements hold.

\textbf{(1) Lower bound for every candidate selected set.}
For every \(\mathcal A\subseteq[T]\) with \(|\mathcal A|\le K_\rho\),
\begin{equation}
\label{eq:set_lower_bound}
\sup_{\pi\in\Pi(\mathcal A,\varepsilon)}
\sum_{t\in\mathcal A}
\Delta_t(\pi;\mathcal A_{<t})
\ge
\sum_{t\in\mathcal A}
\mathcal L_t(\mathcal A,\varepsilon).
\end{equation}

\textbf{(2) Joint lower bound for the sparse oracle.}
The joint sparse optimum in Eq.~\eqref{eq:J_star_def} satisfies
\begin{equation}
\label{eq:joint_lower_bound}
\mathcal J^\star(\varepsilon)
\ge
\max_{\mathcal A\subseteq[T],\,|\mathcal A|\le K_\rho}
\sum_{t\in\mathcal A}
\mathcal L_t(\mathcal A,\varepsilon).
\end{equation}

\textbf{(3) Ideal lower-bound selection.}
The ideal lower-bound selected set is
\begin{equation}
\label{eq:A_LB_star_ideal}
\mathcal A_{\mathrm{LB}}^\star
\in
\arg\max_{\mathcal A\subseteq[T],\,|\mathcal A|\le K_\rho}
\sum_{t\in\mathcal A}
\mathcal L_t(\mathcal A,\varepsilon).
\end{equation}

\textbf{(4) Top-\(K_\rho\) rule as a special case.}
If the local scores are fixed independently of \(\mathcal A\), that is,
\[
\mathcal L_t(\mathcal A,\varepsilon)
=
\mathcal L_t(\varepsilon)
\quad
\text{for all }\mathcal A,
\]
then the ideal lower-bound selection reduces to
\[
\mathcal A_{\mathrm{LB}}^\star
\in
\operatorname{TopK}_{t\in[T]}
\left\{
\mathcal L_t(\varepsilon)
\right\}_{K_\rho}.
\]
Equivalently, it selects the largest \(K_\rho=\lfloor\rho T\rfloor\) ideal local
lower-bound scores, with ties broken arbitrarily.
\end{theorem}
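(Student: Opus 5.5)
The plan is to reduce everything to the per-step bound of Lemma~\ref{lem:local-gain-lower-bound} and then assemble the steps using the realizability assumption. The key observation is that, for a fixed candidate set \(\mathcal A\), the masked history \(\widetilde h_t(\mathcal A_{<t})\) at each selected step is determined by \(\mathcal A\) and the realized trajectory alone. It does not depend on \(\pi\), because \(M_t\) masks positions of the native trajectory. So the objective \(\sum_{t\in\mathcal A}\Delta_t(\pi;\mathcal A_{<t})\) depends on \(\pi\) only through the values \(\pi(\cdot\mid V,\widetilde h_t)\) at these finitely many masked histories. Moreover, each summand \(\Delta_t\) depends only on its own local target \(q_t^\pi\), since the rollout after step \(t\) rolls back to \(p_\theta\).

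\textbf{Part (1).} Fix \(\mathcal A\) with \(|\mathcal A|\le K_\rho\) and fix \(\delta>0\). For each \(t\in\mathcal A\), choose a local target \(q_t\) as follows.
\begin{itemize}
\item If \(\mathcal L_t(\mathcal A,\varepsilon)>0\), take \(q_t\in\mathcal Q_t(\mathcal A_{<t},\varepsilon)\) with \(\Delta_t(q_t;\mathcal A_{<t})\ge\Delta_t^\star(\mathcal A_{<t},\varepsilon)-\delta\). Lemma~\ref{lem:local-gain-lower-bound} gives \(\Delta_t^\star\ge\) the unclipped bracket in Eq.~\eqref{eq:L_t_ideal_def}, which here equals \(\mathcal L_t\).
\item If \(\mathcal L_t(\mathcal A,\varepsilon)=0\), take \(q_t=m_{t,V}\). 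This is feasible with KL zero and gives \(\Delta_t=0=\mathcal L_t\) by definition.
\end{itemize}
Assumption~\ref{assumption::realizability} then provides a single \(\pi\in\Pi(\mathcal A,\varepsilon)\) with \(q_t^\pi=q_t\) a.e. for every \(t\in\mathcal A\). Summing gives \(\sum_t\Delta_t(\pi;\mathcal A_{<t})\ge\sum_t\mathcal L_t(\mathcal A,\varepsilon)-|\mathcal A|\delta\). Letting \(\delta\to0\) yields Eq.~\eqref{eq:set_lower_bound}. Note that the argument does not need an optimizer to exist.

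\textbf{Parts (2)--(4).} Part (2) follows by taking the maximum over the finitely many admissible \(\mathcal A\) on both sides of Eq.~\eqref{eq:set_lower_bound}, using the definition of \(\mathcal J^\star\) in Eq.~\eqref{eq:J_star_def}. Part (3) is a definition; the only thing to check is that the argmax is nonempty, which holds because the feasible family of sets is finite. For Part (4), when the scores do not depend on \(\mathcal A\), the objective is \(\sum_{t\in\mathcal A}\mathcal L_t(\varepsilon)\) with all \(\mathcal L_t\ge0\). An exchange argument then applies: replacing any selected index by an unselected one with a larger score cannot decrease the sum, and adding indices up to \(K_\rho\) cannot decrease it either. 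Hence some top-\(K_\rho\) set, with ties broken arbitrarily, attains the maximum.

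\textbf{Main obstacle.} The one step needing care is the decoupling claim in Part (1): that the masked histories for distinct selected steps are fixed by \(\mathcal A\) and that each \(\Delta_t\) involves only \(q_t\). I would verify this directly from Eq.~\eqref{eq:intervened_suffix}, where only the step-\(t\) marginal is replaced and \(K_{t,v}\) is native. Assumption~\ref{assumption::realizability} then covers any remaining coupling, including the case where two masked histories coincide as contexts.
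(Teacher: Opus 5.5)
Your proposal is correct and follows essentially the same route as the paper. You combine Lemma~\ref{lem:local-gain-lower-bound} with feasibility of the zero-gain native choice to get $\Delta_t^\star\ge\mathcal L_t$, glue near-optimal local targets into one decoder via Assumption~\ref{assumption::realizability} with a vanishing slack, maximize over $\mathcal A$, and use a sorting/exchange argument in the fixed-score case; your case split on whether $\mathcal L_t(\mathcal A,\varepsilon)>0$ is only a cosmetic variant of the paper's direct step $\Delta_t^\star\ge\max\{\text{bracket},0\}=\mathcal L_t$.
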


\begin{remark}[Set-dependent scores]
Because \(\widetilde h_t(\mathcal A_{<t})\) depends on the earlier selected positions,
the score
$\mathcal L_t(\mathcal A,\varepsilon)$
can change when \(\mathcal A\) changes. Thus the general lower-bound selection problem
in Eq.~\eqref{eq:A_LB_star_ideal} is a genuine joint set optimization. The simple
top-\(K_\rho\) rule is rigorously valid only in the fixed-score special case.
\end{remark}

\begin{remark}[Scope of the lower bound]
The theorem lower bounds the sparse objective used in Eq.~\eqref{eq:local_rate_max_new},
namely the sum of one-step-then-rollback marginal downstream gains. It does not claim
that this sum is equal to the conditional mutual information of a policy that applies
all selected interventions simultaneously during a full rollout. The masking construction
and rollback evaluation intentionally isolate marginal credit for each selected step.
\end{remark}

\begin{remark}[Connection to practical reflection anchors]
The lower-bound score \(\mathcal L_t(\mathcal A,\varepsilon)\) contains two signals:
local branching room, through \(\mathcal H(m_{t,V})\), and downstream visual relevance,
through \(\KL(p_{t,V}\|p_{\mathcal M,t})\). Computing the full score would require
set-dependent masking, counterfactual suffix laws, and visual marginalization. The
practical reflection-anchor rule in the main text therefore uses native-history entropy
as a tractable proxy for the branching-room factor, while the continuation-level training
objective supplies the downstream visual-relevance pressure.
\end{remark}

\begin{proof}
Fix any candidate set \(\mathcal A\subseteq[T]\) with \(|\mathcal A|\le K_\rho\). For
each \(t\in\mathcal A\), Lemma~\ref{lem:local-gain-lower-bound} gives
\[
\Delta_t^\star(\mathcal A_{<t},\varepsilon)
\ge
\frac{\gamma\sqrt{\varepsilon}}{\sqrt{\pi e\,B}}
\mathbb E_{V\mid \widetilde h_t(\mathcal A_{<t})}
\left[
\left(
\exp\!\big(\mathcal H(m_{t,V})\big)-2
\right)_+
\sqrt{
\KL(p_{t,V}\|p_{\mathcal M,t})
}
\right]
-
C_B\varepsilon.
\]
Since the native choice \(q_t=m_{t,V}\) is feasible and gives zero gain,
\[
\Delta_t^\star(\mathcal A_{<t},\varepsilon)\ge0.
\]
Therefore,
\[
\Delta_t^\star(\mathcal A_{<t},\varepsilon)
\ge
\mathcal L_t(\mathcal A,\varepsilon).
\]
Summing over \(t\in\mathcal A\) gives
\[
\sum_{t\in\mathcal A}
\Delta_t^\star(\mathcal A_{<t},\varepsilon)
\ge
\sum_{t\in\mathcal A}
\mathcal L_t(\mathcal A,\varepsilon).
\]

We next relate the local optima to a single decoder. Fix any \(\eta>0\). For each
\(t\in\mathcal A\), choose a feasible local target
\[
q_t^\eta\in\mathcal Q_t(\mathcal A_{<t},\varepsilon)
\]
such that
\[
\Delta_t(q_t^\eta;\mathcal A_{<t})
\ge
\Delta_t^\star(\mathcal A_{<t},\varepsilon)
-
\frac{\eta}{\max\{|\mathcal A|,1\}}.
\]
By Assumption~\ref{assumption::realizability}, there exists
\(\pi^\eta\in\Pi(\mathcal A,\varepsilon)\) realizing all these local targets at the
selected masked histories. Hence
\[
\sup_{\pi\in\Pi(\mathcal A,\varepsilon)}
\sum_{t\in\mathcal A}
\Delta_t(\pi;\mathcal A_{<t})
\ge
\sum_{t\in\mathcal A}
\Delta_t(q_t^\eta;\mathcal A_{<t})
\ge
\sum_{t\in\mathcal A}
\Delta_t^\star(\mathcal A_{<t},\varepsilon)
-
\eta.
\]
Letting \(\eta\downarrow0\) gives
\[
\sup_{\pi\in\Pi(\mathcal A,\varepsilon)}
\sum_{t\in\mathcal A}
\Delta_t(\pi;\mathcal A_{<t})
\ge
\sum_{t\in\mathcal A}
\Delta_t^\star(\mathcal A_{<t},\varepsilon).
\]
Combining this with the lower bound on
\(\sum_{t\in\mathcal A}\Delta_t^\star(\mathcal A_{<t},\varepsilon)\) proves
Eq.~\eqref{eq:set_lower_bound}.

Taking the maximum over all \(\mathcal A\subseteq[T]\) with
\(|\mathcal A|\le K_\rho\), and using the definition of \(\mathcal J^\star(\varepsilon)\),
gives
\[
\mathcal J^\star(\varepsilon)
\ge
\max_{\mathcal A\subseteq[T],\,|\mathcal A|\le K_\rho}
\sum_{t\in\mathcal A}
\mathcal L_t(\mathcal A,\varepsilon),
\]
which proves Eq.~\eqref{eq:joint_lower_bound}. The selection rule in
Eq.~\eqref{eq:A_LB_star_ideal} follows directly from maximizing this lower-bound
surrogate over \(\mathcal A\).

Finally, suppose the local scores are fixed independently of \(\mathcal A\), so that
\[
\mathcal L_t(\mathcal A,\varepsilon)
=
\mathcal L_t(\varepsilon).
\]
Then the surrogate becomes modular:
\[
\max_{\mathcal A\subseteq[T],\,|\mathcal A|\le K_\rho}
\sum_{t\in\mathcal A}
\mathcal L_t(\varepsilon).
\]
Let
\[
\mathcal L_{(1)}(\varepsilon)
\ge
\mathcal L_{(2)}(\varepsilon)
\ge
\cdots
\ge
\mathcal L_{(T)}(\varepsilon)
\]
be the sorted scores. Since all scores are nonnegative by construction, any set of size
at most \(K_\rho\) has value at most
$\sum_{i=1}^{K_\rho}
\mathcal L_{(i)}(\varepsilon),$
and this value is achieved by selecting the \(K_\rho\) indices with the largest scores.
Hence, in the fixed-score special case, the lower-bound oracle selects the largest
\(K_\rho=\lfloor\rho T\rfloor\) ideal local lower-bound scores.
\end{proof}


\subsection{Proofs of Claim~\ref{claim::decomposition} and Claim~\ref{claim::mi-score-identity}}
\label{app::theory-claim-proofs}

\begin{proof}[Proof of Claim~\ref{claim::decomposition}]
Fix a candidate set \(\mathcal A\), a step \(t\), the induced masked history
\(\widetilde h_t\), and a visual realization \(v\). For any \(y\) with
\(m_{t,v}(y)>0\), by the definition of \(\psi_t\),
\[
\psi_t(v,y)
=
\mathbb E_{Y_{>t}\sim K_{t,v}(\cdot\mid y)}
\left[
\log
\frac{
\widetilde p_{t,v}(y,Y_{>t})
}{
p_{\mathcal M,t}(y,Y_{>t})
}
\right].
\]
Using the factorizations
\[
\widetilde p_{t,v}(y,y_{>t})
=
m_{t,v}(y)K_{t,v}(y_{>t}\mid y),
\]
and
\[
p_{\mathcal M,t}(y,y_{>t})
=
m_{\mathcal M,t}(y)p_{\mathcal M,t}(y_{>t}\mid y),
\]
we obtain
\[
\log
\frac{
\widetilde p_{t,v}(y,Y_{>t})
}{
p_{\mathcal M,t}(y,Y_{>t})
}
=
\log
\frac{
m_{t,v}(y)
}{
m_{\mathcal M,t}(y)
}
+
\log
\frac{
K_{t,v}(Y_{>t}\mid y)
}{
p_{\mathcal M,t}(Y_{>t}\mid y)
}.
\]
Taking expectation over \(Y_{>t}\sim K_{t,v}(\cdot\mid y)\) yields
\[
\psi_t(v,y)
=
\log
\frac{
m_{t,v}(y)
}{
m_{\mathcal M,t}(y)
}
+
\KL\!\left(
K_{t,v}(\cdot\mid y)
\,\middle\|\,
p_{\mathcal M,t}(\cdot\mid y)
\right).
\]
\end{proof}

\begin{proof}[Proof of Claim~\ref{claim::mi-score-identity}]
By the definition of conditional mutual information under the native masked-history
one-step-then-rollback suffix law,
\[
I_{p_\theta}(V;Y_{\ge t}\mid \widetilde h_t)
=
\mathbb E_{V\mid \widetilde h_t}
\left[
\sum_{y_{\ge t}}
\widetilde p_{t,V}(y_{\ge t})
\log
\frac{
\widetilde p_{t,V}(y_{\ge t})
}{
p_{\mathcal M,t}(y_{\ge t})
}
\right].
\]
Using
\[
\widetilde p_{t,V}(y_{\ge t})
=
m_{t,V}(y_t)K_{t,V}(y_{>t}\mid y_t),
\]
we rewrite this as
\[
I_{p_\theta}(V;Y_{\ge t}\mid \widetilde h_t)
=
\mathbb E_{V\mid \widetilde h_t}
\mathbb E_{Y_t\sim m_{t,V}}
\mathbb E_{Y_{>t}\sim K_{t,V}(\cdot\mid Y_t)}
\left[
\log
\frac{
\widetilde p_{t,V}(Y_t,Y_{>t})
}{
p_{\mathcal M,t}(Y_t,Y_{>t})
}
\right].
\]
The inner conditional expectation is exactly \(\psi_t(V,Y_t)\). Therefore,
\[
I_{p_\theta}(V;Y_{\ge t}\mid \widetilde h_t)
=
\mathbb E_{V\mid \widetilde h_t}
\mathbb E_{Y_t\sim m_{t,V}}
\left[
\psi_t(V,Y_t)
\right].
\]
\end{proof}

\section{Experimental Details of Main Results}
\label{app:exp_settings}

\begin{table*}[h]
\centering
\caption{\textbf{Main results with standard deviation (mean@8 acc \%).}}
\footnotesize
\setlength{\tabcolsep}{0.7pt}
\begin{tabular}{clccccccc}
\toprule
\multirow{2}{*}{\textbf{Base Model}} 
& \multirow{2}{*}{\textbf{Model}} 
& \multicolumn{4}{c}{\textbf{Reasoning-Intensive}} 
& \multicolumn{2}{c}{\textbf{General-Domain}} 
& \multirow{2}{*}{\textbf{Avg.}} \\
\cmidrule(lr){3-6} \cmidrule(lr){7-8}
& & MathVision & MathVerse & EMMA & LogicVista & MMMU-Pro & RealWorldQA & \\

\midrule
\multirow{5}{*}{\makecell{\textbf{Qwen3-VL-8B}\\\textbf{Instruct}}}
& Base  & 41.74\err{1.21} & 50.21\err{0.87} & 38.34\err{0.25} & 62.28\err{0.04} & 37.59\err{0.20} & 67.20\err{0.55} & 49.56 \\
& + GRPO                & \underline{48.03}\err{0.14} & 55.23\err{0.49} & \underline{41.34}\err{0.41} & 59.51\err{0.96} & 48.50\err{0.60} & 67.66\err{0.62} & 53.38 ($\uparrow$ 3.82) \\
& + PAPO$_G$            & 43.75\err{0.29} & 54.57\err{0.15} & 39.00\err{0.11} & \underline{64.29}\err{0.32} & \underline{50.00}\err{0.50} & \underline{69.41}\err{0.74} & 53.50 ($\uparrow$ 3.94) \\
& + VPPO$_G$            & \underline{48.03}\err{0.27} & \underline{56.35}\err{0.38} & 36.00\err{0.23} & \textbf{66.07}\err{0.38} & 49.16\err{0.52} & 67.48\err{0.17} & \underline{53.85} ($\uparrow$ 4.29) \\
& \cellcolor{blue!8}\textbf{+ RAPO$_G$}   
& \cellcolor{blue!8}\textbf{48.27}\err{0.19} & \cellcolor{blue!8}\textbf{56.72}\err{0.09} & \cellcolor{blue!8}\textbf{42.50}\err{0.44} & \cellcolor{blue!8}62.50\err{0.86} & \cellcolor{blue!8}\textbf{51.10}\err{0.25} & \cellcolor{blue!8}\textbf{69.54}\err{0.18} & \cellcolor{blue!8}\textbf{55.11} ($\uparrow$ 5.55) \\

\midrule

\multirow{5}{*}{\makecell{\textbf{Qwen3-VL-2B}\\\textbf{Instruct}}}
& Base   & 27.30{\tiny $\pm$\,1.67} & 36.75\err{0.75} & 26.87\err{1.72} & 47.13\err{1.16} & 24.44\err{0.49} & 60.47\err{0.38} & 37.16 \\
& + GRPO                & 28.29\err{1.71} & 44.35\err{0.87} & 28.50\err{0.30} & \underline{48.71}\err{0.99} & 30.61\err{0.60} & \underline{64.28}\err{0.85} & 40.79 ($\uparrow$ 3.63) \\
& + PAPO$_G$            & 28.29\err{0.33} & 45.18\err{0.67} & 27.25\err{0.33} & 46.43\err{0.66} & 31.45\err{0.14} & 64.05\err{0.57} & 40.44 ($\uparrow$ 3.28) \\
& + VPPO$_G$            & \underline{30.59}\err{0.25} & \underline{{45.43}}\err{0.32} & \underline{28.75}\err{0.65} & 45.76\err{1.06} & \underline{31.73}\err{0.32} & 63.14\err{0.44} & \underline{40.90} ($\uparrow$ 3.74) \\
& \cellcolor{blue!8}\textbf{+ RAPO$_G$}   
& \cellcolor{blue!8}\textbf{33.55}\err{1.21} & \cellcolor{blue!8}\textbf{46.13}\err{0.56} & \cellcolor{blue!8}\textbf{31.25}\err{0.71} & \cellcolor{blue!8}\textbf{49.55}\err{0.48} & \cellcolor{blue!8}\textbf{32.20}\err{0.63} & \cellcolor{blue!8}\textbf{65.36}\err{0.40} & \cellcolor{blue!8}\textbf{43.01} ($\uparrow$ 5.85) \\

\midrule
\multirow{6}{*}{\makecell{\textbf{Qwen2.5-VL-7B}\\\textbf{Instruct}}}
& Base  & 21.05\err{0.52} & 29.44\err{0.44} & 23.25\err{0.42} & 42.19\err{0.57} & 21.22\err{0.42} & 54.51\err{0.46} & 31.94 \\
& MM-Eureka         & \underline{32.57}\err{0.27} & 45.18\err{0.45} & {28.75}\err{0.46} & 44.64\err{1.03} & 26.82\err{0.33} & \textbf{61.96}\err{0.65} & 39.99 ($\uparrow$ 8.05) \\
& VL-Rethinker       & \textbf{34.21}\err{0.33} & 45.30\err{0.89} & {28.75}\err{0.77} & 44.20\err{0.34} & 34.10\err{0.23} & {59.22}\err{0.92} & 40.96 ($\uparrow$ 9.02) \\
& R1-ShareVL         & 25.99\err{0.11} & {46.22}\err{0.34} & \textbf{30.50}\err{0.56} & \underline{46.88}\err{0.25} & {34.45}\err{0.28} & 56.47\err{0.10} & 40.09 ($\uparrow$ 8.15) \\
& Base+RAPO$_G$       & 27.96\err{0.33} & \underline{{47.34}}\err{0.71} & \underline{29.00}\err{0.67} & 45.31\err{0.55} & \underline{36.01}\err{0.48} & \underline{60.26}\err{0.13}& \underline{40.98} ($\uparrow$ 9.04) \\
& \cellcolor{blue!8}\textbf{Base+RAPO$_D$}     &\cellcolor{blue!8}30.26\err{0.50} & \cellcolor{blue!8}\textbf{47.59}\err{0.42} & \cellcolor{blue!8}\textbf{30.50}\err{0.99} & \cellcolor{blue!8}\textbf{47.13}\err{0.13} &\cellcolor{blue!8}\textbf{36.04}\err{0.48} &\cellcolor{blue!8}58.40\err{0.77}& \cellcolor{blue!8}\textbf{41.65} ($\uparrow$ 9.71) \\

\midrule
\multirow{5}{*}{\makecell{\textbf{Qwen2-VL-7B}\\\textbf{Instruct}}}
& Base    & 15.79\err{1.16} & 10.25\err{0.65} & 11.75\err{0.46} & 13.17\err{0.94} & 11.73\err{0.45} & 31.83\err{0.51} & 15.75 \\
& TVC                   & 18.75\err{0.82} & 21.98\err{1.10} & 20.75\err{0.45} & \textbf{41.96}\err{0.77} & {23.34}\err{0.87} & {52.88}\err{0.38} & {29.94} ($\uparrow$ 14.19) \\
& MINT-CoT              & 22.04\err{0.19} & 24.62\err{0.23} & 19.00\err{0.66} & 31.47\err{0.71} & 17.63\err{0.86} & 49.15\err{0.12} & 27.32 ($\uparrow$ 11.57) \\
& Base+RAPO$_G$       & \underline{22.37}\err{0.33} & \underline{32.99}\err{0.71} & \underline{25.25}\err{0.40} & 37.72\err{0.55} & \underline{28.22}\err{0.48} & \underline{64.67}\err{0.13} & \underline{35.21 }($\uparrow$ 19.46) \\
& \cellcolor{blue!8}\textbf{Base+RAPO$_D$}     & \cellcolor{blue!8}\textbf{23.36}\err{0.87}& \cellcolor{blue!8}\textbf{36.31}\err{0.66} & \cellcolor{blue!8}\textbf{28.75}\err{0.56} & \cellcolor{blue!8}\underline{39.73}\err{1.52} & \cellcolor{blue!8}\textbf{29.39}\err{0.86} & \cellcolor{blue!8}\textbf{65.01}\err{0.35} & \cellcolor{blue!8}\textbf{37.09} ($\uparrow$ 21.34) \\

\bottomrule
\end{tabular}
\label{tab:std_main_results}
\vspace{-0.09cm}
\end{table*}

\subsection{Training Details}
\label{app:training}
We adopt ViRL39K\footnote{\url{https://huggingface.co/datasets/TIGER-Lab/ViRL39K}} as the RL training dataset, consisting of 38{,}870 curated and verifiable multimodal QA pairs. All training samples are formatted using the prompt template illustrated in the box below.
\begin{tcolorbox}[
    colframe=boxborder,     
    colback=boxbg,          
    boxrule=2pt,            
    arc=12pt,                
    auto outer arc,       
    left=15pt, right=15pt, top=15pt, bottom=15pt, 
    fontupper=\fontfamily{ptm}\selectfont 
]
    { \textbf{\textcolor{boxborder}{CoT Training Data Template:}}}

    \verb|<image>|
   
    \verb|{Question}| \\ \\ You FIRST think about the reasoning process as an internal monologue and then provide the final answer. The reasoning process MUST BE enclosed within \verb|<think>| \verb|</think>| tags. The final answer MUST BE put in \verb|\boxed{}|.
    
\end{tcolorbox}

We implement GRPO and RAPO using the VeRL framework\footnote{\url{https://github.com/volcengine/verl}},
and PAPO and VPPO using Easy-R1\footnote{\url{https://github.com/hiyouga/EasyR1}},
following the training settings reported in their original papers. In the main experiments, among the baselines we only conduct RL training for PAPO and VPPO. Both methods follow the original training settings reported in their respective papers, with minor adjustments to the \texttt{batch\_size}, \texttt{training steps}, \texttt{max\_prompt\_length}, and \texttt{max\_response\_length} to ensure consistency with RAPO when training on Qwen3-VL-2B-Instruct and Qwen3-VL-8B-Instruct. For VPPO, we adopt the same training settings as provided in its official code repository to train Qwen3-VL-8B-Instruct. For PAPO, we follow the settings used for Qwen2.5-VL-3B-Instruct and Qwen2.5-VL-7B to train Qwen3-VL-2B-Instruct and Qwen3-VL-8B-Instruct, respectively.  The reward signal used in the main experiments, as well as in all additional experiments, is a binary accuracy score, where a correct answer receives a reward of 1 and an incorrect answer receives a reward of 0. In addition, the vision tower is frozen during training for all RL experiments. For computational resources, Qwen3-VL-2B-Instruct and Qwen3-VL-8B-Instruct are trained on 8$\times$ NVIDIA RTX PRO 6000 Blackwell GPUs, while Qwen2-VL-7B-Instruct and Qwen2.5-VL-7B-Instruct are trained on 4 GPUs. All key hyperparameters for RL training are summarized in Table~\ref{tab:RL_setting}.

\begin{table}[t]
\centering
\small
\caption{RL hyperparameters for different experimental groups.}
\label{tab:RL_setting}
\begin{tabular}{lcc}
\toprule
\textbf{Hyperparameter} & \textbf{Group (i)} & \textbf{Group (ii) \& (iii)} \\
\midrule

\rowcolor{blue!8}
\multicolumn{3}{l}{\textbf{General Training}} \\
\midrule
Learning rate              & $1e{-6}$ & $1e{-6}$ \\
RL steps                   & 100              & 200              \\
Optimizer                  & AdamW            & AdamW            \\
LR scheduler               & Constant         & Constant         \\
Freeze vision tower        & True             & True             \\

\midrule
\rowcolor{blue!8}
\multicolumn{3}{l}{\textbf{Basic GRPO Settings}} \\
\midrule
Rollout batch size         & 256              & 384              \\
Rollouts per prompt        & 5                & 5                \\
PPO minibatch size         & 64               & 128              \\
Rollout temperature        & 1.0              & 1.0              \\
Rollout top-$p$            & 1.0              & 1.0              \\
Max prompt length          & 2048             & 4096             \\
Max response length        & 8192             & 2048             \\
KL penalty coefficient     & 0.02             & 0.02             \\
Loss aggregation mode      & Token-level   & Token-level   \\

\midrule
\rowcolor{blue!8}
\multicolumn{3}{l}{\textbf{DAPO Recipe}} \\
\midrule
Dynamic sampling           & True             & True             \\
Clip ratio (low)           & 0.20             & 0.20             \\
Clip ratio (high)          & 0.28             & 0.28             \\
Overlong reward shaping    & True             & False            \\
Overlong buffer length     & 1024             & --               \\
Overlong penalty factor    & 1.0              & --               \\
Loss aggregation mode      & Token-level      & Token-level      \\

\midrule
\rowcolor{blue!8}
\multicolumn{3}{l}{\textbf{RAPO-Specific Settings}} \\
\midrule
MI enhancement factor $\gamma$  & 0.01         & 0.01         \\
Token selection threshold      & Top 20\%     & Top 20\%     \\
Window length                  & 3 (1 for 8B)           & 1            \\

\bottomrule
\end{tabular}

\end{table}

\subsection{Evaluation Details}
\textbf{Benchmarks} We evaluate our method on two categories of benchmarks: reasoning-intensive and general-domain, enabling comprehensive assessment of VLM capabilities. All evaluations are based on directly verifiable answers rather than LLM-as-a-judge evaluation. Detailed benchmark statistics are summarized in Table~\ref{tab:benchmarks}.

\begin{table}[!h]
\centering
\small
\setlength{\tabcolsep}{5pt}
\caption{Evaluation benchmarks used in our experiments.}
\label{tab:benchmarks}
\begin{tabular}{lccccc}
\toprule
\textbf{Benchmark} & \textbf{Size} & \textbf{Domain} & \textbf{Task Type} & \textbf{Subset} & \textbf{Source} \\
\midrule
MathVision
& 304
& Math
& Free-form QA
& Test-mini
& \href{https://huggingface.co/datasets/MathLLMs/MathVision}{HuggingFace} \\

MathVerse
& 788
& Math
& Multiple-choice \& Free-form QA
& Visual-dominant
& \href{https://huggingface.co/datasets/AI4Math/MathVerse}{HuggingFace} \\

EMMA-mini
& 400
& STEM
& Multiple-choice QA
& Full
& \href{https://huggingface.co/datasets/luckychao/EMMA-mini}{HuggingFace} \\

LogicVista
& 448
& Logic
& Multiple-choice QA
& Full
& \href{https://huggingface.co/datasets/lscpku/LogicVista}{HuggingFace} \\

MMMU-Pro
& 1{,}730
& General
& Multiple-choice QA
& Vision
& \href{https://huggingface.co/datasets/MMMU/MMMU_Pro}{HuggingFace} \\

RealWorldQA
& 765
& General
& Free-form QA
& Full
& \href{https://huggingface.co/datasets/lmms-lab/RealWorldQA}{HuggingFace} \\
\bottomrule
\end{tabular}

\end{table}

\textbf{Evaluation Settings.} 
Benchmark evaluation is conducted using vLLM v0.11.0\footnote{\url{https://github.com/vllm-project/vllm}}. Multiple-choice questions are evaluated via exact-match accuracy, while free-form questions are verified using math-verify\footnote{\url{https://github.com/huggingface/Math-Verify}}. The prompt template used during evaluation is identical to that employed for training. During evaluation, we set the sampling temperature to 1.0 with $\text{top-}p=1.0$) and top-$k$ set to $-1$. Inference is performed with a batch size of 512. We enable the vLLM V1 execution mode (\texttt{VLLM\_USE\_V1=True}) and the maximum number of newly generated tokens is set to 8{,}192. We report mean and standard deviation over 8 sampled evaluation responses per
input in Table~\ref{tab:std_main_results}.

\section{Experimental Details of Empirical Analysis of RAPO's Mechanism}
\label{app:why}
We randomly sample 500 instances from the ViRL dataset and evaluate the Qwen3-VL-2B-Instruct model before training and after GRPO and RAPO training, using the same settings as in the main experiments. Responses are generated via greedy decoding with a maximum length of 4096 tokens. We conduct three groups of experiments.

\subsection{Trajectory Analysis}
In this subsection, we visualize the KL divergence and attention distributions. The KL divergence is used to quantify \emph{token-level visual dependency} by measuring the Kullback--Leibler (KL) divergence between the policy’s predictive distributions under vision-conditioned and vision-masked settings.
Formally, let \( V \) denote the visual input, \( X \) the textual context, and \( Y_{1:T} = (Y_1, \ldots, Y_T) \) a length-\(T\) reasoning trajectory. We define the history at decoding step \( t \) as
\(
H_t = (X, Y_{<t}) .
\)
The token-level visual dependency at step \( t \) is defined as the KL divergence between the policy output distributions conditioned on \( (V, H_t) \) and on \( H_t \) alone:
\[
\mathcal{S}(H_t, V)
= {\mathbb{D}_{\mathrm{KL}}}\!\left(
\pi_{\theta}(\cdot \mid H_t, V)
\;\|\;
\pi_{\theta}(\cdot \mid H_t)
\right).
\]
A high value of \( \mathcal{S}(H_t, V) \) indicates that visual information plays a critical role in predicting the token at step \( t \), identifying a key moment of visually grounded reasoning.
We further visualize the attention distribution from the next generated token to visual tokens. Denoting the attention function as \( \mathrm{Attn}(\cdot) \), the attention weights for token \( Y_t \) over visual tokens are computed as
\[
\mathrm{Attn}(Y_t, V)
= \mathrm{softmax}\!\left(
\frac{q_t K_V^\top}{\sqrt{d}}
\right),
\] 
where \( q_t \) is the query vector corresponding to \( Y_t \), \( K_V \) denotes the key vectors of all visual tokens, and \( d \) is the key dimension.

We use Qwen3-VL-2B-Instruct to generate responses for 500 questions, which are divided into two groups based on correctness: incorrect (322) and correct (178). For each group, we compute the token-wise average of the KL divergence and attention distributions. For training comparison, we average the KL distributions over all 500 instances to compare RAPO and GRPO. This analysis shows that RAPO effectively increases visual dependency in the later stages of the chain-of-thought (CoT), whereas GRPO does not. As shown in Figures~\ref{fig:kl_base}, GRPO fails to improve the KL divergence relative to the base model, while RAPO leads to a clear enhancement.

\begin{figure}[ht]
  \centering
  \begin{subfigure}[t]{0.48\linewidth}
    \centering
    \includegraphics[width=\linewidth]{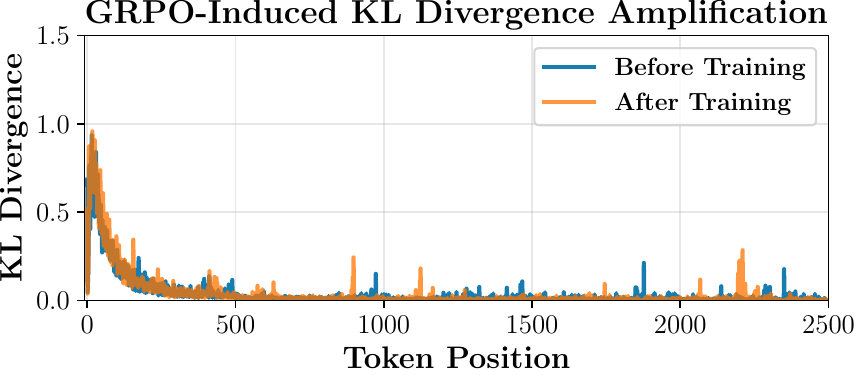}
    \label{fig:kl_grpo_base}
  \end{subfigure}
   \hfill
  \begin{subfigure}[t]{0.48\linewidth}
    \centering
    \includegraphics[width=\linewidth]{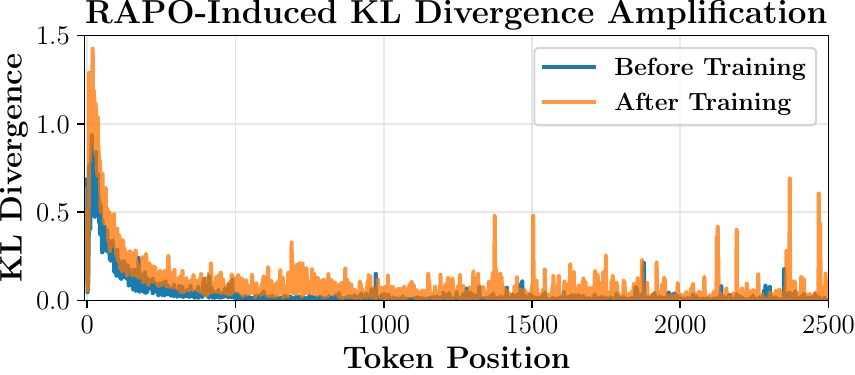}
    \label{fig:kl_rapo_base}
  \end{subfigure}
  \caption{\textbf{KL Distributions along CoT.} (a) KL distribution after GRPO training; (b) KL distribution after RAPO training.}
  \label{fig:kl_base}
\end{figure}

\subsection{Anchor Token Analysis}
For the anchor-token concentration visualization, we identify the top-20\%
highest-entropy tokens across all 500 responses generated by
Qwen3-VL-2B-Instruct. To avoid rare-token artifacts, we restrict the
analysis to token types appearing at least $N$ times in the corpus
($N=10$ in our experiments), and report for each retained type the fraction
of its occurrences selected as a top-entropy anchor.

We construct a noised visual input \( \tilde V \) by injecting additive Gaussian noise:
\[
\tilde V = V + \varepsilon,
\quad
\varepsilon \sim \mathcal{N}(0, \sigma^2 I),
\]
where \( \sigma \) controls the noise magnitude and \( I \) denotes the identity covariance matrix with the same dimensionality as \( V \). We then analyze the distributional shifts of different token categories under varying noise levels \( \sigma \). Specifically, we use
\[
{\mathbb{D}_{\mathrm{KL}}}\!\left(
\pi_{\theta}(\cdot \mid H_t, V)
\;\|\;
\pi_{\theta}(\cdot \mid H_t, \tilde V)
\right)
\]
to measure token sensitivity to visual perturbations, where larger KL values indicate stronger dependence on visual information.

\subsection{Propagation of Visual Influence Analysis}
We design a targeted experiment to measure \emph{visual information propagation from anchor token positions}. Given a reflection anchor \( t_k \in \mathcal{A} \), we evaluate how blocking its visual information, implemented by masking the corresponding visual attention, affects the visual dependency of other tokens. Specifically, we examine (i) the average impact on the subsequent three tokens \( Y_{t_k:t_k+3} \), and (ii) the impact on the visual dependency of the next anchor token \( t_{k+1} \). To quantify this effect, we define
\[
\Delta\KL = \mathcal{S}(H'_t, V) - \mathcal{S}(H_t, V),
\]
where $\mathcal{S}(H_t, V) = \KL(\pi_\theta(\cdot|H_t,V)\,\|\,\pi_\theta(\cdot|H_t))$ measures the visual dependence at step $t$ under the original history, and $\mathcal{S}(H'_t, V)$ is the analogous quantity after masking the visual information associated with the selected anchor. By construction, $\Delta D_{\mathrm{KL}} \le 0$: more negative values indicate that masking the anchor produces a larger drop in downstream visual dependence, i.e., stronger propagation of visual influence from the anchor.

 We evaluate this effect across anchor tokens at different positions and compare RAPO with baseline methods. The results demonstrate that RAPO significantly enhances both the visual influence at anchor tokens and the propagation of visual information along the reasoning trajectory.

\section{Intervention-Based Diagnostics for Reflection Anchors}
\label{app:causal_sanity}

We provide an intervention-based diagnostic to examine whether reflection anchors are associated
with task-relevant visual evidence. This diagnostic does not directly estimate the oracle gain
$\Delta_t(\pi)$, but tests whether the visual regions attended at reflection anchors have measurable
influence on final prediction.

\textbf{Attention-guided patch masking.}
We test whether early anchors attend to task-relevant visual evidence. On 100 randomly sampled
ViRL validation examples, we select early reflection anchors and mask $20\%$ of image patches under
two strategies: attention-guided masking, which removes the highest-attended patches at the selected
anchor, and random masking, which removes the same number of randomly sampled patches. We
report the accuracy drop $\Delta\mathrm{Acc}$.

\begin{table}[h]
\centering
\caption{\textbf{Attention-guided patch masking.}
More negative values indicate larger accuracy degradation.}
\label{tab:causal_sanity}
\small
\begin{tabular}{lccc}
\toprule
\textbf{Model} & \textbf{Attention-guided} & \textbf{Random} & \textbf{Gap} \\
\midrule
GRPO & -24.3 & -19.7 & 4.6 \\
RAPO & -27.7 & -21.3 & 6.4 \\
\bottomrule
\end{tabular}
\end{table}

As shown in Table~\ref{tab:causal_sanity}, attention-guided masking causes larger accuracy
degradation than random masking for both GRPO and RAPO. This suggests that early reflection
anchors attend to image regions that are more relevant to final prediction than randomly selected
patches. The larger gap under RAPO is consistent with the intended effect of strengthening the
coupling between reflection anchors and task-relevant visual evidence. Together with the KL
propagation analysis in Figure~\ref{fig:delta_kl}, this diagnostic provides intervention-based support
for the proposed mechanism.

\section{Compute-Matched Comparison}
\label{app:compute_matched}

RAPO introduces an additional chain-masked contrastive KL signal during training, which increases per-step training cost relative to GRPO. 
To verify that the performance gain is not merely due to additional computation, we first report wall-clock training cost and then compare RAPO against a compute-matched GRPO baseline trained for additional steps under a similar wall-clock budget. 
All measurements are conducted on \(8\times\) NVIDIA RTX PRO 6000 Blackwell GPUs.

\begin{table}[h]
\centering
\caption{\textbf{Training wall-clock cost.}
RAPO incurs additional training overhead due to the chain-masked contrastive KL computation, but it uses the standard autoregressive forward pass at inference and therefore adds no inference-time overhead.}
\label{tab:wall_clock_cost}
\small
\setlength{\tabcolsep}{4pt}
\begin{tabular}{lccccc}
\toprule
\textbf{Model} & \textbf{GRPO Total} & \textbf{RAPO Total} & \textbf{GRPO / Step} & \textbf{RAPO / Step} & \textbf{Overhead} \\
\midrule
Qwen3-VL-2B & 10h38m & 12h31m & 383s & 451s & \(\sim\)18\% \\
Qwen3-VL-8B & 28h04m & 35h58m & 1010s & 1295s & \(\sim\)28\% \\
\bottomrule
\end{tabular}
\end{table}

We then train GRPO for 120 steps to approximately match RAPO's 100-step wall-clock budget on Qwen3-VL-2B-Instruct. 
The GRPO-120 run takes 12h43m, comparable to RAPO-100 at 12h31m. 
As shown in Table~\ref{tab:compute_matched}, increasing GRPO from 100 to 120 steps improves the average score from 40.79 to 41.45, but still remains below RAPO's 43.01. 
This indicates that RAPO's gain is not explained by additional training computation, but by the proposed anchor-based propagation objective.

\begin{table}[h]
\centering
\caption{\textbf{Compute-matched comparison.}
GRPO is trained for additional steps to match RAPO's wall-clock budget. RAPO still achieves higher average performance.}
\label{tab:compute_matched}
\small
\setlength{\tabcolsep}{4pt}
\begin{tabular}{lcccccccc}
\toprule
\textbf{Method} & \textbf{Steps} & \textbf{MathVis.} & \textbf{MathVer.} & \textbf{EMMA} & \textbf{LogicV.} & \textbf{MMMU-P.} & \textbf{RealWQ} & \textbf{Avg.} \\
\midrule
GRPO & 100 & 28.29 & 44.35 & 28.50 & 48.71 & 30.61 & 64.28 & 40.79 \\
GRPO & 120 & 30.92 & 44.73 & 31.25 & 49.11 & 31.20 & 61.50 & 41.45 \\
RAPO$_G$ & 100 & \textbf{33.55} & \textbf{46.13} & \textbf{31.25} & \textbf{49.55} & \textbf{32.20} & \textbf{65.36} & \textbf{43.01} \\
\bottomrule
\end{tabular}
\end{table}

RAPO's additional cost is training-only: at inference, the learned policy runs with the same standard autoregressive decoding procedure as the base model and does not require anchor search, chain masking, or additional visual re-attention modules.

\section{Ablation and Robustness Study}
\label{app:abaltion}

\subsection{Experimental Details}
We additionally conduct experiments on two other models: Qwen3-VL-2B-Thinking and Gemma3-4B-IT. 
Qwen3-VL-2B-Thinking is designed to generate substantially longer CoT, which allows us to further validate the effectiveness of our method under long-CoT reasoning settings. 
For this model, we adopt the same training configuration as Qwen3-VL-2B-Instruct used in the main experiments, except that the maximum response length is increased to 16{,}384 tokens. 
The evaluation protocol is also kept identical, with the maximum number of newly generated tokens set to 16{,}384. Gemma3-4B-IT is a vision--language model with a substantially different architecture from the Qwen3-VL family. 
To assess the generality of our method across model architectures, we evaluate it under the same experimental settings as Qwen3-VL-Instruct, except that the maximum response length is set to 4{,}096 tokens.

Unless otherwise specified, all remaining ablation studies are conducted under the same training and evaluation configuration as the Qwen3-VL-2B-Instruct experiments reported in the main results.

\subsection{Ablation on Anchor Selection}
\label{app:anchor_aba}
We further analyze how the anchor set \(\mathcal A\) should be selected. The results are shown in Table~\ref{tab:anchor_selection_ablation}.

\begin{table}[h]
\centering
\small
\setlength{\tabcolsep}{4pt}
\caption{
Ablation on anchor-selection strategies.
}
\begin{tabular}{lcccccccc}
\toprule
Strategy & Frac. & MathVis. & MathVer. & EMMA & LogicV. & MMMU-P. & RealWQ & Avg. \\
\midrule
No anchors (GRPO) & 100\% & 28.29 & 44.35 & 28.50 & 48.71 & 30.61 & 64.28 & 40.79 \\
Random 20\%       & 20.0\% & 29.61 & 42.93 & 26.56 & 48.88 & 30.81 & 60.98 & 39.96 \\
Noun              & 28.7\% & 27.30 & 43.94 & 28.56 & 46.88 & 31.36 & 62.09 & 40.02 \\
Fixed 100          & --     & 29.93 & 45.11 & 29.75 & \textbf{50.39} & 31.33 & 64.58 & 41.85 \\
Fixed 200          & --     & 30.26 & 43.50 & 25.75 & 45.76 & 30.48 & 63.14 & 39.82 \\
Outlier \((\mu+\sigma)\) & 16.0\% & \textbf{34.21} & {45.69} & 26.50 & 47.54 & 30.14 & 63.01 & 41.18 \\
\rowcolor{blue!8}Top-20\% entropy   & 20.0\% & 33.55 & \textbf{46.13} & \textbf{31.25} & 49.55 & \textbf{32.20} & \textbf{65.36} & \textbf{43.01} \\
\bottomrule
\end{tabular}

\label{tab:anchor_selection_ablation}
\end{table}

Table~\ref{tab:anchor_selection_ablation} shows that anchor selection is crucial.
Unrestricted GRPO-style updating without anchors achieves \(40.79\) average, while applying the objective to Top-20\% entropy anchors improves the score to \(43.01\).
Random and noun-based anchors perform poorly (\(39.96\) and \(40.02\)), even though noun selection uses a larger token fraction than Top-20\% entropy selection (\(28.7\%\) vs. \(20.0\%\)).
This suggests that effective anchors are not simply visually named content words, but positions with high structural uncertainty and visual leverage.

Top-20\% entropy selection is also more robust than fixed or threshold-based alternatives.
It outperforms Fixed-100 and Fixed-200 anchors (\(43.01\) vs. \(41.85\) and \(39.82\)), indicating that a fraction-based rule better adapts to variable sequence lengths.
It also outperforms the fixed outlier criterion \(c_t>\mu+\sigma\) (\(43.01\) vs. \(41.18\)), supporting our use of a per-trajectory Top-\(\rho\) criterion for sparse and adaptive anchor selection.

\subsection{Robustness on Decoding Temperature}
We further evaluate whether RAPO relies primarily on stochastic sampling or also improves deterministic decoding. 
Specifically, we compare greedy decoding with temperature $0$ and pass@1 against stochastic decoding with temperature $1$ and mean@8 on Qwen3-VL-2B-Instruct.

\begin{table}[h]
\centering
\small
\setlength{\tabcolsep}{4pt}
\caption{Robustness to decoding temperature.}
\begin{tabular}{llccccccc}
\toprule
Method & Decoding & MathVision & MathVerse & EMMA & LogicVista & MMMU-Pro & RealWorldQA & Avg \\
\midrule
Base & greedy & 13.16 & 26.78 & 14.50 & 31.25 & 18.90 & 58.82 & 27.24 \\
Base & temp=1 & 27.30 & 36.75 & 26.87 & 47.13 & 24.44 & 60.47 & 37.16 \\
\midrule
GRPO & greedy & 11.51 & 27.41 & 13.75 & 32.81 & 22.49 & 59.22 & 27.87 \\
GRPO & temp=1 & 28.29 & 44.35 & 28.50 & 48.71 & 30.61 & 64.28 & 40.79 \\
\midrule
RAPO$_G$ & greedy & \textbf{16.12} & \textbf{33.63} & \textbf{15.50} & \textbf{33.71} & \textbf{25.55} & \textbf{63.92} & \textbf{31.41} \\
RAPO$_G$ & temp=1 & \textbf{33.55} & \textbf{46.13} & \textbf{31.25} & \textbf{49.55} & \textbf{32.20} & \textbf{65.36} & \textbf{43.01} \\
\bottomrule
\end{tabular}
\label{tab:robust_decoding}
\end{table}

GRPO only marginally improves greedy decoding over the base model, from 27.24 to 27.87, while its larger gain appears under stochastic decoding. 
In contrast, RAPO improves both settings, reaching 31.41 under greedy decoding and 43.01 under temperature-$1$ sampling. 
Moreover, the greedy--stochastic gap is smaller for RAPO, 11.46, than for GRPO, 12.92, suggesting that RAPO learns a more stable policy rather than depending mainly on stochastic sampling.

\subsection{Robustness on Context Length}
Finally, we evaluate robustness under a longer context budget. 
We set the maximum context length to 16,384 tokens and compare GRPO with RAPO.
\begin{table}[h]
\centering
\small
\caption{Robustness to longer context length.}
\setlength{\tabcolsep}{4pt}
\begin{tabular}{lccccccc}
\toprule
Model & MathVision & MathVerse & EMMA & LogicVista & MMMU-Pro & RealWorldQA & Avg \\
\midrule
GRPO & 31.6 & 44.0 & 28.0 & 51.3 & 31.2 & 62.2 & 41.4 \\
RAPO & \textbf{34.5} & \textbf{46.2} & \textbf{31.5} & \textbf{51.6} & \textbf{33.0} & \textbf{65.8} & \textbf{43.8} \\
\bottomrule
\end{tabular}
\label{tab:robust_context}
\end{table}
Under the longer-context setting, RAPO improves the average score from 41.4 to 43.8. 
This result shows that RAPO remains effective when the context budget is increased and that its gains are not an artifact of a particular maximum generation length.

\section{RAPO based on DAPO}
\subsection{RAPO$_D$ Optimization Objective}
To extend DAPO with RAPO, we incorporate the same
\(\overline{\mathbb{D}_{\mathrm{KL}}}_{i,t}(\theta) \) used in RAPO$_G$ (Section~\ref{sec:method}).The complete RAPO$_D$ objective is shown as follows:
\[
\begin{aligned}
\mathcal{J}_{\mathrm{RAPO}_D}(\theta)
&=
\mathbb{E}_{\{o_i\}_{i=1}^G \sim \pi_{\theta_{\mathrm{old}}}(O \mid q, I),t \in \mathcal{A}_i, \mathcal{A}_i \subset o_i}
\Bigg[
\frac{1}{\sum_{i=1}^G |\mathcal{A}_i|}
\sum_{i=1}^G \sum_{t\in\mathcal A_i}
\Big\{
\\
&\qquad
\min\!\Big[
r_{i,t}(\theta)\,\hat A_{i,t},
\;
\mathrm{clip} \!\big(r_{i,t}(\theta),\, 1-\epsilon_l,\, 1+\epsilon_h\big)\,
\hat A_{i,t}
\Big] + \gamma\, \omega^{\mathrm{br}}_{i,t}\,\overline{\mathbb{D}_{\mathrm{KL}}}_{i,t}(\theta) 
\Big\}
\Bigg],
\end{aligned}
\tag{5}
\]
with
\[
0 < \left|\{\, o_i \mid \texttt{is\_equivalent}(a, o_i) \,\}\right| < G .
\]

Here, \( \gamma \) is the weighting coefficient for
\( \omega^{\mathrm{br}}_{i,t}\,\overline{\mathbb{D}_{\mathrm{KL}}}_{i,t}(\theta) \) and $\mathcal{A}_i$ is the set of reflection anchors.

\subsection{Results of DAPO Experiments}
The DAPO results for the pure RL experiment group are reported in Table~\ref{tab:dapo}.
The evaluation settings follow the main experiments, except that the temperature is set to 0.6. Results show that RAPO$_D$ still outperforms the baselines.

\begin{table*}[ht]
\centering
\footnotesize
\caption{\textbf{DAPO Results of pure RL Group.} (mean@8 acc \%). Performance comparison across reasoning-intensive and general-domain multimodal benchmarks. The best results in each group are highlighted in \textbf{bold}. The second-best results are \underline{underlined}. RAPO$_D$, PAPO$_D$ and VPPO$_D$ denote RAPO, PAPO and VPPO built upon DAPO, respectively.}
\label{tab:dapo}
\setlength{\tabcolsep}{3pt}
\renewcommand{\arraystretch}{1.0}
\begin{tabular}{l l c c c c c c c}
\toprule
\multirow{2}{*}{\textbf{Base Model}} & \multirow{2}{*}{\textbf{Model}}
& \multicolumn{4}{c}{\textbf{Reasoning-Intensive}}
& \multicolumn{2}{c}{\textbf{General-Domain}}
& \multirow{2}{*}{\textbf{Avg.}} \\
\cmidrule(lr){3-6} \cmidrule(lr){7-8}
& & MathVision & MathVerse & EMMA & LogicVista
& MMMU-Pro & RealWorldQA & \\
\midrule
\multirow{5}{*}{\makecell{\textbf{Qwen3-VL-8B} \\ \textbf{Instruct}}}
& Base        & 41.74 & 50.21 & 38.34 & 62.28 & 37.59 & 67.20  & 49.56 \\    
& + DAPO      & {50.33} & \textbf{60.95} & 38.75 & 60.94 & 51.85 & 68.30 & 55.19 ($\uparrow$ 5.63) \\
& + PAPO$_D$  & 49.67 & \underline{59.64} & 42.25 & \underline{62.95} & 52.54 & \underline{71.37} & \underline{56.40} ($\uparrow$ 6.84)\\
& + VPPO$_D$  & \underline{50.99} & \underline{59.64} & \underline{43.75} & 61.61 & \underline{52.60} & 66.80 & 55.90 ($\uparrow$ 6.34)\\
& + RAPO$_D$  & \textbf{51.36}  & 58.50 & \textbf{45.50} & \textbf{64.51} & \textbf{52.83} & \textbf{71.63} &\textbf{57.39} ($\uparrow$ 7.83)\\
\bottomrule
\end{tabular}
\end{table*}

\section{Licenses of Existing Assets}
\label{app:asset_licenses}

We summarize the existing external assets used in this work, including codebases,
datasets, benchmarks, and models, together with their publicly stated licenses or
access terms. We cite the original papers and official repositories, dataset cards,
or model cards throughout the paper. All assets are used only for research training,
evaluation, and reproducibility purposes. When an asset is distributed under a
model-specific license or usage policy, we use it in accordance with the corresponding
license agreement, acceptable use policy, and redistribution terms. The results are
presented in Table~\ref{tab:asset_licenses}.

\begin{table}[ht]
\centering
\small
\setlength{\tabcolsep}{4pt}
\caption{Existing assets used in this work and their publicly stated licenses or access terms.}
\begin{tabular}{@{}lll@{}}
\toprule
Asset & Type & License / Terms of Use \\
\midrule

VeRL~\href{https://github.com/volcengine/verl}{Github}
& Code
& Apache-2.0 \\

EasyR1~\href{https://github.com/hiyouga/EasyR1}{Github}
& Code
& Apache-2.0 \\

vLLM~\cite{kwon2023efficient}
& Code
& Apache-2.0 \\

Math-Verify~\href{https://github.com/huggingface/Math-Verify}{Github}
& Code
& Apache-2.0 \\

ViRL39K~\citep{wang2025vlrethinkerincentivizingselfreflectionvisionlanguage}
& Dataset
& MIT \\

MathVision~\cite{wang2024measuring}
& Benchmark
& MIT \\

MathVerse~\cite{zhang2024mathversedoesmultimodalllm}
&  Benchmark
& MIT \\

EMMA~\cite{hao2025mllmsreasonmultimodalityemma}
& Benchmark
& \makecell[l]{No explicit standard dataset license found;\\ research evaluation only} \\

LogicVista~\cite{xiao2024logicvistamultimodalllmlogical}
& Benchmark
& Apache-2.0 \\

MMMU-Pro~\cite{yue2025mmmuprorobustmultidisciplinemultimodal}
& Benchmark
& Apache-2.0 \\

RealWorldQA~\href{https://huggingface.co/datasets/xai-org/RealworldQA}{Huggingface}
& Benchmark
& CC BY-ND 4.0 \\

Qwen3-VL-2B-Instruct~\cite{bai2025qwen3vltechnicalreport}
& Model
& Apache-2.0 \\

Qwen3-VL-8B-Instruct~\cite{bai2025qwen3vltechnicalreport}
& Model
& Apache-2.0 \\

Qwen3-VL-2B-Thinking~\cite{bai2025qwen3vltechnicalreport}
& Model
& Apache-2.0 \\

Qwen2.5-VL-7B-Instruct~\cite{bai2025qwen25vltechnicalreport}
& Model
& Apache-2.0 \\

Qwen2-VL-7B-Instruct~\cite{wang2024qwen2vlenhancingvisionlanguagemodels}
& Model
& Apache-2.0 \\

Gemma3-4B-IT~\cite{gemmateam2025gemma3technicalreport}
& Model
& \makecell[l]{Gemma Terms of Use} \\

MM-Eureka~\cite{meng2025mmeurekaexploringfrontiersmultimodal}
& Model
& Apache-2.0 \\

VL-Rethinker~\cite{wang2025vlrethinkerincentivizingselfreflectionvisionlanguage}
& Model
& Apache-2.0 \\

R1-ShareVL~\cite{yao2025r1sharevlincentivizingreasoningcapability}
& Model
& Apache-2.0 \\

TVC~\cite{sun2025mitigating}
& Model
& Apache-2.0 \\

MINT-CoT~\cite{chen2025mintcot}
& Model
& \makecell[l]{No explicit standard license found;\\ research baseline only} \\

\bottomrule
\end{tabular}
\label{tab:asset_licenses}
\end{table}

\section{Limitations and Broader Impacts}
\label{app:limitations}
\subsection{Limitations}
RAPO provides a practical training-time route to visual-retention optimization
in long-chain multimodal reasoning. The current implementation uses
native-entropy anchors and a chain-masked finite-window KL target, which keeps
the method simple, scalable, and architecture-free at inference, while leaving
richer anchor criteria and counterfactual references to future work. Future
extensions could incorporate semantic or causal anchor signals, adaptive
counterfactual references, and dynamically selected window lengths. Also, RAPO tends to provide larger gains when the base policy exhibits substantial visual forgetting or weak long-chain visual retention. 
On stronger or architecturally different backbones, the gains may be smaller, suggesting that propagation-aware shaping is most useful when visual-dependence signals are not already well preserved. While our
experiments span multiple LVLM backbones and benchmark families, applying
propagation-aware visual-retention objectives to broader real-world visual
reasoning tasks and more fine-grained faithfulness diagnostics remains an
important next step.

\subsection{Broader Impacts}
This work aims to improve visual grounding in long-chain multimodal reasoning. Positive impacts include more reliable reasoning over diagrams, charts, documents, and real-world visual inputs, especially in settings where models must maintain visual evidence across long reasoning chains. Potential negative impacts follow from improving the capability of general-purpose LVLMs: stronger multimodal reasoning could be misused for more persuasive automated content generation, surveillance-oriented visual analysis, or incorrect high-confidence decisions in sensitive domains. RAPO is evaluated as a research training method and is not intended for deployment in safety-critical settings without task-specific risk assessment, calibration, and human oversight. We do not release a new dataset or high-risk model checkpoint; released materials are intended for research reproduction.

\section{Case Study}
\subsection{Visualization of Entropy Distribution on Tokens}
Figure~\ref{fig:entropy1} and Figure~\ref{fig:entropy2} show the distribution of token entropy. We observe that high-entropy tokens are more concentrated around key reasoning steps, indicating increased uncertainty at decision points where multiple continuations remain plausible.

\begin{figure}[ht]
  \centering
  \begin{subfigure}[t]{0.95\linewidth}
    \centering
    \includegraphics[width=\linewidth]{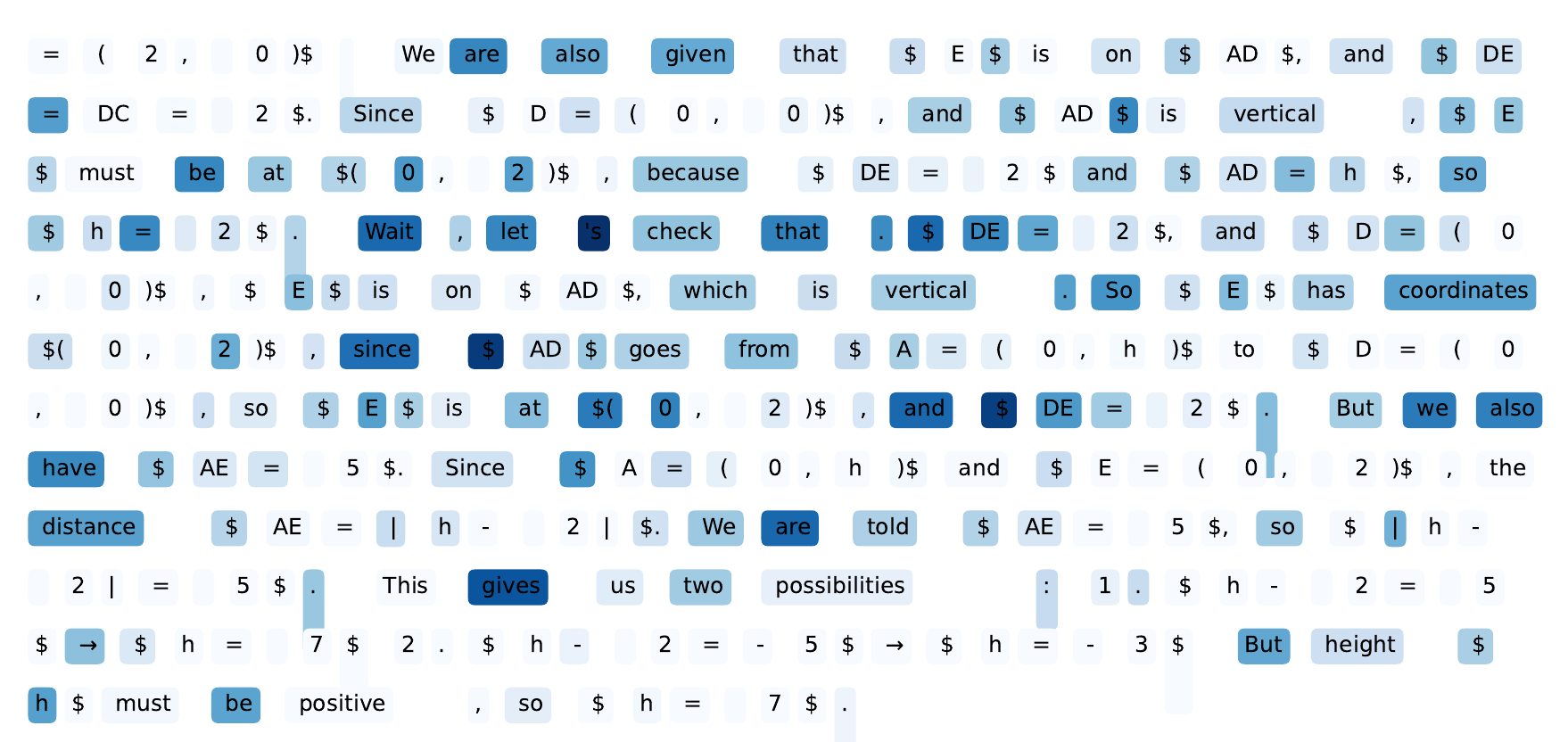}
    \caption{\textbf{Case 1.}}
    \label{fig:entropy1}
  \end{subfigure}
   \hfill
  \begin{subfigure}[t]{0.95\linewidth}
    \centering
    \includegraphics[width=\linewidth]{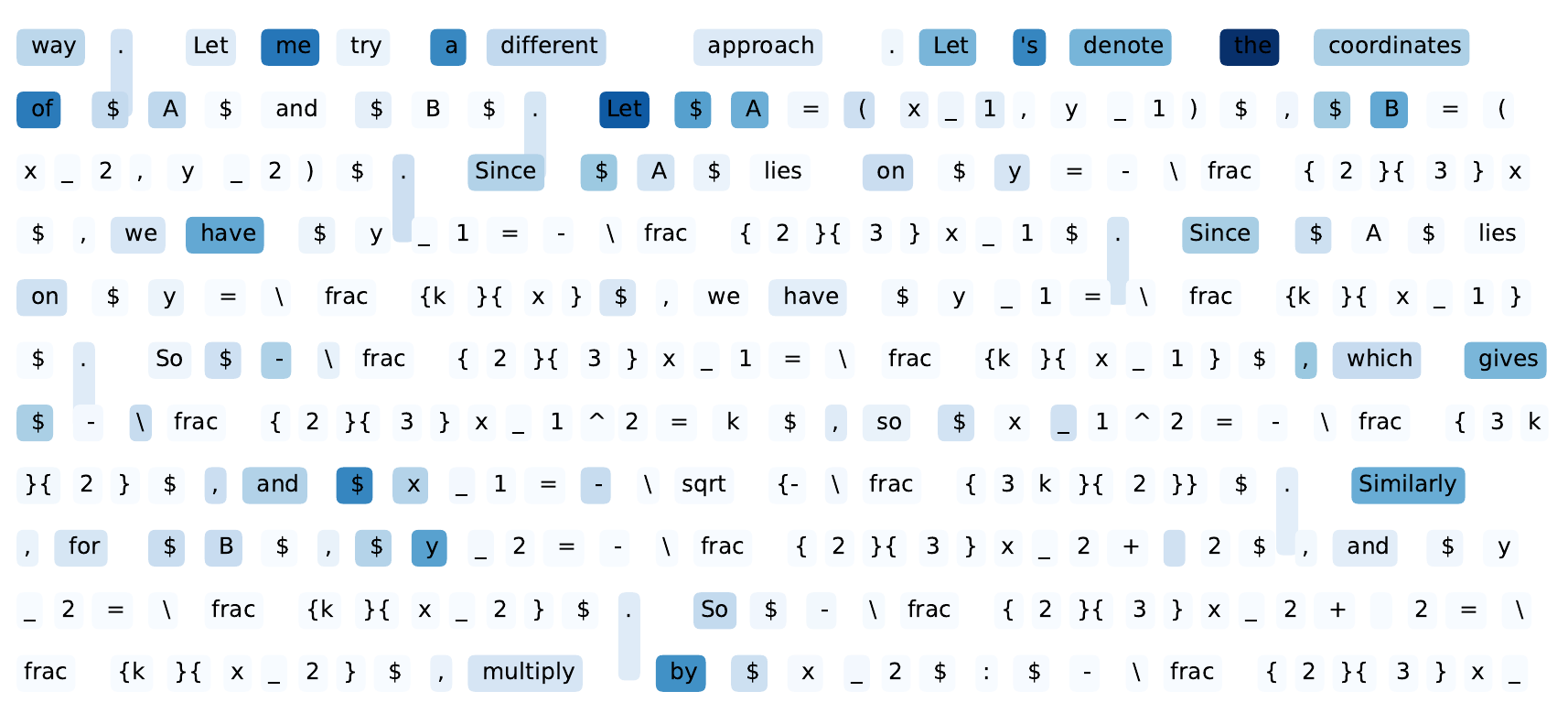}
    \caption{\textbf{Case 2.}}
    \label{fig:entropy2}
  \end{subfigure}
  \caption{\textbf{Case Study of Entropy Distributions along CoT.}}
   \vspace{0.5cm}
\end{figure}

\newpage

\subsection{Visualization of KL Distribution before and after RAPO training}
Figure~\ref{fig:kl0} and Figure~\ref{fig:kl1} show the KL distributions before and after RAPO training. Compared to the base model, RAPO consistently amplifies KL values in the later stages of generation, suggesting enhanced and sustained utilization of visual information throughout the reasoning process.

\begin{tcolorbox}[
    colframe=boxborder,     
    colback=boxbg,          
    boxrule=2pt,            
    arc=12pt,                
    auto outer arc,       
    left=15pt, right=15pt, top=15pt, bottom=15pt, 
    fontupper=\fontfamily{ptm}\selectfont 
]
    { \textbf{\textcolor{boxborder}{Case Study}}}

\includegraphics[width=0.4\linewidth]{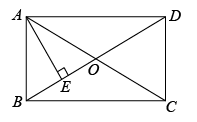}
\vspace{8pt}

{ \textbf{\textcolor{boxborder}{Question}}}

As shown in the figure, within rectangle $ABCD$, diagonals $AC$ and $BD$ intersect at point $O$, $AE$ perpendicular bisects $BO$, with $AE = \sqrt{3},\text{cm}$, what is the length of $OD$? \\ \\ You FIRST think about the reasoning process as an internal monologue and then provide the final answer. The reasoning process MUST BE enclosed within \verb|<think>| \verb|</think>| tags. The final answer MUST BE put in \verb|\boxed{}|.

\end{tcolorbox}

\begin{figure}[ht]
  \centering
\includegraphics[width=1.0\linewidth]{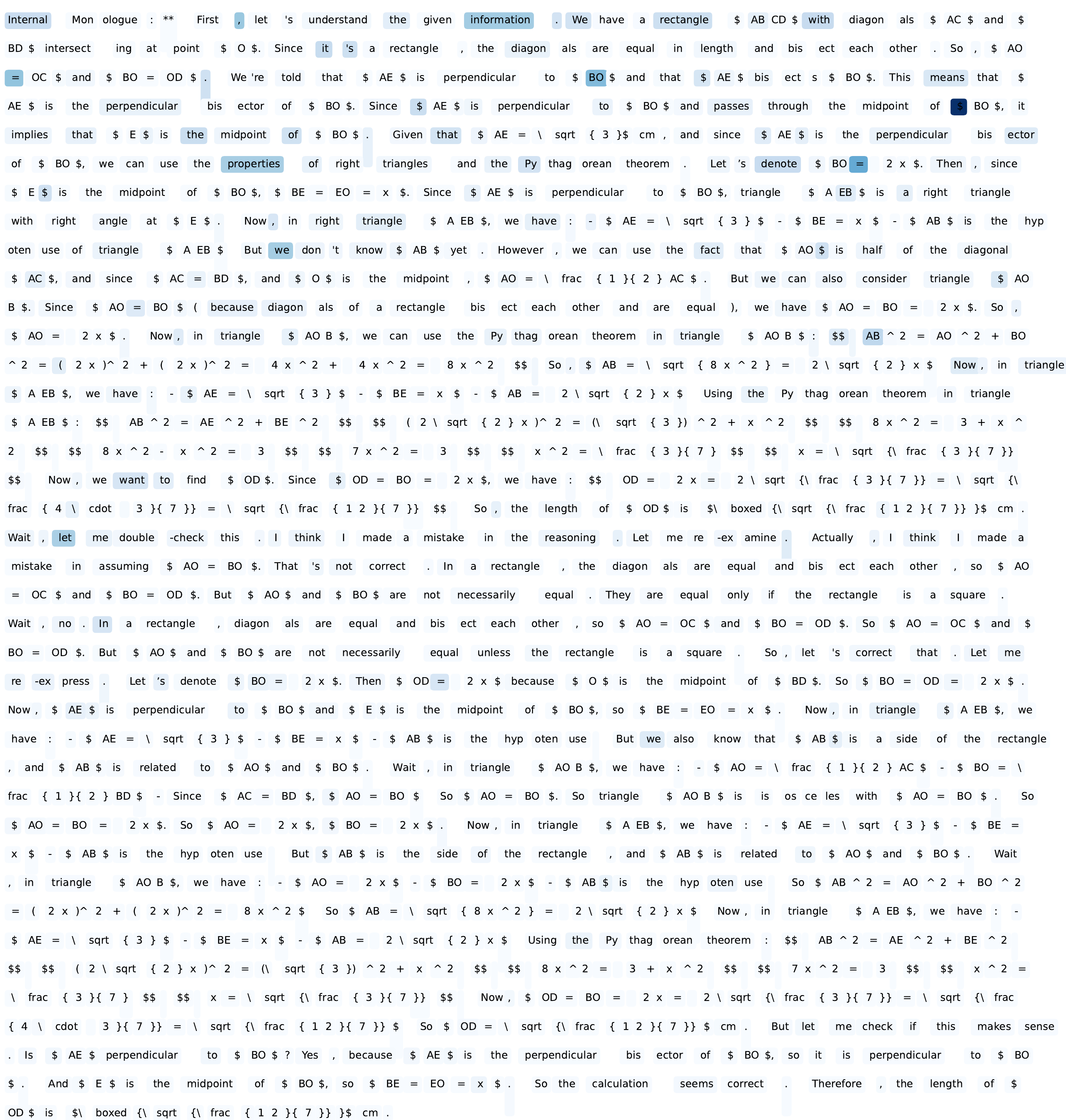}
\caption{\textbf{Token KL Distribution of Base Model (Wrong Answer).}}
\label{fig:kl0}
\end{figure}

\begin{figure}[ht]
  \centering
\includegraphics[width=1.0\linewidth]{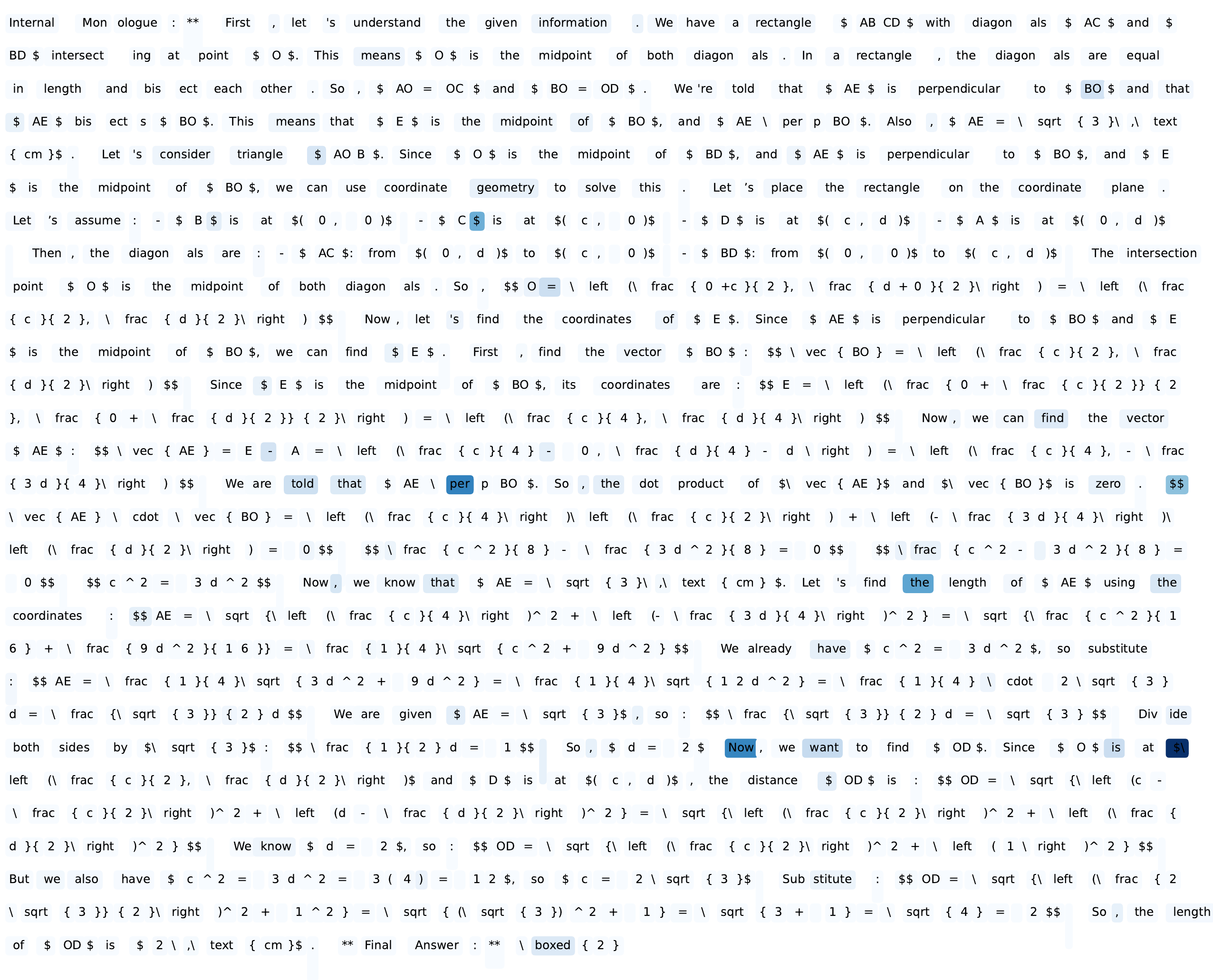}
\caption{\textbf{Token KL Distribution after RAPO Training (Right Answer).}}
\label{fig:kl1}
\end{figure}




\newpage
\begin{thebibliography}{40}
\providecommand{\natexlab}[1]{#1}
\providecommand{\url}[1]{\texttt{#1}}
\expandafter\ifx\csname urlstyle\endcsname\relax
  \providecommand{\doi}[1]{doi: #1}\else
  \providecommand{\doi}{doi: \begingroup \urlstyle{rm}\Url}\fi

\bibitem[Wei et~al.(2023)Wei, Wang, Schuurmans, Bosma, Ichter, Xia, Chi, Le, and Zhou]{wei2023chainofthoughtpromptingelicitsreasoning}
Jason Wei, Xuezhi Wang, Dale Schuurmans, Maarten Bosma, Brian Ichter, Fei Xia, Ed~Chi, Quoc Le, and Denny Zhou.
\newblock Chain-of-thought prompting elicits reasoning in large language models, 2023.
\newblock URL \url{https://arxiv.org/abs/2201.11903}.

\bibitem[Kojima et~al.(2023)Kojima, Gu, Reid, Matsuo, and Iwasawa]{kojima2023largelanguagemodelszeroshot}
Takeshi Kojima, Shixiang~Shane Gu, Machel Reid, Yutaka Matsuo, and Yusuke Iwasawa.
\newblock Large language models are zero-shot reasoners, 2023.
\newblock URL \url{https://arxiv.org/abs/2205.11916}.

\bibitem[Guo et~al.(2025)Guo, Yang, Zhang, Song, Wang, Zhu, Xu, Zhang, Ma, Bi, Zhang, Yu, Wu, Wu, Gou, Shao, Li, Gao, Liu, Xue, Wang, Wu, Feng, Lu, Zhao, Deng, Ruan, Dai, Chen, Ji, Li, Lin, Dai, Luo, Hao, Chen, Li, Zhang, Xu, Ding, Gao, Qu, Li, Guo, Li, Chen, Yuan, Tu, Qiu, Li, Cai, Ni, Liang, Chen, Dong, Hu, You, Gao, Guan, Huang, Yu, Wang, Zhang, Zhao, Wang, Zhang, Xu, Xia, Zhang, Zhang, Tang, Zhou, Li, Wang, Li, Tian, Huang, Zhang, Wang, Chen, Du, Ge, Zhang, Pan, Wang, Chen, Jin, Chen, Lu, Zhou, Chen, Ye, Wang, Yu, Zhou, Pan, Li, Zhou, Wu, Yun, Pei, Sun, Wang, Zeng, Liu, Liang, Gao, Yu, Zhang, Xiao, An, Liu, Wang, Chen, Nie, Cheng, Liu, Xie, Liu, Yang, Li, Su, Lin, Li, Jin, Shen, Chen, Sun, Wang, Song, Zhou, Wang, Shan, Li, Wang, Wei, Zhang, Xu, Li, Zhao, Sun, Wang, Yu, Zhang, Shi, Xiong, He, Piao, Wang, Tan, Ma, Liu, Guo, Ou, Wang, Gong, Zou, He, Xiong, Luo, You, Liu, Zhou, Zhu, Huang, Li, Zheng, Zhu, Ma, Tang, Zha, Yan, Ren, Ren, Sha, Fu, Xu, Xie, Zhang, Hao, Ma, Yan, Wu, Gu, Zhu, Liu, Li, Xie, Song,
  Pan, Huang, Xu, Zhang, and Zhang]{Guo_2025}
Daya Guo, Dejian Yang, Haowei Zhang, Junxiao Song, Peiyi Wang, Qihao Zhu, Runxin Xu, Ruoyu Zhang, Shirong Ma, Xiao Bi, Xiaokang Zhang, Xingkai Yu, Yu~Wu, Z.~F. Wu, Zhibin Gou, Zhihong Shao, Zhuoshu Li, Ziyi Gao, Aixin Liu, Bing Xue, Bingxuan Wang, Bochao Wu, Bei Feng, Chengda Lu, Chenggang Zhao, Chengqi Deng, Chong Ruan, Damai Dai, Deli Chen, Dongjie Ji, Erhang Li, Fangyun Lin, Fucong Dai, Fuli Luo, Guangbo Hao, Guanting Chen, Guowei Li, H.~Zhang, Hanwei Xu, Honghui Ding, Huazuo Gao, Hui Qu, Hui Li, Jianzhong Guo, Jiashi Li, Jingchang Chen, Jingyang Yuan, Jinhao Tu, Junjie Qiu, Junlong Li, J.~L. Cai, Jiaqi Ni, Jian Liang, Jin Chen, Kai Dong, Kai Hu, Kaichao You, Kaige Gao, Kang Guan, Kexin Huang, Kuai Yu, Lean Wang, Lecong Zhang, Liang Zhao, Litong Wang, Liyue Zhang, Lei Xu, Leyi Xia, Mingchuan Zhang, Minghua Zhang, Minghui Tang, Mingxu Zhou, Meng Li, Miaojun Wang, Mingming Li, Ning Tian, Panpan Huang, Peng Zhang, Qiancheng Wang, Qinyu Chen, Qiushi Du, Ruiqi Ge, Ruisong Zhang, Ruizhe Pan, Runji Wang, R.~J.
  Chen, R.~L. Jin, Ruyi Chen, Shanghao Lu, Shangyan Zhou, Shanhuang Chen, Shengfeng Ye, Shiyu Wang, Shuiping Yu, Shunfeng Zhou, Shuting Pan, S.~S. Li, Shuang Zhou, Shaoqing Wu, Tao Yun, Tian Pei, Tianyu Sun, T.~Wang, Wangding Zeng, Wen Liu, Wenfeng Liang, Wenjun Gao, Wenqin Yu, Wentao Zhang, W.~L. Xiao, Wei An, Xiaodong Liu, Xiaohan Wang, Xiaokang Chen, Xiaotao Nie, Xin Cheng, Xin Liu, Xin Xie, Xingchao Liu, Xinyu Yang, Xinyuan Li, Xuecheng Su, Xuheng Lin, X.~Q. Li, Xiangyue Jin, Xiaojin Shen, Xiaosha Chen, Xiaowen Sun, Xiaoxiang Wang, Xinnan Song, Xinyi Zhou, Xianzu Wang, Xinxia Shan, Y.~K. Li, Y.~Q. Wang, Y.~X. Wei, Yang Zhang, Yanhong Xu, Yao Li, Yao Zhao, Yaofeng Sun, Yaohui Wang, Yi~Yu, Yichao Zhang, Yifan Shi, Yiliang Xiong, Ying He, Yishi Piao, Yisong Wang, Yixuan Tan, Yiyang Ma, Yiyuan Liu, Yongqiang Guo, Yuan Ou, Yuduan Wang, Yue Gong, Yuheng Zou, Yujia He, Yunfan Xiong, Yuxiang Luo, Yuxiang You, Yuxuan Liu, Yuyang Zhou, Y.~X. Zhu, Yanping Huang, Yaohui Li, Yi~Zheng, Yuchen Zhu, Yunxian Ma, Ying
  Tang, Yukun Zha, Yuting Yan, Z.~Z. Ren, Zehui Ren, Zhangli Sha, Zhe Fu, Zhean Xu, Zhenda Xie, Zhengyan Zhang, Zhewen Hao, Zhicheng Ma, Zhigang Yan, Zhiyu Wu, Zihui Gu, Zijia Zhu, Zijun Liu, Zilin Li, Ziwei Xie, Ziyang Song, Zizheng Pan, Zhen Huang, Zhipeng Xu, Zhongyu Zhang, and Zhen Zhang.
\newblock Deepseek-r1 incentivizes reasoning in llms through reinforcement learning.
\newblock \emph{Nature}, 645\penalty0 (8081):\penalty0 633–638, September 2025.
\newblock ISSN 1476-4687.
\newblock \doi{10.1038/s41586-025-09422-z}.
\newblock URL \url{http://dx.doi.org/10.1038/s41586-025-09422-z}.

\bibitem[Team et~al.(2025{\natexlab{a}})Team, Bai, Bao, Chen, Chen, Chen, Chen, Chen, Chen, Chen, Chen, Cui, Ding, Dong, Du, Du, Du, Du, Fan, Feng, Fu, Gao, Gao, Gao, Gao, Gu, Guan, Guo, Guo, Hu, Hao, He, He, He, Hong, Hu, Hu, Huang, Huang, Huang, Jiang, Jiang, Jin, Kang, Lai, Li, Li, Li, Li, Li, Li, Li, Li, Li, Lin, Lin, Lin, Liu, Liu, Liu, Liu, Liu, Liu, Liu, Liu, Liu, Liu, Liu, Liu, Liu, Liu, Liu, Lu, Lu, Ma, Ma, Ma, Mao, Mei, Men, Miao, Pan, Peng, Qin, Qu, Shang, Shi, Shi, Song, Su, Su, Sun, Sung, Tang, Tao, Teng, Wang, Wang, Wang, Wang, Wang, Wang, Wang, Wang, Wang, Wang, Wang, Wang, Wang, Wang, Wang, Wang, Wang, Wei, Wei, Wu, Wu, Wu, Xiao, Xie, Xiong, Xu, Xu, Xu, Xu, Xu, Xu, Xu, Xu, Xu, Xu, Yan, Yan, Yang, Yang, Yang, Yang, Yang, Yao, Yao, Ye, Ye, Yin, Yu, Yuan, Yuan, Yuan, Zhan, Zhang, Zhang, Zhang, Zhang, Zhang, Zhang, Zhang, Zhang, Zhang, Zhang, Zhang, Zhao, Zhao, Zheng, Zheng, Zhou, Zhou, Zhou, Zhu, Zhuang, and Zu]{kimiteam2025kimik2openagentic}
Kimi Team, Yifan Bai, Yiping Bao, Guanduo Chen, Jiahao Chen, Ningxin Chen, Ruijue Chen, Yanru Chen, Yuankun Chen, Yutian Chen, Zhuofu Chen, Jialei Cui, Hao Ding, Mengnan Dong, Angang Du, Chenzhuang Du, Dikang Du, Yulun Du, Yu~Fan, Yichen Feng, Kelin Fu, Bofei Gao, Hongcheng Gao, Peizhong Gao, Tong Gao, Xinran Gu, Longyu Guan, Haiqing Guo, Jianhang Guo, Hao Hu, Xiaoru Hao, Tianhong He, Weiran He, Wenyang He, Chao Hong, Yangyang Hu, Zhenxing Hu, Weixiao Huang, Zhiqi Huang, Zihao Huang, Tao Jiang, Zhejun Jiang, Xinyi Jin, Yongsheng Kang, Guokun Lai, Cheng Li, Fang Li, Haoyang Li, Ming Li, Wentao Li, Yanhao Li, Yiwei Li, Zhaowei Li, Zheming Li, Hongzhan Lin, Xiaohan Lin, Zongyu Lin, Chengyin Liu, Chenyu Liu, Hongzhang Liu, Jingyuan Liu, Junqi Liu, Liang Liu, Shaowei Liu, T.~Y. Liu, Tianwei Liu, Weizhou Liu, Yangyang Liu, Yibo Liu, Yiping Liu, Yue Liu, Zhengying Liu, Enzhe Lu, Lijun Lu, Shengling Ma, Xinyu Ma, Yingwei Ma, Shaoguang Mao, Jie Mei, Xin Men, Yibo Miao, Siyuan Pan, Yebo Peng, Ruoyu Qin, Bowen Qu, Zeyu
  Shang, Lidong Shi, Shengyuan Shi, Feifan Song, Jianlin Su, Zhengyuan Su, Xinjie Sun, Flood Sung, Heyi Tang, Jiawen Tao, Qifeng Teng, Chensi Wang, Dinglu Wang, Feng Wang, Haiming Wang, Jianzhou Wang, Jiaxing Wang, Jinhong Wang, Shengjie Wang, Shuyi Wang, Yao Wang, Yejie Wang, Yiqin Wang, Yuxin Wang, Yuzhi Wang, Zhaoji Wang, Zhengtao Wang, Zhexu Wang, Chu Wei, Qianqian Wei, Wenhao Wu, Xingzhe Wu, Yuxin Wu, Chenjun Xiao, Xiaotong Xie, Weimin Xiong, Boyu Xu, Jing Xu, Jinjing Xu, L.~H. Xu, Lin Xu, Suting Xu, Weixin Xu, Xinran Xu, Yangchuan Xu, Ziyao Xu, Junjie Yan, Yuzi Yan, Xiaofei Yang, Ying Yang, Zhen Yang, Zhilin Yang, Zonghan Yang, Haotian Yao, Xingcheng Yao, Wenjie Ye, Zhuorui Ye, Bohong Yin, Longhui Yu, Enming Yuan, Hongbang Yuan, Mengjie Yuan, Haobing Zhan, Dehao Zhang, Hao Zhang, Wanlu Zhang, Xiaobin Zhang, Yangkun Zhang, Yizhi Zhang, Yongting Zhang, Yu~Zhang, Yutao Zhang, Yutong Zhang, Zheng Zhang, Haotian Zhao, Yikai Zhao, Huabin Zheng, Shaojie Zheng, Jianren Zhou, Xinyu Zhou, Zaida Zhou, Zhen Zhu,
  Weiyu Zhuang, and Xinxing Zu.
\newblock Kimi k2: Open agentic intelligence, 2025{\natexlab{a}}.
\newblock URL \url{https://arxiv.org/abs/2507.20534}.

\bibitem[Yang et~al.(2025)Yang, Li, Yang, Zhang, Hui, Zheng, Yu, Gao, Huang, Lv, Zheng, Liu, Zhou, Huang, Hu, Ge, Wei, Lin, Tang, Yang, Tu, Zhang, Yang, Yang, Zhou, Zhou, Lin, Dang, Bao, Yang, Yu, Deng, Li, Xue, Li, Zhang, Wang, Zhu, Men, Gao, Liu, Luo, Li, Tang, Yin, Ren, Wang, Zhang, Ren, Fan, Su, Zhang, Zhang, Wan, Liu, Wang, Cui, Zhang, Zhou, and Qiu]{yang2025qwen3technicalreport}
An~Yang, Anfeng Li, Baosong Yang, Beichen Zhang, Binyuan Hui, Bo~Zheng, Bowen Yu, Chang Gao, Chengen Huang, Chenxu Lv, Chujie Zheng, Dayiheng Liu, Fan Zhou, Fei Huang, Feng Hu, Hao Ge, Haoran Wei, Huan Lin, Jialong Tang, Jian Yang, Jianhong Tu, Jianwei Zhang, Jianxin Yang, Jiaxi Yang, Jing Zhou, Jingren Zhou, Junyang Lin, Kai Dang, Keqin Bao, Kexin Yang, Le~Yu, Lianghao Deng, Mei Li, Mingfeng Xue, Mingze Li, Pei Zhang, Peng Wang, Qin Zhu, Rui Men, Ruize Gao, Shixuan Liu, Shuang Luo, Tianhao Li, Tianyi Tang, Wenbiao Yin, Xingzhang Ren, Xinyu Wang, Xinyu Zhang, Xuancheng Ren, Yang Fan, Yang Su, Yichang Zhang, Yinger Zhang, Yu~Wan, Yuqiong Liu, Zekun Wang, Zeyu Cui, Zhenru Zhang, Zhipeng Zhou, and Zihan Qiu.
\newblock Qwen3 technical report, 2025.
\newblock URL \url{https://arxiv.org/abs/2505.09388}.

\bibitem[Team et~al.(2025{\natexlab{b}})Team, Du, Yin, Xing, Qu, Wang, Chen, Zhang, Du, Wei, Wang, Zhang, Du, Wang, Yuan, Lu, Li, Sung, Wei, Lai, Zhu, Ding, Hu, Yang, Zhang, Wu, Yao, Lu, Wang, Gao, Zheng, Li, Su, Wang, Deng, Qiu, Xie, Wang, Liu, Yan, Ouyang, Chen, Sui, Yu, Dong, Dong, Xu, Cheng, Gu, Zhou, Liu, Cao, Yu, Song, Bai, Song, He, Huang, Xu, Yuan, Yao, Wu, Li, Zu, Zhou, Wang, Charles, Zhong, Li, Hu, Chen, Wang, Liu, Miao, Qin, Chen, Bao, Wang, Kang, Liu, Dong, Du, Wu, Wang, Yan, Zhou, Li, Jiang, Zhang, Yang, Huang, Huang, Zhao, Chen, and Lin]{kimiteam2025kimivltechnicalreport}
Kimi Team, Angang Du, Bohong Yin, Bowei Xing, Bowen Qu, Bowen Wang, Cheng Chen, Chenlin Zhang, Chenzhuang Du, Chu Wei, Congcong Wang, Dehao Zhang, Dikang Du, Dongliang Wang, Enming Yuan, Enzhe Lu, Fang Li, Flood Sung, Guangda Wei, Guokun Lai, Han Zhu, Hao Ding, Hao Hu, Hao Yang, Hao Zhang, Haoning Wu, Haotian Yao, Haoyu Lu, Heng Wang, Hongcheng Gao, Huabin Zheng, Jiaming Li, Jianlin Su, Jianzhou Wang, Jiaqi Deng, Jiezhong Qiu, Jin Xie, Jinhong Wang, Jingyuan Liu, Junjie Yan, Kun Ouyang, Liang Chen, Lin Sui, Longhui Yu, Mengfan Dong, Mengnan Dong, Nuo Xu, Pengyu Cheng, Qizheng Gu, Runjie Zhou, Shaowei Liu, Sihan Cao, Tao Yu, Tianhui Song, Tongtong Bai, Wei Song, Weiran He, Weixiao Huang, Weixin Xu, Xiaokun Yuan, Xingcheng Yao, Xingzhe Wu, Xinhao Li, Xinxing Zu, Xinyu Zhou, Xinyuan Wang, Y.~Charles, Yan Zhong, Yang Li, Yangyang Hu, Yanru Chen, Yejie Wang, Yibo Liu, Yibo Miao, Yidao Qin, Yimin Chen, Yiping Bao, Yiqin Wang, Yongsheng Kang, Yuanxin Liu, Yuhao Dong, Yulun Du, Yuxin Wu, Yuzhi Wang, Yuzi Yan, Zaida
  Zhou, Zhaowei Li, Zhejun Jiang, Zheng Zhang, Zhilin Yang, Zhiqi Huang, Zihao Huang, Zijia Zhao, Ziwei Chen, and Zongyu Lin.
\newblock Kimi-vl technical report, 2025{\natexlab{b}}.
\newblock URL \url{https://arxiv.org/abs/2504.07491}.

\bibitem[Bai et~al.(2025{\natexlab{a}})Bai, Cai, Chen, Chen, Chen, Cheng, Deng, Ding, Gao, Ge, Ge, Guo, Huang, Huang, Huang, Hui, Jiang, Li, Li, Li, Li, Lin, Lin, Liu, Liu, Liu, Liu, Liu, Liu, Lu, Luo, Lv, Men, Meng, Ren, Ren, Song, Sun, Tang, Tu, Wan, Wang, Wang, Wang, Wang, Xie, Xu, Xu, Xu, Yang, Yang, Yang, Yang, Yu, Zhang, Zhang, Zhang, Zheng, Zhong, Zhou, Zhou, Zhou, Zhu, and Zhu]{bai2025qwen3vltechnicalreport}
Shuai Bai, Yuxuan Cai, Ruizhe Chen, Keqin Chen, Xionghui Chen, Zesen Cheng, Lianghao Deng, Wei Ding, Chang Gao, Chunjiang Ge, Wenbin Ge, Zhifang Guo, Qidong Huang, Jie Huang, Fei Huang, Binyuan Hui, Shutong Jiang, Zhaohai Li, Mingsheng Li, Mei Li, Kaixin Li, Zicheng Lin, Junyang Lin, Xuejing Liu, Jiawei Liu, Chenglong Liu, Yang Liu, Dayiheng Liu, Shixuan Liu, Dunjie Lu, Ruilin Luo, Chenxu Lv, Rui Men, Lingchen Meng, Xuancheng Ren, Xingzhang Ren, Sibo Song, Yuchong Sun, Jun Tang, Jianhong Tu, Jianqiang Wan, Peng Wang, Pengfei Wang, Qiuyue Wang, Yuxuan Wang, Tianbao Xie, Yiheng Xu, Haiyang Xu, Jin Xu, Zhibo Yang, Mingkun Yang, Jianxin Yang, An~Yang, Bowen Yu, Fei Zhang, Hang Zhang, Xi~Zhang, Bo~Zheng, Humen Zhong, Jingren Zhou, Fan Zhou, Jing Zhou, Yuanzhi Zhu, and Ke~Zhu.
\newblock Qwen3-vl technical report, 2025{\natexlab{a}}.
\newblock URL \url{https://arxiv.org/abs/2511.21631}.

\bibitem[Huang et~al.(2025{\natexlab{a}})Huang, Jia, Zhai, Cao, Ye, Zhao, Xu, Hu, and Lin]{huang2025visionr1incentivizingreasoningcapability}
Wenxuan Huang, Bohan Jia, Zijie Zhai, Shaosheng Cao, Zheyu Ye, Fei Zhao, Zhe Xu, Yao Hu, and Shaohui Lin.
\newblock Vision-r1: Incentivizing reasoning capability in multimodal large language models.
\newblock \emph{arXiv preprint arXiv:2503.06749}, 2025{\natexlab{a}}.

\bibitem[Meng et~al.(2025)Meng, Du, Liu, Zhou, Lu, Fu, Han, Shi, Wang, He, Zhang, Luo, Qiao, Zhang, and Shao]{meng2025mmeurekaexploringfrontiersmultimodal}
Fanqing Meng, Lingxiao Du, Zongkai Liu, Zhixiang Zhou, Quanfeng Lu, Daocheng Fu, Tiancheng Han, Botian Shi, Wenhai Wang, Junjun He, Kaipeng Zhang, Ping Luo, Yu~Qiao, Qiaosheng Zhang, and Wenqi Shao.
\newblock Mm-eureka: Exploring the frontiers of multimodal reasoning with rule-based reinforcement learning, 2025.
\newblock URL \url{https://arxiv.org/abs/2503.07365}.

\bibitem[Li et~al.(2025)Li, Tang, Li, Zhang, Vulic, and S{\o}gaard]{li2025lostembeddingsinformationloss}
Wenyan Li, Raphael Tang, Chengzu Li, Caiqi Zhang, Ivan Vulic, and Anders S{\o}gaard.
\newblock Lost in embeddings: Information loss in vision-language models.
\newblock \emph{arXiv preprint arXiv:2509.11986}, 2, 2025.

\bibitem[Sun et~al.(2025{\natexlab{a}})Sun, Sun, Peng, and Ye]{sun2025mitigating}
Hai-Long Sun, Zhun Sun, Houwen Peng, and Han-Jia Ye.
\newblock Mitigating visual forgetting via take-along visual conditioning for multi-modal long cot reasoning.
\newblock \emph{arXiv preprint arXiv:2503.13360}, 2025{\natexlab{a}}.

\bibitem[Masry et~al.(2025)Masry, Rodriguez, Zhang, Wang, Wang, Feizi, Suresh, Puri, Jian, Noël, Madhusudhan, Pedersoli, Liu, Chapados, Bengio, Hoque, Pal, Laradji, Vazquez, Taslakian, Gella, and Rajeswar]{masry2025alignvlmbridgingvisionlanguage}
Ahmed Masry, Juan~A. Rodriguez, Tianyu Zhang, Suyuchen Wang, Chao Wang, Aarash Feizi, Akshay~Kalkunte Suresh, Abhay Puri, Xiangru Jian, Pierre-André Noël, Sathwik~Tejaswi Madhusudhan, Marco Pedersoli, Bang Liu, Nicolas Chapados, Yoshua Bengio, Enamul Hoque, Christopher Pal, Issam~H. Laradji, David Vazquez, Perouz Taslakian, Spandana Gella, and Sai Rajeswar.
\newblock Alignvlm: Bridging vision and language latent spaces for multimodal document understanding, 2025.
\newblock URL \url{https://arxiv.org/abs/2502.01341}.

\bibitem[Jian et~al.(2025)Jian, Wu, Sun, Wang, Ren, and Zhang]{jian2025look}
Pu~Jian, Junhong Wu, Wei Sun, Chen Wang, Shuo Ren, and Jiajun Zhang.
\newblock Look again, think slowly: Enhancing visual reflection in vision-language models.
\newblock In \emph{Proceedings of the 2025 Conference on Empirical Methods in Natural Language Processing}, pages 9262--9281, 2025.

\bibitem[Chu et~al.(2025)Chu, Chen, Wang, Tan, Huang, Lv, Mo, and Li]{chu2025qwen}
Xu~Chu, Xinrong Chen, Guanyu Wang, Zhijie Tan, Kui Huang, Wenyu Lv, Tong Mo, and Weiping Li.
\newblock Qwen look again: Guiding vision-language reasoning models to re-attention visual information.
\newblock \emph{arXiv preprint arXiv:2505.23558}, 2025.

\bibitem[Chen et~al.(2025)Chen, Zhang, Jiang, Zhou, Yan, Lin, and Li]{chen2025mintcot}
Xinyan Chen, Renrui Zhang, Dongzhi Jiang, Aojun Zhou, Shilin Yan, Weifeng Lin, and Hongsheng Li.
\newblock {MINT}-cot: Enabling interleaved visual tokens in mathematical chain-of-thought reasoning.
\newblock In \emph{The Thirty-ninth Annual Conference on Neural Information Processing Systems}, 2025.
\newblock URL \url{https://openreview.net/forum?id=vMpvtSmtXY}.

\bibitem[Gao et~al.(2025)Gao, Li, Cao, and Li]{gao2025interleaved}
Jun Gao, Yongqi Li, Ziqiang Cao, and Wenjie Li.
\newblock Interleaved-modal chain-of-thought.
\newblock In \emph{Proceedings of the Computer Vision and Pattern Recognition Conference}, pages 19520--19529, 2025.

\bibitem[Chung et~al.(2025)Chung, Kim, Kim, Lee, Kim, and Yu]{chung2025v1learningpointvisual}
Jiwan Chung, Junhyeok Kim, Siyeol Kim, Jaeyoung Lee, Min~Soo Kim, and Youngjae Yu.
\newblock v1: Learning to point visual tokens for multimodal grounded reasoning, 2025.
\newblock URL \url{https://arxiv.org/abs/2505.18842}.

\bibitem[Qin et~al.(2025)Qin, Wei, Ge, Kallidromitis, Fu, Darrell, and Wang]{qin2025chain}
Yiming Qin, Bomin Wei, Jiaxin Ge, Konstantinos Kallidromitis, Stephanie Fu, Trevor Darrell, and XuDong Wang.
\newblock Chain-of-visual-thought: Teaching vlms to see and think better with continuous visual tokens.
\newblock \emph{arXiv preprint arXiv:2511.19418}, 2025.

\bibitem[Cheng et~al.(2026)Cheng, Chen, Xu, WANG, Wang, Fei, Wang, Wang, Chen, Che, and Qin]{cheng2026visual}
Zihui Cheng, Qiguang Chen, Xiao Xu, Jiaqi WANG, Weiyun Wang, Hao Fei, Yidong Wang, Alex~Jinpeng Wang, Zhi Chen, Wanxiang Che, and Libo Qin.
\newblock Visual thoughts: A unified perspective of understanding multimodal chain-of-thought.
\newblock In \emph{The Thirty-ninth Annual Conference on Neural Information Processing Systems}, 2026.
\newblock URL \url{https://openreview.net/forum?id=xPcKmKSEis}.

\bibitem[Sun et~al.(2025{\natexlab{b}})Sun, Hua, Wang, Luo, Dianat, Rabbani, Rao, and Tao]{sun2025latentchainofthoughtvisualreasoning}
Guohao Sun, Hang Hua, Jian Wang, Jiebo Luo, Sohail Dianat, Majid Rabbani, Raghuveer Rao, and Zhiqiang Tao.
\newblock Latent chain-of-thought for visual reasoning, 2025{\natexlab{b}}.
\newblock URL \url{https://arxiv.org/abs/2510.23925}.

\bibitem[Wang et~al.(2025{\natexlab{a}})Wang, Guo, Stoica, Xu, Wang, Ha, Chen, Chen, Yan, Huang, and Ji]{wang2025perceptionawarepolicyoptimizationmultimodal}
Zhenhailong Wang, Xuehang Guo, Sofia Stoica, Haiyang Xu, Hongru Wang, Hyeonjeong Ha, Xiusi Chen, Yangyi Chen, Ming Yan, Fei Huang, and Heng Ji.
\newblock Perception-aware policy optimization for multimodal reasoning, 2025{\natexlab{a}}.
\newblock URL \url{https://arxiv.org/abs/2507.06448}.

\bibitem[Huang et~al.(2025{\natexlab{b}})Huang, Qu, Li, Luo, He, Liu, and Cheng]{huang2025spotlighttokenperceptionmultimodal}
Siyuan Huang, Xiaoye Qu, Yafu Li, Yun Luo, Zefeng He, Daizong Liu, and Yu~Cheng.
\newblock Spotlight on token perception for multimodal reinforcement learning, 2025{\natexlab{b}}.
\newblock URL \url{https://arxiv.org/abs/2510.09285}.

\bibitem[Li et~al.(2026)Li, Kuang, Zhang, Cao, Liu, Hou, and Cheng]{li2026rethinkingtokenlevelpolicyoptimization}
Yunheng Li, Hangyi Kuang, Hengrui Zhang, Jiangxia Cao, Zhaojie Liu, Qibin Hou, and Ming-Ming Cheng.
\newblock Rethinking token-level policy optimization for multimodal chain-of-thought, 2026.
\newblock URL \url{https://arxiv.org/abs/2603.22847}.

\bibitem[Shao et~al.(2024)Shao, Wang, Zhu, Xu, Song, Bi, Zhang, Zhang, Li, Wu, et~al.]{shao2024deepseekmathpushinglimitsmathematical}
Zhihong Shao, Peiyi Wang, Qihao Zhu, Runxin Xu, Junxiao Song, Xiao Bi, Haowei Zhang, Mingchuan Zhang, YK~Li, Yang Wu, et~al.
\newblock Deepseekmath: Pushing the limits of mathematical reasoning in open language models.
\newblock \emph{arXiv preprint arXiv:2402.03300}, 2024.

\bibitem[Yu et~al.(2025)Yu, Zhang, Zhu, Yuan, Zuo, YuYue, Dai, Fan, Liu, Liu, Liu, Liu, Lin, Lin, Ma, Sheng, Tong, Zhang, Zhang, Zhang, Zhang, Zhu, Zhu, Chen, Chen, Wang, Yu, Song, Wei, Zhou, Liu, Ma, Zhang, Yan, Wu, and Wang]{yu2025dapo}
Qiying Yu, Zheng Zhang, Ruofei Zhu, Yufeng Yuan, Xiaochen Zuo, YuYue, Weinan Dai, Tiantian Fan, Gaohong Liu, Juncai Liu, LingJun Liu, Xin Liu, Haibin Lin, Zhiqi Lin, Bole Ma, Guangming Sheng, Yuxuan Tong, Chi Zhang, Mofan Zhang, Ru~Zhang, Wang Zhang, Hang Zhu, Jinhua Zhu, Jiaze Chen, Jiangjie Chen, Chengyi Wang, Hongli Yu, Yuxuan Song, Xiangpeng Wei, Hao Zhou, Jingjing Liu, Wei-Ying Ma, Ya-Qin Zhang, Lin Yan, Yonghui Wu, and Mingxuan Wang.
\newblock {DAPO}: An open-source {LLM} reinforcement learning system at scale.
\newblock In \emph{The Thirty-ninth Annual Conference on Neural Information Processing Systems}, 2025.
\newblock URL \url{https://openreview.net/forum?id=2a36EMSSTp}.

\bibitem[Wang et~al.(2025{\natexlab{b}})Wang, Qu, Huang, Chu, Lin, and Chen]{wang2025vlrethinkerincentivizingselfreflectionvisionlanguage}
Haozhe Wang, Chao Qu, Zuming Huang, Wei Chu, Fangzhen Lin, and Wenhu Chen.
\newblock Vl-rethinker: Incentivizing self-reflection of vision-language models with reinforcement learning, 2025{\natexlab{b}}.
\newblock URL \url{https://arxiv.org/abs/2504.08837}.

\bibitem[Wan et~al.(2025)Wan, Dou, Liu, Zhang, Cui, Zhao, Shen, Xiong, Xin, Jiang, et~al.]{wan2025srpo}
Zhongwei Wan, Zhihao Dou, Che Liu, Yu~Zhang, Dongfei Cui, Qinjian Zhao, Hui Shen, Jing Xiong, Yi~Xin, Yifan Jiang, et~al.
\newblock Srpo: Enhancing multimodal llm reasoning via reflection-aware reinforcement learning.
\newblock \emph{arXiv preprint arXiv:2506.01713}, 2025.

\bibitem[Yao et~al.(2024)Yao, Huang, Wu, Zhang, Wang, Liu, Wang, Song, Feng, Shen, et~al.]{yao2024mulberry}
Huanjin Yao, Jiaxing Huang, Wenhao Wu, Jingyi Zhang, Yibo Wang, Shunyu Liu, Yingjie Wang, Yuxin Song, Haocheng Feng, Li~Shen, et~al.
\newblock Mulberry: Empowering mllm with o1-like reasoning and reflection via collective monte carlo tree search.
\newblock \emph{arXiv preprint arXiv:2412.18319}, 2024.

\bibitem[Zhang et~al.(2025)Zhang, Ding, Zhang, Zhang, Li, Li, Wang, Wu, Ji, Shen, et~al.]{zhang2025perl}
Yizhen Zhang, Yang Ding, Shuoshuo Zhang, Xinchen Zhang, Haoling Li, Zhong-zhi Li, Peijie Wang, Jie Wu, Lei Ji, Yelong Shen, et~al.
\newblock Perl: Permutation-enhanced reinforcement learning for interleaved vision-language reasoning.
\newblock \emph{arXiv preprint arXiv:2506.14907}, 2025.

\bibitem[Tian et~al.(2026)Tian, Zou, Yang, He, Waschkowski, Wesemann, Tu, and Zhang]{tian2026more}
Xinyu Tian, Shu Zou, Zhaoyuan Yang, Mengqi He, Fabian Waschkowski, Lukas Wesemann, Peter~Henry Tu, and Jing Zhang.
\newblock More thought, less accuracy? on the dual nature of reasoning in vision-language models.
\newblock In \emph{The Fourteenth International Conference on Learning Representations}, 2026.
\newblock URL \url{https://openreview.net/forum?id=XpL5eqjCjF}.

\bibitem[Wang et~al.(2024{\natexlab{a}})Wang, Bai, Tan, Wang, Fan, Bai, Chen, Liu, Wang, Ge, Fan, Dang, Du, Ren, Men, Liu, Zhou, Zhou, and Lin]{wang2024qwen2vlenhancingvisionlanguagemodels}
Peng Wang, Shuai Bai, Sinan Tan, Shijie Wang, Zhihao Fan, Jinze Bai, Keqin Chen, Xuejing Liu, Jialin Wang, Wenbin Ge, Yang Fan, Kai Dang, Mengfei Du, Xuancheng Ren, Rui Men, Dayiheng Liu, Chang Zhou, Jingren Zhou, and Junyang Lin.
\newblock Qwen2-vl: Enhancing vision-language model's perception of the world at any resolution, 2024{\natexlab{a}}.
\newblock URL \url{https://arxiv.org/abs/2409.12191}.

\bibitem[Bai et~al.(2025{\natexlab{b}})Bai, Chen, Liu, Wang, Ge, Song, Dang, Wang, Wang, Tang, Zhong, Zhu, Yang, Li, Wan, Wang, Ding, Fu, Xu, Ye, Zhang, Xie, Cheng, Zhang, Yang, Xu, and Lin]{bai2025qwen25vltechnicalreport}
Shuai Bai, Keqin Chen, Xuejing Liu, Jialin Wang, Wenbin Ge, Sibo Song, Kai Dang, Peng Wang, Shijie Wang, Jun Tang, Humen Zhong, Yuanzhi Zhu, Mingkun Yang, Zhaohai Li, Jianqiang Wan, Pengfei Wang, Wei Ding, Zheren Fu, Yiheng Xu, Jiabo Ye, Xi~Zhang, Tianbao Xie, Zesen Cheng, Hang Zhang, Zhibo Yang, Haiyang Xu, and Junyang Lin.
\newblock Qwen2.5-vl technical report, 2025{\natexlab{b}}.
\newblock URL \url{https://arxiv.org/abs/2502.13923}.

\bibitem[Yao et~al.(2025)Yao, Yin, Zhang, Yang, Wang, Wu, Su, Shen, Qiu, Tao, and Huang]{yao2025r1sharevlincentivizingreasoningcapability}
Huanjin Yao, Qixiang Yin, Jingyi Zhang, Min Yang, Yibo Wang, Wenhao Wu, Fei Su, Li~Shen, Minghui Qiu, Dacheng Tao, and Jiaxing Huang.
\newblock R1-sharevl: Incentivizing reasoning capability of multimodal large language models via share-grpo, 2025.
\newblock URL \url{https://arxiv.org/abs/2505.16673}.

\bibitem[Kwon et~al.(2023)Kwon, Li, Zhuang, Sheng, Zheng, Yu, Gonzalez, Zhang, and Stoica]{kwon2023efficient}
Woosuk Kwon, Zhuohan Li, Siyuan Zhuang, Ying Sheng, Lianmin Zheng, Cody~Hao Yu, Joseph Gonzalez, Hao Zhang, and Ion Stoica.
\newblock Efficient memory management for large language model serving with pagedattention.
\newblock In \emph{Proceedings of the 29th symposium on operating systems principles}, pages 611--626, 2023.

\bibitem[Wang et~al.(2024{\natexlab{b}})Wang, Pan, Shi, Lu, Ren, Zhou, Zhan, and Li]{wang2024measuring}
Ke~Wang, Junting Pan, Weikang Shi, Zimu Lu, Houxing Ren, Aojun Zhou, Mingjie Zhan, and Hongsheng Li.
\newblock Measuring multimodal mathematical reasoning with math-vision dataset.
\newblock \emph{Advances in Neural Information Processing Systems}, 37:\penalty0 95095--95169, 2024{\natexlab{b}}.

\bibitem[Zhang et~al.(2024)Zhang, Jiang, Zhang, Lin, Guo, Qiu, Zhou, Lu, Chang, Gao, and Li]{zhang2024mathversedoesmultimodalllm}
Renrui Zhang, Dongzhi Jiang, Yichi Zhang, Haokun Lin, Ziyu Guo, Pengshuo Qiu, Aojun Zhou, Pan Lu, Kai-Wei Chang, Peng Gao, and Hongsheng Li.
\newblock Mathverse: Does your multi-modal llm truly see the diagrams in visual math problems?, 2024.
\newblock URL \url{https://arxiv.org/abs/2403.14624}.

\bibitem[Hao et~al.(2025)Hao, Gu, Wang, Li, Yang, Wang, and Cheng]{hao2025mllmsreasonmultimodalityemma}
Yunzhuo Hao, Jiawei Gu, Huichen~Will Wang, Linjie Li, Zhengyuan Yang, Lijuan Wang, and Yu~Cheng.
\newblock Can mllms reason in multimodality? emma: An enhanced multimodal reasoning benchmark, 2025.
\newblock URL \url{https://arxiv.org/abs/2501.05444}.

\bibitem[Xiao et~al.(2024)Xiao, Sun, Liu, and Wang]{xiao2024logicvistamultimodalllmlogical}
Yijia Xiao, Edward Sun, Tianyu Liu, and Wei Wang.
\newblock Logicvista: Multimodal llm logical reasoning benchmark in visual contexts, 2024.
\newblock URL \url{https://arxiv.org/abs/2407.04973}.

\bibitem[Yue et~al.(2025)Yue, Zheng, Ni, Wang, Zhang, Tong, Sun, Yu, Zhang, Sun, Su, Chen, and Neubig]{yue2025mmmuprorobustmultidisciplinemultimodal}
Xiang Yue, Tianyu Zheng, Yuansheng Ni, Yubo Wang, Kai Zhang, Shengbang Tong, Yuxuan Sun, Botao Yu, Ge~Zhang, Huan Sun, Yu~Su, Wenhu Chen, and Graham Neubig.
\newblock Mmmu-pro: A more robust multi-discipline multimodal understanding benchmark, 2025.
\newblock URL \url{https://arxiv.org/abs/2409.02813}.

\bibitem[Team et~al.(2025{\natexlab{c}})Team, Kamath, Ferret, Pathak, Vieillard, Merhej, Perrin, Matejovicova, Ramé, Rivière, Rouillard, Mesnard, Cideron, bastien Grill, Ramos, Yvinec, Casbon, Pot, Penchev, Liu, Visin, Kenealy, Beyer, Zhai, Tsitsulin, Busa-Fekete, Feng, Sachdeva, Coleman, Gao, Mustafa, Barr, Parisotto, Tian, Eyal, Cherry, Peter, Sinopalnikov, Bhupatiraju, Agarwal, Kazemi, Malkin, Kumar, Vilar, Brusilovsky, Luo, Steiner, Friesen, Sharma, Sharma, Gilady, Goedeckemeyer, Saade, Feng, Kolesnikov, Bendebury, Abdagic, Vadi, György, Pinto, Das, Bapna, Miech, Yang, Paterson, Shenoy, Chakrabarti, Piot, Wu, Shahriari, Petrini, Chen, Lan, Choquette-Choo, Carey, Brick, Deutsch, Eisenbud, Cattle, Cheng, Paparas, Sreepathihalli, Reid, Tran, Zelle, Noland, Huizenga, Kharitonov, Liu, Amirkhanyan, Cameron, Hashemi, Klimczak-Plucińska, Singh, Mehta, Lehri, Hazimeh, Ballantyne, Szpektor, Nardini, Pouget-Abadie, Chan, Stanton, Wieting, Lai, Orbay, Fernandez, Newlan, yeong Ji, Singh, Black, Yu, Hui,
  Vodrahalli, Greff, Qiu, Valentine, Coelho, Ritter, Hoffman, Watson, Chaturvedi, Moynihan, Ma, Babar, Noy, Byrd, Roy, Momchev, Chauhan, Sachdeva, Bunyan, Botarda, Caron, Rubenstein, Culliton, Schmid, Sessa, Xu, Stanczyk, Tafti, Shivanna, Wu, Pan, Rokni, Willoughby, Vallu, Mullins, Jerome, Smoot, Girgin, Iqbal, Reddy, Sheth, Põder, Bhatnagar, Panyam, Eiger, Zhang, Liu, Yacovone, Liechty, Kalra, Evci, Misra, Roseberry, Feinberg, Kolesnikov, Han, Kwon, Chen, Chow, Zhu, Wei, Egyed, Cotruta, Giang, Kirk, Rao, Black, Babar, Lo, Moreira, Martins, Sanseviero, Gonzalez, Gleicher, Warkentin, Mirrokni, Senter, Collins, Barral, Ghahramani, Hadsell, Matias, Sculley, Petrov, Fiedel, Shazeer, Vinyals, Dean, Hassabis, Kavukcuoglu, Farabet, Buchatskaya, Alayrac, Anil, Dmitry, Lepikhin, Borgeaud, Bachem, Joulin, Andreev, Hardin, Dadashi, and Hussenot]{gemmateam2025gemma3technicalreport}
Gemma Team, Aishwarya Kamath, Johan Ferret, Shreya Pathak, Nino Vieillard, Ramona Merhej, Sarah Perrin, Tatiana Matejovicova, Alexandre Ramé, Morgane Rivière, Louis Rouillard, Thomas Mesnard, Geoffrey Cideron, Jean bastien Grill, Sabela Ramos, Edouard Yvinec, Michelle Casbon, Etienne Pot, Ivo Penchev, Gaël Liu, Francesco Visin, Kathleen Kenealy, Lucas Beyer, Xiaohai Zhai, Anton Tsitsulin, Robert Busa-Fekete, Alex Feng, Noveen Sachdeva, Benjamin Coleman, Yi~Gao, Basil Mustafa, Iain Barr, Emilio Parisotto, David Tian, Matan Eyal, Colin Cherry, Jan-Thorsten Peter, Danila Sinopalnikov, Surya Bhupatiraju, Rishabh Agarwal, Mehran Kazemi, Dan Malkin, Ravin Kumar, David Vilar, Idan Brusilovsky, Jiaming Luo, Andreas Steiner, Abe Friesen, Abhanshu Sharma, Abheesht Sharma, Adi~Mayrav Gilady, Adrian Goedeckemeyer, Alaa Saade, Alex Feng, Alexander Kolesnikov, Alexei Bendebury, Alvin Abdagic, Amit Vadi, András György, André~Susano Pinto, Anil Das, Ankur Bapna, Antoine Miech, Antoine Yang, Antonia Paterson, Ashish
  Shenoy, Ayan Chakrabarti, Bilal Piot, Bo~Wu, Bobak Shahriari, Bryce Petrini, Charlie Chen, Charline~Le Lan, Christopher~A. Choquette-Choo, CJ~Carey, Cormac Brick, Daniel Deutsch, Danielle Eisenbud, Dee Cattle, Derek Cheng, Dimitris Paparas, Divyashree~Shivakumar Sreepathihalli, Doug Reid, Dustin Tran, Dustin Zelle, Eric Noland, Erwin Huizenga, Eugene Kharitonov, Frederick Liu, Gagik Amirkhanyan, Glenn Cameron, Hadi Hashemi, Hanna Klimczak-Plucińska, Harman Singh, Harsh Mehta, Harshal~Tushar Lehri, Hussein Hazimeh, Ian Ballantyne, Idan Szpektor, Ivan Nardini, Jean Pouget-Abadie, Jetha Chan, Joe Stanton, John Wieting, Jonathan Lai, Jordi Orbay, Joseph Fernandez, Josh Newlan, Ju~yeong Ji, Jyotinder Singh, Kat Black, Kathy Yu, Kevin Hui, Kiran Vodrahalli, Klaus Greff, Linhai Qiu, Marcella Valentine, Marina Coelho, Marvin Ritter, Matt Hoffman, Matthew Watson, Mayank Chaturvedi, Michael Moynihan, Min Ma, Nabila Babar, Natasha Noy, Nathan Byrd, Nick Roy, Nikola Momchev, Nilay Chauhan, Noveen Sachdeva, Oskar
  Bunyan, Pankil Botarda, Paul Caron, Paul~Kishan Rubenstein, Phil Culliton, Philipp Schmid, Pier~Giuseppe Sessa, Pingmei Xu, Piotr Stanczyk, Pouya Tafti, Rakesh Shivanna, Renjie Wu, Renke Pan, Reza Rokni, Rob Willoughby, Rohith Vallu, Ryan Mullins, Sammy Jerome, Sara Smoot, Sertan Girgin, Shariq Iqbal, Shashir Reddy, Shruti Sheth, Siim Põder, Sijal Bhatnagar, Sindhu~Raghuram Panyam, Sivan Eiger, Susan Zhang, Tianqi Liu, Trevor Yacovone, Tyler Liechty, Uday Kalra, Utku Evci, Vedant Misra, Vincent Roseberry, Vlad Feinberg, Vlad Kolesnikov, Woohyun Han, Woosuk Kwon, Xi~Chen, Yinlam Chow, Yuvein Zhu, Zichuan Wei, Zoltan Egyed, Victor Cotruta, Minh Giang, Phoebe Kirk, Anand Rao, Kat Black, Nabila Babar, Jessica Lo, Erica Moreira, Luiz~Gustavo Martins, Omar Sanseviero, Lucas Gonzalez, Zach Gleicher, Tris Warkentin, Vahab Mirrokni, Evan Senter, Eli Collins, Joelle Barral, Zoubin Ghahramani, Raia Hadsell, Yossi Matias, D.~Sculley, Slav Petrov, Noah Fiedel, Noam Shazeer, Oriol Vinyals, Jeff Dean, Demis Hassabis,
  Koray Kavukcuoglu, Clement Farabet, Elena Buchatskaya, Jean-Baptiste Alayrac, Rohan Anil, Dmitry, Lepikhin, Sebastian Borgeaud, Olivier Bachem, Armand Joulin, Alek Andreev, Cassidy Hardin, Robert Dadashi, and Léonard Hussenot.
\newblock Gemma 3 technical report, 2025{\natexlab{c}}.
\newblock URL \url{https://arxiv.org/abs/2503.19786}.

\end{thebibliography}
\end{document}